\documentclass[11pt]{article}
\usepackage{pkg}
\usepackage{wrapfig}

\usepackage{hyperref}
\usepackage{color}
\usepackage[dvipsnames]{xcolor}
\usepackage{latexsym}
\usepackage{dsfont}
\usepackage{appendix}
\usepackage{amssymb}
\usepackage{subcaption}
\usepackage[margin=.7in]{geometry}
\usepackage{float}
\usepackage{algorithm,algorithmic}
\usepackage[sort&compress,numbers]{natbib}
\usepackage{amsmath, amsfonts,amssymb,euscript,array,amsfonts,mathrsfs}
\usepackage{enumerate}
\usepackage[dvipsnames]{xcolor}
\usepackage{amsthm}
\newtheorem{Theorem}{Theorem}[section]

\newtheorem{Proposition}[Theorem]{Proposition}
\newtheorem{Assumption}[Theorem]{Assumption}
\newtheorem{Lemma}[Theorem]{Lemma}

\newtheorem{Example}[Theorem]{Example}
\usepackage[T1]{fontenc}
\usepackage[utf8]{inputenc}
\usepackage{hhline}
\usepackage{graphicx}
\usepackage{verbatim}
\usepackage{eurosym}
\usepackage{dsfont}
\usepackage{mathtools}
\allowdisplaybreaks

\DeclareMathOperator{\Tr}{Tr}
\DeclareMathOperator{\E}{\mathbb{E}}

\DeclareMathOperator{\R}{\mathbb{R}}

\DeclareMathOperator*{\argmin}{arg\,min}

\makeatletter
\@addtoreset{equation}{section}

\newcommand*{\rom}[1]{\expandafter\@slowromancap\romannumeral #1@}
\makeatother
\newcommand{\vertiii}[1]{{\left\vert\kern-0.25ex\left\vert\kern-0.25ex\left\vert #1 
    \right\vert\kern-0.25ex\right\vert\kern-0.25ex\right\vert}}

\usepackage{soul}

\hypersetup{
    breaklinks=true,   %
    colorlinks=true,   %
    pdfusetitle=true,  %
 }

\begin{document}
\title{Policy gradient converges to the globally optimal policy for nearly linear-quadratic regulators}

\author{Yinbin Han, Meisam Razaviyayn, Renyuan Xu 
}
\author{Yinbin Han
\thanks{Daniel J. Epstein Department of Industrial and Systems Engineering, University of Southern California, Los Angeles, CA 90089, USA. \textbf{Email:} \{yinbinha, razaviya,  renyuanx\}@usc.edu}
\and
Meisam Razaviyayn \footnotemark[1]
\and
Renyuan Xu 
  \footnotemark[1]
}
\date{\today}
\maketitle

\begin{abstract}%
    Nonlinear control systems with partial information to the decision maker are prevalent in a variety of applications. As a step toward studying such nonlinear systems, this work explores reinforcement learning methods for finding the optimal policy in the {\it nearly linear-quadratic regulator} systems. In particular, we consider a dynamic system that combines linear and nonlinear components, and is governed by a policy with the same structure. Assuming that the nonlinear component comprises kernels with small Lipschitz coefficients, we characterize the optimization landscape of the cost function. Although the cost function is nonconvex in general, we establish the local strong convexity and smoothness in the vicinity of the global optimizer. Additionally, we propose an initialization mechanism to leverage these properties. Building on the developments, we design a policy gradient algorithm that is guaranteed to converge to the globally optimal policy with a linear rate. 
    \end{abstract}

\section{Introduction}

Reinforcement learning (RL) is one of the three classical machine learning paradigms, alongside supervised  and unsupervised learning. RL is learning via trial and error, through interactions with an environment and possibly with other agents. In RL, an agent takes actions and receives reinforcement signals in terms of numerical rewards encoding the outcome of the chosen action. %
In order to maximize the accumulated reward over time, the agent learns to select actions based
on past experiences (exploitation) and by making new choices (exploration).
In recent years, we have witnessed successful development of RL systems in various applications, including robotics control \cite{levine2016end, lillicrap2015continuous}, AlphaGo and Atari games \cite{mnih2013playing, 44806}, autonomous driving \cite{li2019reinforcement}, and stock trading \cite{deng2016deep}.  
Despite its practical success,  theoretical understanding of RL  is still  limited and at its primitive stage. 

\vspace{0.1cm}

To establish a better foundation of RL, there has been a surge of theoretical works in recent years on the Linear Quadratic Regulator (LQR) problem. This problem is a special class of control problems with linear dynamics and quadratic cost functions \cite{bu2019lqr,fazel2018global,hambly2021policy,malik2019derivative,mohammadi2019global,szpruch2021exploration,szpruch2022optimal,wang2020reinforcement}. 
In the seminal work of~\cite{fazel2018global}, the authors studied an LQR problem with deterministic dynamics over an infinite horizon. They proved that  the simple  policy gradient method  converges to the globally optimal solution with a linear  rate (despite nonconvexity of the objective). Their key idea is to utilize the Riccati equation (an {algebraic-equation} characterization that only works for LQR problems) and show that the cost function enjoys a ``gradient dominance'' property. 
This result has been extended to other settings such as linear dynamics with additive or multiplicative Gaussian noise, finite-time horizon, and modifications of the vanilla policy-gradient method in follow-up works \cite{bu2019lqr,hambly2021policy,malik2019derivative,mohammadi2019global}. Other aspects in the learning of LQR, such as the trade-off between exploration and exploitation,  have also been studied recently \cite{szpruch2021exploration,szpruch2022optimal,wang2020reinforcement}.

\vspace{0.1cm}

Despite the desirable theoretical properties of LQR, this setting is limited in practice due to the {\it nonlinear nature} of many real-world dynamic systems. From a technical perspective, it is unclear how much we can go beyond    the linear setting and still maintain the desirable properties of LQR. 
Our preliminary attempt in this direction is to study learning-based methods for linear systems perturbed by  some nonlinear kernel functions of small magnitude. Such systems are denoted as {\it nearly linear-quadratic systems} throughout the paper. The motivations for considering this setting are twofold: (1) Many nonlinear systems can be approximated by an LQR with a small nonlinear correction term via local expansions. %
(2) Analyzing the nearly linear-quadratic system provides a natural perspective to evaluate the stability of LQR systems. This could further address the question of how {\it robust} LQR framework is with respect to {\it  model mis-specifications} and, more broadly, how {\it reliable} the nearly linear-quadratic systems (including LQR problems as a special case) are.

\paragraph{Our Contributions.} 
We first study the optimization landscape of a special class of nonlinear control systems and  propose a policy-gradient-based algorithm to find the optimal policy. Specifically, we consider the nonlinear dynamics consisting of both linear and nonlinear parts. The nonlinear part is modeled by a linear combination of differentiable kernels with small Lipschitz coefficients. The kernel basis is known to the agent but the coefficients are not available to the agent. Additionally, we allow agents to apply nonlinear control policies in the form of the sum of a linear part and a nonlinear part where the nonlinear part lies in the same span of the kernel basis for the dynamics.
Our analysis shows that the cost function is {\it locally strongly convex} in a small neighborhood containing both a carefully chosen initial policy and the {\it globally optimal solution}. {Particularly, a least-squares regression method is proposed to obtain this desirable initial policy when model parameters are unknown.} With these results in hand, a zeroth-order policy-gradient method is proposed with guaranteed convergence to the globally optimal solution
with a linear rate.

\paragraph{Related Work.}
Our work is related to  three categories of prior work:

First, our framework and analysis tools are closely related to learning-based methods for the LQR problem and its variants. This includes policy gradient methods in \cite{bu2019lqr,fazel2018global, gravell2019learning,hambly2021policy,jansch2020convergence,malik2019derivative,mohammadi2019global, Zhang2021PolicyOF} and actor-critic methods in \cite{jin2020analysis,yang2019provably,zhang2021provably}. All these works focus on linear systems and linear policies, and show the property of global convergence.  In contrast, we step into the nonlinear world by examining the policy gradient method for nonlinear systems that are ``near-linear''  in a certain sense.

Second, our work lies within  the literature on nonlinear control systems. The work \cite{sastry2013nonlinear} provides a comprehensive review on this topic and \cite{Umlauft2017FeedbackLU,Umlauft2020FeedbackLB,westenbroek2020feedback,zeng2019adaptive} offer recent advances such as feedback linearization and neural network approximations.  \textit{Although largely inspired by \cite{Qu2020CombiningMA}, our work differs from it.} In \cite{Qu2020CombiningMA}, the dynamics consist of a linear component and a small, unknown nonlinear component. However, the authors only consider linear policies, whereas our framework allows for exploration of nonlinear control policies, which is more general and potentially leads to a better solution. Furthermore, in \cite{Qu2020CombiningMA}, the model parameters for the linear component are assumed to be known, which precludes the development of RL algorithms in a more general setting where the agent does not fully know the environment. %
In contrast, we model the linear component as unknown, and represent the nonlinear component as a linear combination of known kernel basis with unknown coefficients. We also propose a least-squares regression method to recover the system dynamics with no error (under high probability). This approach allows us to develop sample-based analysis in an unknown environment setting, which is not possible with the above-mentioned assumption in \cite{Qu2020CombiningMA}. To the best of our knowledge, this is the first theoretical study that demonstrates the global convergence of an RL method for a system with both  nonlinear dynamics (with continuous state and action spaces) and nonlinear control policies in the learning context. 

Finally, our work is related to the line of work on policy gradient methods. In addition to LQR, policy gradient methods have been applied to learn Markov decision processes (MDPs) with finite state and action spaces.
The recent developments that provide global convergence guarantees for policy gradient methods and their variants can be found in \cite{bhandari2021linear, Agarwal2021OnTT,Cen2021FastGC,Ding2020NaturalPG,Fu2021SingleTimescaleAP,Liu2019NeuralPR,Liu2020AnIA,Wang2020NeuralPG,xiao2022convergence,Xu2021DoublyRO,zhang2020variational,zhang2021provably,fatkhullin2023stochastic}.

\paragraph{Notation.} In this work, $\norm{{}\cdot{}}$ is always the 2-norm of vectors and matrices, and $\norm{{}\cdot{}}_F$ is the Frobenius norm of matrices. Additionally, $y_1 \lesssim y_2, y_1 \asymp y_2 $ and $ y_1 \gtrsim y_2$ mean $y_1 \leq cy_2, y_1 = cy_2$ and $y_1 \geq cy_2$ for some absolute constant $c > 0$, respectively. 
 
\section{Problem Setup} \label{sec: setup}
We consider a dynamical system with the state variable $x_t\in \mathbb{R}^n$ and the control variable $u_t\in \mathbb{R}^p$:
\begin{eqnarray}\label{eq:dynamics}
x_{t+1} = A x_t  +C\phi(x_t) +B u_t,
\end{eqnarray}
where $A\in \mathbb{R}^{n\times n}$, $C\in \mathbb{R}^{n\times d}$, $B\in \mathbb{R}^{n\times p}$, and a kernel basis $\phi(x) = (\phi_1(x),\cdots,\phi_d(x))^{\top}$ with  $\phi_i(x):\mathbb{R}^n\rightarrow \mathbb{R}$ $(i=1,2,\cdots,d)$. Here, $\phi(x)$ satisfies certain Lipschitz continuity conditions (specified later in Assumption \ref{ass:feature}). The system in \eqref{eq:dynamics} is the summation of a linear part and a ``small'' nonlinear part. The nonlinear part is a (finite) linear combination of kernel basis. Essentially, the dynamics in \eqref{eq:dynamics} can be viewed as a nonlinear system that closely approximates a linear model. Additionally, \eqref{eq:dynamics} is more general than the linear systems considered in \cite{fazel2018global,hambly2021policy,malik2019derivative}, and therefore better represents the behaviors of a broader class of dynamic systems in practice. Despite its nonlinearity, we will show that \eqref{eq:dynamics} still enjoys desirable theoretical properties that are not present in fully nonlinear systems.

\vspace{0.1cm}

The admissible control set contains a class of  stationary Markovian policies that are linear combinations of the current state and kernels of the current state, i.e., 
\begin{align}\label{eq: policy}
    u_t = -K_1 x_t - K_2 \phi(x_t),
\end{align}
with $K_1\in\mathbb{R}^{p\times n}$ and $K_2\in \mathbb{R}^{p\times d}$. The form of the Markovian policies in \eqref{eq: policy} is motivated 
by the additive structure in the system dynamics \eqref{eq:dynamics}, with the same kernel $\phi$ involved. Additionally, we consider the following  domain $ \Omega $ (i.e., the admissible control set) for $ K= (K_1, K_2) $:
\begin{eqnarray}
\Omega = \curly{K:\,\, \norm{(A - BK_{1})^{t}} \leq c_{1}\rho_{1}^{t}, \,{\forall t \geq 1},\, \norm{C - BK_{2}} \leq c_{2}},\label{eq:omega}
\end{eqnarray}
for some $ c_{1} > 1, \rho_{1} \in (0, 1) $, and $ c_{2} > 1  $ (to be specified later). In general, characterizing the stabilizing region of a nonlinear system is challenging. Thus, we mirror the notions used in \cite{Qu2020CombiningMA} to consider the region $\Omega$ such that the control policy enjoys asymptotic stability. We will show that if the nonlinear part $\phi$ is ``small'', the controller in $ \Omega $ is asymptotically stable, i.e., $ \norm{x_{t}} \to 0 $ as $ t \to \infty $.
Furthermore, we consider the  quadratic cost function $\mathcal{C}: \R^{p \times (n+d)}  \to \R $ with $K = (K_1,K_2)$:
\begin{eqnarray}\label{eq:cost}
{ \mcal{C}}(K) = \mathbb{E}_{x_0 \sim \mcal{D}}\left[ \sum_{t=0}^{\infty}x_t^{\top} Q x_t + u_t^{\top}R u_t\right],
\end{eqnarray}
where the expectation is taken with respect to $x_0$ (drawn from {\it an unknown} distribution $\mathcal{D}$). The state trajectory $\curly{x_t}_{t=0}^{\infty}$ is generated via the control policy $K$ defined in \eqref{eq: policy}.  Here, $Q$ and $R$ are symmetric positive-definite matrices. Thus, the running cost $c_t = x_t^{\top} Q x_t + u_t^{\top}R u_t $ is quadratic in both the state and control variables. The agent and the environment interact in the following way:~At the beginning of each time step $t = 0, 1, 2,\dots$, the agent receives the state $x_t$ that encodes the full information of the environment and chooses a control $u_t$. At the end of this time step, the agent receives an instantaneous cost $c_t$ and a new state $x_{t+1}$ as a consequence of the control input. {\color{black}The agent has the option to restart the system at any time step. This can be achieved by, for example, accessing to a generative model that can generate sample trajectories.} The objective is to find the optimal  policy $K$ that minimizes the cost function $\mcal{C}(K)$ when the model parameters ($A$, $B$ and $C$) are unknown.

\begin{algorithm}[t!]
    \caption{Policy Gradient Estimation}\label{alg:policy gradient estimation}
    \begin{algorithmic}[1] 
    \STATE {\bf {Input:}}~Policy $K = (K_1, K_2)$, number of trajectories $J$, smoothing parameter $r$, and episode length $T$. 
    \FOR{$j = 1, 2, \dots, J$}
    \STATE Sample a policy $\widehat{K}^j = K + U^j$, where $U^j$ is drawn uniformly at random over matrices of size $ p\times (n+d) $ whose Frobenius norm is $r$.\label{line: perturb}
    \STATE Sample $x_0^j \sim \mathcal{D}$.
    \FOR{$t = 0, 1, \dots, T$}
    \STATE Set $u_t = -\widehat{K}_1^j x_t - \widehat{K}_2^j \phi(x_t)$.
    \STATE Receive the cost $c_t$ and the next state $x_{t+1}$ from the system.
    \ENDFOR
    \STATE Calculate the estimated cost $\widehat{\mcal{C}}_j = \sum_{t = 0}^{T}c_t $.
    \ENDFOR
    \RETURN  $\widehat{\nabla \mcal{C}(K)}  = \frac{1}{J}\sum_{j = 0}^{J}\frac{\widehat{D}}{r^2}\widehat{\mcal{C}}_j U^j $, where $\widehat{D} =p(n+d) $. \label{line: gradient estimation}
    \end{algorithmic}
  \end{algorithm}

\section{Proposed Algorithm}\label{sec: algorithm}
{\color{black}The main difficulties of the control problem \eqref{eq:dynamics}--\eqref{eq:cost} are the unknown dynamics \eqref{eq:dynamics} and the nonconvexity of the objective \eqref{eq:cost}, especially in high-dimensional scenarios.}
Given that any admissible control policy defined in \eqref{eq: policy} can be fully characterized by a policy parameter $ K  $ in $\Omega$, we leverage policy gradient methods in \cite{sutton1999policy} to find the optimal policy $K^{*}$. When all the model parameters are known to the decision maker (referred to as the {\it model-based} case), policy gradient methods iteratively update the (current) policy $K$ by utilizing the gradient information $\nabla \mathcal{C}(K)$. When the model parameters are unknown (referred to as the {\it model-free} case), the gradient term $\nabla\mathcal{C}(K)$ can be replaced by an estimate $\widehat{\nabla\mathcal{C}(K)}$ to perform an approximate gradient descent step. {\color{black}In both cases, the initial distribution $\mathcal{D}$ is unknown while samples from $\mcal{D}$ are available to the agent. }

We now present our policy gradient algorithm to learn the optimal control for problem \eqref{eq:dynamics}-\eqref{eq:cost}. 
\paragraph{Zeroth-order Optimization Method.} Using a zeroth-order optimization framework,
Algorithm~\ref{alg:policy gradient estimation} provides  an estimate $\widehat{\nabla \mcal{C}(K)}$ for the policy gradient $\nabla \mcal{C}(K)$. This estimate will later be used in the following policy gradient update rule:
\begin{align}\label{eq:algo}
    K^{(m+1)} = K^{(m)} - \eta \widehat{\nabla \mcal{C}(K^{(m)})}, \quad K^{(0)} = K^{\rm lin},
\end{align}
where   $K^{\rm lin} = (K_{1}^{\rm lin}, K_{2}^{\rm lin})$ is the initial policy, which will be chosen carefully to obtain an efficient convergence to the global optimum, see the next part, {\it Efficient
Initialization}.

{Our zeroth-order estimate (line \ref{line: gradient estimation} in Algorithm \ref{alg:policy gradient estimation}) approximates the gradient of the function $\mathcal{C}$ by using the function values.  Note that $ \E\bracket{U} = 0 $ as $U$ is uniformly distributed over a sphere of a ball with radius $r$ (Frobenius norm). The first-order Taylor expansion of $ \mcal{C} $ leads to
\begin{eqnarray}
 \label{eq:gradient_estimate}
\E\bracket{\mathcal{C}(K+U)U} \approx \E\bracket{ \parenthesis{\nabla \mathcal{C}(K)^{\top} U}U} = \nabla\mathcal{C}(K)r^2/\widehat{D},
\end{eqnarray}
where $K \in \R^{\widehat{D}}$ with $\widehat{D} = p(n+d)$ and $U$ is uniformly distributed over a sphere of a ball with radius $r$ (Frobenius norm). Hence, to compute the estimate $\widehat{\nabla \mathcal{C}(K)}$ and to approximate the expectation in \eqref{eq:gradient_estimate} under an input policy $K$, Algorithm \ref{alg:policy gradient estimation} collects $J$ sample trajectories. Each trajectory follows a perturbed policy $\widehat{K} = K + U$ (line \ref{line: perturb}). Finally, the gradient estimate can be obtained by averaging over the sample trajectories $\frac{\widehat{D}}{r^2}\mcal{C}(K+U) U$. Lemma \ref{lemma: estimation C(K)} in Section \ref{sec: proofs} will show that if $J \gtrsim \frac{\widehat{D}^2}{e_{\rm grad}^2}\log \frac{4\widehat{D}}{\nu}$, it holds with probability at least $1 - \nu$ that 
\begin{align*}
    \norm{\widehat{\nabla \mcal{C}}(K) - \nabla \mcal{C}(K)}_{F} \leq e_{\text{grad}}.
\end{align*}
}

\paragraph{Efficient Initialization.} As recognized in \cite{Qu2020CombiningMA}, the cost function $\mathcal{C}(K)$ may have many spurious local minima due to its nonconvex nature. Consequently, a policy gradient method with an arbitrary initialization may fail to converge to the global minimizer. Interestingly, we present a design for an initialization, denoted by $K^{\rm lin} = (K_{1}^{\rm lin}, K_{2}^{\rm lin})$, which ensures it lies within the  {\it basin of attraction} of the globally optimal solution. 
Specifically, we choose $ K_{1}^{\rm lin} $  to be the optimal control policy of the following linear-quadratic problem:
\begin{align}\label{eq: lqr}
\min_{K_{1}} & \quad \E_{x_{0} \sim \mathcal{D}}\bracket{\sum_{t = 0}^{\infty}x_{t}^{\top}Qx_{t} + u_{t}^{\top}Ru_{t}}, \nonumber\\ 
\text{subject to} & \quad x_{t+1} = Ax_{t} + Bu_{t}, u_{t} = -K_{1}x_{t}.
\end{align}
{The LQR problem defined in \eqref{eq: lqr} is a special instance of  the problem described in \eqref{eq:dynamics}--\eqref{eq:cost}, obtained by setting $C = 0$ and $K_2 = 0$. The intuition behind this LQR problem is as follows. When the nonlinear term $\phi(x)$ is ``small'' (see Assumption \ref{ass:feature} for a mathematical description), the optimal policy $K_1^{\rm lin}$ for the LQR problem is anticipated to be close to the optimal controller $K_1^*$ for the nonlinear problem \eqref{eq:dynamics}, leading to a potentially useful initialization.} Coming back to the LQR problem, it is well-established in the control literature \citep{anderson2007optimal,bertsekas2012dynamic}    that the policy $ K_{1}^{\rm lin} $ is unique when the pair $ (A, B) $ is controllable. To define this unique policy, let the positive definite matrix $P$ be the unique solution to the Algebraic Riccati Equation (ARE),
\begin{align}\label{eq:ARE}
    P = A^{\top}PA + Q - A^{\top}PB(R + B^{\top}PB)^{-1}B^{\top}PA.
\end{align}

{\color{black}\textbf{Model-based setting.} When all the model parameters, $Q, R, A, B, $ and $C$, are {\it known}}, the optimal controller for the problem \eqref{eq: lqr} is given as:
\begin{eqnarray}\label{eq:linear_initialization}
    K_1^{\rm lin} = (R + B^{\top}PB)^{-1}B^{\top}PA.
\end{eqnarray}
Further, we set 
\begin{align}\label{eq:nonlinear_initialization}
    K_2^{\rm lin} = (R + B^{\top}PB)^{-1}B^{\top}PC.
\end{align}
{\color{black}We will show that the initial policy $K^{\rm lin}$ defined above is close to the optimal solution $K^*$ when the nonlinear term $\phi(x)$ is ``small''.}

\begin{algorithm}[bt!]
    \caption{Estimation of the System Dynamics' Parameters with Independent Data}\label{alg:init estimation}
    \begin{algorithmic}[1] 
    \STATE {\bf {Input:}}~Number of samples $ N $.
    \FOR{$i = 1, 2, \dots, N$} \label{line: sampling starts}
    \STATE Sample $ x_{0}^{(i)} \overset{\text{i.i.d.}}{\sim} \mcal{D} $, $ u_{0}^{(i)} \overset{\text{i.i.d.}}{\sim} \begin{cases}
        \mcal{N}(0, I_{p}), \ & \text{w.p.} \ 1/2 \\ 0, \ &\text{w.p.} \ 1/2 \label{line: dist u_0}
    \end{cases}$ and observe $ x_{1}^{(i)} = Ax_{0}^{(i)} + Bu_{0}^{(i)} + C\phi(x_{0}^{(i)}) $. \label{line: iteration i}
    \ENDFOR \label{line: sampling ends}
    \RETURN  $(\widehat{A}, \widehat{B}, \widehat{C})$ by solving \eqref{eqn: least squares}.
    \end{algorithmic}
\end{algorithm}

{\color{black}\textbf{Model-free setting.} When the model parameters $ A, B $ and $ C $ are unknown, one key challenge lies in finding an appropriate initialization $ K^{\rm lin} $. We address this issue by utilizing the least-squares estimators of the parameters $ A, B $ and $ C $.}
This estimation process is described in Algorithm \ref{alg:init estimation}. In iteration $i$  of Algorithm \ref{alg:init estimation} (see line \ref{line: iteration i}), the system starts at a state $ x_{0}^{(i)} \sim \mathcal{D} $, and the dynamics evolve to the next state $ x_{1}^{(i)} $ under the control $u_0^{(i)}$. {\color{black}Here, we randomly draw the control $ u_{0}^{(i)} $ from a certain distribution (line \ref{line: dist u_0}) to guarantee that the parameters $ A $, $ B $ and $ C $ can be recovered with high probability.}  This operation is repeated for $N$ times, resulting in a dataset of the form $ \curly{\parenthesis{x_{0}^{(i)}, u_{0}^{(i)}, x_{1}^{(i)}}: 1 \leq i \leq N} $. Based on this dataset,  we estimate the system parameters through the following least-squares minimization procedure:
\begin{align}\label{eqn: least squares}
(\widehat{A}, \widehat{B}, \widehat{C}) = \argmin_{\tilde{A}, \tilde{B}, \tilde{C}}\frac{1}{2}\sum_{i = 1}^{N}\norm{\tilde{A}x_{0}^{(i)}+\tilde{B}u_{0}^{(i)} + \tilde{C}\phi(x_{0}^{(i)}) - x_{1}^{(i)}}^{2}.
\end{align}
 When the cost parameters  $Q$ and  $R$ are known \cite{dean2020sample}, we can use the estimated values $ \widehat{A}, \widehat{B} $ and $ \widehat{C} $ to initialize $ K_{1}^{\rm lin} $ and $ K_{2}^{\rm lin} $ in \eqref{eq:linear_initialization} and \eqref{eq:nonlinear_initialization}, repectively. Precisely, this is achieved by setting:
\begin{align*}
    \widehat{K}_1^{\rm lin} & = (R + \widehat{B}^{\top}\widehat{P}\widehat{B})^{-1}\widehat{B}^{\top}\widehat{P}\widehat{A} \quad \text{and} \quad
    \widehat{K}_2^{\rm lin}  = (R + \widehat{B}^{\top}\widehat{P}\widehat{B})^{-1}\widehat{B}^{\top}\widehat{P}\widehat{C} ,
\end{align*}
where $\widehat{P}$ is the solution to the ARE, $\widehat{P} = \widehat{A}^{\top}\widehat{P}\widehat{A} + Q - \widehat{A}^{\top}\widehat{P}\widehat{B}(R + \widehat{B}^{\top}\widehat{P}\widehat{B})^{-1}\widehat{B}^{\top}\widehat{P}\widehat{A}.$

In the next section, we will show that: 
\begin{itemize}
    \item With high probability, the least-squares regression \eqref{eqn: least squares} fully recovers the exact parameters, $A, B $ and $ C$, with no estimation error, i.e., $ (A, B, C) = (\widehat{A},\widehat{B}, \widehat{C}) $.
    \item The optimal solution to the nonlinear control problem
    \eqref{eq:dynamics}--\eqref{eq:cost} lies within a small neighborhood of the initial policy $ K^{\rm lin}$. 
    \item The cost function \eqref{eq:cost} is strongly convex and smooth in a neighborhood containing both the initial policy $ K^{\rm lin}$ and the globally optimal policy $ K^{*} $.
\end{itemize}
{The first result implies that the least-squares regression provides  the exact initial policy $\widehat{K}_1^{\rm lin} = {K}_1^{\rm lin}$ and $\widehat{K}_2^{\rm lin} = {K}_2^{\rm lin}$ when the model parameters are unknown.}
The last two facts
will establish the convergence of Algorithm \ref{alg:policy gradient estimation} to the global optimum.

\section{Main Results}\label{sec: main results}

In this section, we present our main theoretical results. We first prove the recovery property of Algorithm \ref{alg:init estimation} introduced in Section \ref{sec: algorithm}. Next, we proceed to characterize the optimization landscape of the cost function. In particular, we show the local strong convexity of the cost function around its global minimum. Furthermore, we prove that the globally optimal solution is close to our carefully chosen initialization. Finally, we establish the  convergence  of Algorithm~\ref{alg:policy gradient estimation}. Before stating our main results, we make the following assumptions for problem \eqref{eq:dynamics}--\eqref{eq:cost}. %

\begin{Assumption}\label{ass:feature}
    We assume that $ \phi: \R^{n} \to \R^{d} $ is differentiable, $ \phi(0) = 0 $ and $ \norm{\phi(x) - \phi(x')} \leq \ell \norm{x - x'} $ for any $ x, x' \in \R^{n} $ with $\ell>0$. Moreover, we assume that $\norm{\nabla \phi(x) - \nabla \phi(x')} \leq \ell'\norm{x - x'} $ for any $ x, x' \in \R^{n} $ with $ \ell' > 0 $. 
\end{Assumption}

Assumption \ref{ass:feature} states that the kernel function $\phi$ is $\ell$-Lipschitz and $\ell'$-gradient-Lipschitz. The examples of $ \phi $ are not restrictive. Let us provide two kernel basis examples that satisfy Assumption \ref{ass:feature}. 
The first example is {$\phi_i(x) = \alpha_{i}\sin x$ with $\alpha_i \geq 0$}, for which Assumption \ref{ass:feature} holds automatically. {\color{black}The second example is introduced below.}

\begin{Example}\label{prop: kernel}
Let $ N_{i} \geq 0, i = 1, \dots, d $, be non-negative integers. For fixed $ w_{j}^{i} \in \curly{-1, 1}^{n}, j = 1, \dots, N_{i} $, define $ \phi_{i}(x) = \alpha_{i}\prod_{j = 1}^{N_{i}}x^{\top}w_{j}^{i} $ with $ \alpha_{i} \geq 0 $. Then if $ \norm{x} \leq M_{0} $, the kernel basis $ \phi(x) = (\phi_{1}(x), \dots, \phi_{d}(x))^{\top} $ satisfies Assumption \ref{ass:feature} with $ \ell = \parenthesis{\sum_{i = 1}^{d}n^{2}\alpha_{i}^{2}N_{i}^{2}M_{0}^{2(N_{i}-1)}}^{1/2} $ and $ \ell' = \parenthesis{\sum_{i = 1}^{d}n^{3}\alpha_{i}^{2}N_{i}^{2}(N_{i} - 1)^{2}M_{0}^{2(N_{i}-2)}}^{1/2} $.
\end{Example}
Note that the class of kernel basis  in Example \ref{prop: kernel} {is used in  kernel-based methods for supervised learning, unsupervised learning, nonparametric regression, and offline RL \cite{kar2012random, steinwart2008support, farahmand2016regularized,hardle1990applied, scholkopf1997kernel}.}

\begin{proof}[Proof of Example \ref{prop: kernel}]
   It is straightforward to check that $ \phi(0) = 0 $. To prove the Lipschitz and gradient-Lipschitz properties, it suffices to show that the properties hold for each $ \phi_{i}(x) $. Indeed, if $ \phi_{i}(x) $ is $ \ell_{i} $-Lipschitz and $ \ell_{i}'$-gradient-Lipschitz, then we have
   \begin{align*}
   \norm{\phi(x) - \phi(y)}^{2} = \sum_{i = 1}^{d}\norm{\phi_{i}(x) - \phi_{i}(y)}^{2} \leq \parenthesis{\sum_{i = 1}^{d}\ell_{i}^{2}}\norm{x - y}^{2}.
   \end{align*}
   It follows that $ \phi(x) $ is $ \parenthesis{\sum_{i = 1}^{d}\ell_{i}^{2}}^{1/2} $-Lipschitz. To see $ \phi $ is also gradient-Lipschitz, note that
   \begin{align*}
   \norm{\nabla\phi(x) - \nabla\phi(y)}^{2}_{2} & \leq \norm{\nabla\phi(x) - \nabla\phi(y)}_{F}^{2} = \sum_{i = 1}^{d}\norm{\nabla\phi_{i}(x) - \nabla\phi_{i}(y)}^{2} \leq \parenthesis{\sum_{i = 1}^{d}(\ell_{i}')^{2}}\norm{x - y}^{2}.
   \end{align*}
   It follows that $ \nabla \phi(x) $ is $ \parenthesis{\sum_{i = 1}^{d}(\ell_{i}')^{2}}^{1/2} $-Lipschitz. 

   The rest of the proof is to show the Lipschitz continuity of $ \phi_{i} $ and $ \nabla \phi_{i} $. We first compute the Lipschitz constant of $ \phi_{i} $. Since $ \nabla \phi_{i}(x) = \alpha_{i}\sum_{j = 1}^{N_{i}}(\prod_{k \neq j}x^{\top}w_{k}^{i})w_{j}^{i} $, we have
   \begin{align*}
   \norm{\nabla \phi_{i}(x)} \leq \alpha_{i}\sum_{j = 1}^{N_{i}}\norm{w_{j}^{i}}\prod_{k \neq j}\norm{x}\norm{w_{k}^{i}} \leq n\alpha_{i}N_{i}M_{0}^{N_{i}-1}. 
   \end{align*}
   Also, we have $ \nabla^{2}\phi_{i}(x) = \alpha_{i}\sum_{j = 1}^{N_{i}}\nabla_{x}\parenthesis{\prod_{k \neq j}x^{\top}w_{k}^{i}}w_{j}^{i} = \alpha_{i}\sum_{j = 1}^{N_{i}}\sum_{k \neq j}\prod_{l \neq k, j}(x^{\top}w_{l}^{i}) w_{k}^{i}(w_{j}^{i})^{\top} $. Thus,
   \begin{align*}
   \norm{\nabla^{2}\phi_{i}(x)} \leq \alpha_{i}\sum_{j = 1}^{N_{i}}\sum_{k \neq j}\parenthesis{\prod_{l \neq k, j}\norm{x}\norm{w_{l}^{i}}}\norm{w_{k}^{i}}\norm{w_{j}^{i}} \leq n^{3/2}\alpha_{i}N_{i}(N_{i} - 1)M_{0}^{N_{i}-2}.
   \end{align*}
   Finally, we conclude the proof with $ \ell_{i} = n\alpha_{i}N_{i}M_{0}^{N_{i}-1} $ and $ \ell_{i}' =  n^{3/2}\alpha_{i}N_{i}(N_{i} - 1)M_{0}^{N_{i}-2} $.
\end{proof}

Next, we make the following standard assumption on the  matrices $Q$ and $R$ (see also~\cite{mania2019certainty}).

{\color{black}
\begin{Assumption}\label{ass: cost function}
    We assume that $ Q $ and $ R $ are positive definite matrices with $ \norm{Q}, \norm{R} \leq 1 $.
\end{Assumption}}

The first part of the assumption guarantees that the cost function has quadratic growth and therefore renders the problem well-defined \cite{Qu2020CombiningMA}. {\color{black}For convenience, we denote $\sigma \coloneqq \lambda_{\min} (R + B^{\top}QB) $, the smallest eigenvalue of the matrix.} The upper bound one  (on the norms of  $Q$ and $R$) in Assumption \ref{ass: cost function} is for ease of presentation and can be generalized to any arbitrary value by rescaling the cost function. Our subsequent assumption concerns the initial distribution of the state dynamics. 

\begin{Assumption}\label{ass:initial_distr}
  We assume that the initial distribution $ \mcal{D} $ is supported in a region with radius $ D_{0} $, i.e., $ \norm{x} \leq D_{0} $  for $ x \sim \mcal{D} $ with probability one. Also, we assume $ \E \left[\psi(x_{0})\psi(x_{0})^{\top}\right] \succeq \sigma_{x}I $ for some $ \sigma_{x} > 0 $, {where $\psi(x_0) = (x_0^{\top}, \phi(x_0)^{\top})^{\top}$}.
\end{Assumption}

 Assumption \ref{ass:initial_distr} requires the state initial distribution to be bounded. This assumption simplifies the proof in the subsequent sections, and can be relaxed by assuming an upper bound on the second and the third moments of the initial state \cite{Qu2020CombiningMA}. Also,   the covariance matrix $\E \bracket{\psi(x_{0})\psi(x_{0})^{\top}}$  is assumed to be bounded below by a positive constant matrix $\sigma_{x}I$. This   ``diverse covariate'' assumption ensures sufficient exploration (in all directions of the state space) even with a greedy algorithm. Finally, we lay out another regularity condition on the coefficient matrices $(A, B)$ and the initial policy $K^{\rm lin}$.
\begin{Assumption}\label{ass: initialization}
    The pair $ (A, B) $ is controllable. 
\end{Assumption}

The controllablity assumption on the pair $(A, B)$ is  standard in the literature \cite{bu2019lqr}. Assumption \ref{ass: initialization} implies that  the initial controller $ K^{\rm lin} = (K_{1}^{\rm lin}, K_{2}^{\rm lin}) $ defined in Section \ref{sec: algorithm} enjoys a stability property; that is, $ \norm{(A - BK_{1}^{\rm lin})^{t}} \leq c_{1}^{\rm lin}(\rho_{1}^{\rm lin})^{t} $ for all $t \geq 1$, and  $  \norm{C - BK_{2}^{\rm lin}} \leq c_{2}^{\rm lin} $ for some $ \rho_{1}^{\rm lin} \in (0, 1) $ and $ c^{\rm lin}_{1},  c^{\rm lin}_{2} > 0 $.

\vspace{0.1cm}

\subsection{Least-Squares Regression for Parameters Recovery}\label{sec:lsr}
In this subsection, we show that the least-squares regression in Algorithm \ref{alg:init estimation} exactly recovers all the parameters, $A, B$ and $C$, in the system dynamics. For ease of exposition, define $ \varphi(x, u) = [x^{\top}, u^{\top}, \phi(x)^{\top}]^{\top} \in \R^{n + p + d} $ and $ \Theta = [A, B, C]^{\top} \in \R^{(n + p + d) \times n} $. Then the system dynamics at time $ t = 1 $ can be written as
\begin{align*}
x_{1}^{\top} = \varphi(x_{0}, u_{0})^{\top}\Theta.
\end{align*}

In lines \ref{line: sampling starts}--\ref{line: sampling ends} of Algorithm \ref{alg:init estimation}, we collect $N$ samples $ \curly{\parenthesis{x_{0}^{(i)}, u_{0}^{(i)}, x_{1}^{(i)}}: 1 \leq i \leq N} $. By denoting
\begin{align*}
    X_{N}^{\top} = 
    \begin{bmatrix}
        x_{1}^{(1)}, & \cdots, & x_{1}^{(N)}
    \end{bmatrix} \quad \text{and} \quad
    \Phi_{N}^{\top} = 
    \begin{bmatrix}
        \varphi\parenthesis{x_{0}^{(1)}, u_{0}^{(1)}}, & \cdots, &  \varphi\parenthesis{x_{0}^{(N)}, u_{0}^{(N)}}
    \end{bmatrix},
\end{align*}
we have $$ X_{N} = \Phi_{N}\Theta. $$  If the matrix $ \Phi_N^{\top}\Phi_N $ is invertible, the least-squares estimator  can be written as
\begin{align}\label{eq:Theta-hat}
\widehat{\Theta} & = \parenthesis{\Phi_N^{\top}\Phi_N}^{-1}\Phi_N^{\top}X_{N}.
\end{align}
Combining the above two results, we conclude that $\widehat{\Theta} = \Theta$ {if $ \Phi_N^{\top}\Phi_N $ is invertible.}
Proposition \ref{prop: inv of design matrix} guarantees that $ \Phi_{N}^{\top}\Phi_{N} $ is invertible with high probability. {\color{black}Furthermore, the number of samples required by Algorithm \ref{alg:init estimation} is much less than solving independent linear equations.} Analog to the analysis in \cite{dean2020sample}, we utilize the structure of the system dynamics and leverage recent results in the non-asymptotic analysis of random matrices to establish Proposition \ref{prop: inv of design matrix}.

\begin{Proposition}\label{prop: inv of design matrix}
    {Assume Assumptions \ref{ass:feature} and \ref{ass:initial_distr} hold.} For any $ \nu \in (0, 1) $, $ \Phi_{N}^{\top}\Phi_{N} $ is invertible for all $ N \gtrsim n+p+d $ with probability at least $ 1 - \nu $.  In consequence, $\widehat{\Theta} = \Theta$ {with probability at least $ 1 - \nu $.}
\end{Proposition}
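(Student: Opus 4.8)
The plan is to show that the random matrix $\Phi_N^\top\Phi_N = \sum_{i=1}^N \varphi(x_0^{(i)}, u_0^{(i)})\varphi(x_0^{(i)}, u_0^{(i)})^\top$ is, with high probability, bounded below by a positive multiple of the identity, which in particular forces invertibility. Since the samples are i.i.d., the natural route is a matrix concentration argument: first identify the population second-moment matrix $\Sigma := \E\big[\varphi(x_0,u_0)\varphi(x_0,u_0)^\top\big]$ and show it is strictly positive definite, then invoke a non-asymptotic matrix Chernoff/Bernstein bound to argue the empirical average is close to $\Sigma$ once $N \gtrsim n+p+d$.

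First I would compute $\Sigma$ in block form with respect to the decomposition $\varphi(x,u) = (x^\top, u^\top, \phi(x)^\top)^\top$. Using that $u_0$ is drawn independently of $x_0$ from the mixture $\tfrac12\mathcal N(0,I_p) + \tfrac12\delta_0$, the cross terms involving $u_0$ vanish in expectation ($\E[u_0]=0$ and independence), so $\Sigma$ is block-diagonal between the $u$-block and the $\psi(x_0) = (x_0^\top,\phi(x_0)^\top)^\top$ block: the $u$-block is $\E[u_0u_0^\top] = \tfrac12 I_p \succ 0$, and the remaining block is exactly $\E[\psi(x_0)\psi(x_0)^\top] \succeq \sigma_x I$ by Assumption \ref{ass:initial_distr}. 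Hence $\Sigma \succeq \min\{\tfrac12,\sigma_x\} I =: \underline\sigma I \succ 0$. I would also record an almost-sure bound $\|\varphi(x_0^{(i)},u_0^{(i)})\|^2 \le L^2$: here $\|x_0\|\le D_0$ by Assumption \ref{ass:initial_distr}, $\|\phi(x_0)\| = \|\phi(x_0)-\phi(0)\| \le \ell D_0$ by Assumption \ref{ass:feature} (using $\phi(0)=0$), and $\|u_0\|$ is sub-Gaussian; truncating on the high-probability event that $\max_i\|u_0^{(i)}\| \lesssim \sqrt{p\log(N/\nu)}$ gives a uniform bound $L^2 \lesssim D_0^2 + \ell^2 D_0^2 + p\log(N/\nu)$.

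Then I would apply the matrix Chernoff inequality for sums of i.i.d.\ bounded PSD matrices (Tropp) to $\frac1N\sum_i \varphi^{(i)}{\varphi^{(i)}}^\top$: on the truncation event, with $\mu_{\min} = \lambda_{\min}(\Sigma) \ge \underline\sigma$ and each summand $\preceq L^2 I$, the bound gives $\lambda_{\min}\big(\tfrac1N\Phi_N^\top\Phi_N\big) \ge \underline\sigma/2$ with probability at least $1 - (n+p+d)\exp(-cN\underline\sigma/L^2)$. Choosing $N \gtrsim \frac{L^2}{\underline\sigma}\log\frac{n+p+d}{\nu} \asymp (n+p+d)$ up to logarithmic and problem-dependent constants, and combining with the truncation event via a union bound, yields $\Phi_N^\top\Phi_N \succeq \tfrac{N\underline\sigma}{2} I \succ 0$ with probability at least $1-\nu$; in particular it is invertible. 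The conclusion $\widehat\Theta = \Theta$ is then immediate from $X_N = \Phi_N\Theta$ and formula \eqref{eq:Theta-hat}, since $\widehat\Theta = (\Phi_N^\top\Phi_N)^{-1}\Phi_N^\top\Phi_N\Theta = \Theta$.

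The main obstacle is handling the unboundedness of the Gaussian component of $u_0$: the cleanest matrix Chernoff bounds want almost-surely bounded summands, so I expect the bulk of the care to go into the truncation step — controlling $\max_i \|u_0^{(i)}\|$ and verifying that conditioning on that event perturbs neither the i.i.d.\ structure nor the lower bound on $\Sigma$ in a harmful way. An alternative that avoids truncation is a matrix Bernstein bound exploiting that $u_0 u_0^\top$ has sub-exponential operator norm, which trades the boundedness hypothesis for a variance-proxy computation; either way the quantitative sample requirement $N \gtrsim n+p+d$ comes out the same up to logs. The positive-definiteness of $\Sigma$ itself is not an obstacle — it falls out directly from Assumption \ref{ass:initial_distr} and the structure of the $u_0$ distribution.
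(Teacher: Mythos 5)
Your proposal is correct, but it takes a genuinely different route from the paper on both halves of the argument. For the positive-definiteness of $\Sigma = \E[\varphi^{(i)}(\varphi^{(i)})^{\top}]$, you exploit that $u_0$ is independent of $x_0$ with $\E[u_0]=0$, so the cross-blocks vanish and $\Sigma$ is (after permutation) block-diagonal with blocks $\tfrac12 I_p$ and $\E[\psi(x_0)\psi(x_0)^{\top}]\succeq \sigma_x I$; this yields the explicit quantitative bound $\Sigma \succeq \min\{\tfrac12,\sigma_x\}I$. The paper instead proves invertibility of $\Sigma$ via a support argument (its Lemma \ref{lemma: inv of sed mom matrix}): for every $s\neq 0$ it shows $\P(|s^{\top}\varphi^{(i)}|>0)>0$ by conditioning on the event $u_0=0$ (which has probability $1/2$ under the mixture) and invoking Assumption \ref{ass:initial_distr}. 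The paper's version is qualitative but works for any $u_0$-law with $0<\P(|w^{\top}u_0|>0)<1$; yours buys an explicit $\lambda_{\min}(\Sigma)$, which you in fact need for the second half. For the concentration step, the paper whitens ($Y_N=\Phi_N\Sigma^{-1/2}$, whose rows are independent isotropic sub-Gaussian vectors since $x_0$ and $\phi(x_0)$ are bounded and $u_0$ is a Gaussian mixture) and applies Vershynin's singular-value bound for sub-Gaussian rows, which handles the unbounded $u_0$ with no truncation and gives $N\gtrsim n+p+d$ cleanly. Your matrix-Chernoff-plus-truncation route is valid but pays for the boundedness hypothesis with the truncation event and picks up extra logarithmic factors in the sample requirement (roughly $N\gtrsim \frac{L^2}{\underline{\sigma}}\log\frac{n+p+d}{\nu}$ with $L^2$ itself containing a $p\log(N/\nu)$ term), whereas the sub-Gaussian-rows theorem absorbs the tail of $u_0$ into the sub-Gaussian norm constants $d_1,d_2$. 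Both arguments are sound, and the final step $\widehat{\Theta}=(\Phi_N^{\top}\Phi_N)^{-1}\Phi_N^{\top}\Phi_N\Theta=\Theta$ is identical.
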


Proposition \ref{prop: inv of design matrix} implies that, {with high probability}, least-squares regression recovers all the model parameters, $A, B$ and $C$, with no estimation error. As a consequence, we conclude $\widehat{K}_1^{\rm lin} = {K}_1^{\rm lin}$ and $\widehat{K}_2^{\rm lin} = {K}_2^{\rm lin}$ with high probability. Note that in Proposition \ref{prop: inv of design matrix}, there are $ n(n+p+d)$ parameters to be estimated. Our results guarantee that  $ \mathcal{O}(n+p+d) $ samples of dimension $n$ are sufficient to recover the exact values of the parameters. This bound appears to be optimally dependent on the parameters $ n, p $, and $ d $. The proof of Proposition \ref{prop: inv of design matrix} relies on Lemmas \ref{lemma: sub-gaussian rows} and \ref{lemma: inv of sed mom matrix} which are deferred to Section \ref{sec: proof of matrix inv}.

\begin{proof}[Proof of Proposition \ref{prop: inv of design matrix}]
    By a slight abuse of notation, let $ \Sigma = \E\bracket{\varphi^{(i)}\parenthesis{\varphi^{(i)}}^{\top}} $ with $ \varphi^{(i)} = \varphi(x_{0}^{(i)}, u_{0}^{(i)}) $. With the choice of $ u_{0}^{(i)} $ in Algorithm \ref{alg:init estimation}, the matrix $ \Sigma $ is invertible by Lemma \ref{lemma: inv of sed mom matrix}. Let $ Y_{N} = \Phi_{N}\Sigma^{-1/2} $. The $ i $-th row of the matrix $ Y_{N} $ is
\begin{align*}
\parenthesis{Y_{N}^{(i)}}^{\top} = \parenthesis{\Phi_{N}^{(i)}}^{\top}\Sigma^{-1/2}.
\end{align*}
Since $ \norm{x_{0}^{(i)}} \leq D_{0} $ with probability one and $ \phi $ is $ \ell $-Lipschitz, the rows of $ Y_{N} $ are independent sub-Gaussian random vectors. Furthermore, note that
\begin{align*}
\E\bracket{\parenthesis{Y_{N}^{(i)}}\parenthesis{Y_{N}^{(i)}}^{\top}} & = \Sigma^{-1/2}\E\bracket{\parenthesis{\Phi_{N}^{(i)}}\parenthesis{\Phi_{N}^{(i)}}^{\top}}\Sigma^{-1/2} = I_{n+p+d}.
\end{align*}
By Lemma \ref{lemma: sub-gaussian rows}, for each $ \nu \in (0, 1) $, $ Y_{N}^{\top}Y_{N} $ is invertible with probability at least $ 1 - \nu $ for any $ N \geq N_{0} \coloneqq \parenthesis{d_1\sqrt{n+p+d} - \sqrt{\frac{1}{d_2}\log\frac{2}{\nu}} }^{2}$. As a consequence, we have 
\begin{align*}
\parenthesis{\Phi_{N}^{\top}\Phi_{N}}^{-1} = \Sigma^{-1/2}\parenthesis{Y_{N}^{\top}Y_{N}}^{-1}\Sigma^{-1/2}
\end{align*}
holds with probability at least $ 1 - \nu $.
\end{proof}

\subsection{Landscape and Convergence Analysis}

In this subsection, we study the convergence rate for the policy gradient method introduced in Section \ref{sec: algorithm}. Our first theorem characterizes the landscape of the cost function. It shows that the cost function is strongly convex and smooth in a region of the initialization $K^{\rm lin}$ when the Lipschitz constants $\ell$ and $\ell'$  are sufficiently small. Further, we prove the optimal controller $K^{*}$ is inside this neighborhood. {Denote $ \Gamma = \max\curly{\norm{A}, \norm{B}, \norm{C}, \norm{K^{\rm lin}}_{F}, 1} $.} Recall that the initial policy $K^{\rm lin} = (K^{\rm lin}_1, K^{\rm lin}_2)$ satisfies $ \norm{(A - BK_{1}^{\rm lin})^{t}} \leq c_{1}^{\rm lin}(\rho_{1}^{\rm lin})^{t} $ for all $t \geq 1$, and  $  \norm{C - BK_{2}^{\rm lin}} \leq c_{2}^{\rm lin} $ for some $ \rho_{1}^{\rm lin} \in (0, 1) $ and $ c^{\rm lin}_{1},  c^{\rm lin}_{2} > 0 $. Having these definitions in mind, let us formally state our main result:

\begin{Theorem}\label{thm: landscape}
    Assume Assumptions \ref{ass:feature} and \ref{ass: cost function}--\ref{ass: initialization},
    $ c_{1} \geq 2c_{1}^{\rm lin}, \rho_{1} \in \left[\frac{\rho_{1}^{\rm lin} + 1}{2}, 1\right), $ and $c_{2}\geq 2c_{2}^{\rm lin} $. If
    $
    \ell \lesssim {\frac{(1 - \rho_{1})^{7}(\sigma_{x}\sigma)^{2}}{(c_{1} + c_{2})c_{2}c_{1}^{7}(1 + \Gamma)^{8}D_{0}^{3}}}$ and   $
    \ell' \lesssim\frac{(1 - \rho_{1})^{8}(\sigma_{x}\sigma)^{2}}{(c_{1} + c_{2})^{2}c_{2}^{2}c_{1}^{16}(1 + \Gamma)^{6}D_{0}^{4}}
    $, then
    \begin{enumerate}[(a)]
        \item there exists a region $\Lambda(\delta) = \curly{K: \norm{K - K^{\rm lin}}_{F} \leq \delta}$ with  $\delta \asymp {\frac{(1 - \rho_{1})^{4}\sigma_{x}\sigma}{(c_{1}+c_{2})c_{1}^{6}\Gamma^{2}D_{0}}}$  such that $ \Lambda(\delta) \subset \Omega $ and $ \mcal{C}(K) $ is $ \mu $-strongly-convex and $ h $-smooth in $ \Lambda(\delta) $ with $ \mu = \sigma_{x}\sigma$ and $ h \asymp {\frac{\Gamma^{4}c_{1}^{4}D_{0}^{2}}{(1 - \rho_{1})^{2}}}$;
        \item the global minimum of $ \mcal{C}(K) $ is achieved at a point $ K^{*} \in \Lambda(\delta/3) $.

    \end{enumerate}
\end{Theorem}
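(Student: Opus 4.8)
The plan is to treat the nearly linear-quadratic cost $\mcal{C}(K)$ as a perturbation of the LQR cost $\mcal{C}^{\rm lin}(K_1)$ from \eqref{eq: lqr}, and to transport the known strong convexity/smoothness of the LQR landscape in a neighborhood of $K_1^{\rm lin}$ to the full problem. First I would set up the infinite-horizon value characterization: for $K\in\Omega$ the closed-loop dynamics become $x_{t+1}=(A-BK_1)x_t+(C-BK_2)\phi(x_t)$, which is asymptotically stable by the definition of $\Omega$, so $\mcal{C}(K)$ is finite and admits a representation via the cost-to-go. The key quantitative step is to show that on the candidate ball $\Lambda(\delta)$ the state trajectory stays bounded (using $\phi(0)=0$, the $\ell$-Lipschitz bound, $\|x_0\|\le D_0$, and the geometric decay $c_1\rho_1^t$), so that all Taylor-type remainders from the nonlinear term are controlled by powers of $\ell$ and $\ell'$; this is where the specific smallness thresholds on $\ell,\ell'$ in the hypothesis get used. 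I would then verify $\Lambda(\delta)\subset\Omega$ directly: the triangle inequality $\|A-BK_1\|$-type perturbation bounds give $\|(A-BK_1)^t\|\le c_1\rho_1^t$ and $\|C-BK_2\|\le c_2$ for $K$ within $\delta$ of $K^{\rm lin}$, using $c_1\ge 2c_1^{\rm lin}$, $\rho_1\ge(\rho_1^{\rm lin}+1)/2$, $c_2\ge 2c_2^{\rm lin}$ and the stated $\delta\asymp (1-\rho_1)^4\sigma_x\sigma/((c_1+c_2)c_1^6\Gamma^2 D_0)$ as slack.

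For part (a), I would compute the Hessian $\nabla^2\mcal{C}(K)$ and split it as the LQR Hessian plus a remainder depending on $\phi$. For the LQR part, the classical Riccati-based analysis (as in \cite{fazel2018global} and particularly the nearly-linear treatment in \cite{Qu2020CombiningMA}) together with Assumptions \ref{ass: cost function} and \ref{ass:initial_distr} — the diverse-covariate bound $\E[\psi(x_0)\psi(x_0)^\top]\succeq\sigma_x I$ and $R+B^\top QB\succeq\sigma I$ — yields a lower bound $\mu\ge\sigma_x\sigma$ on the smallest eigenvalue and an upper bound $h\asymp \Gamma^4 c_1^4 D_0^2/(1-\rho_1)^2$ on the largest. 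The perturbation terms involve $\ell,\ell'$ multiplied by the trajectory-dependent constants collected above; the thresholds on $\ell$ and $\ell'$ are chosen exactly so that these perturbations are, say, at most half of $\sigma_x\sigma$, hence can be absorbed without changing $\mu$ and $h$ up to constants. This requires differentiating the closed-loop map twice in $K$, bounding $\partial x_t/\partial K$ and $\partial^2 x_t/\partial K^2$ by summing geometric series with ratio $\rho_1$, and using the gradient-Lipschitz bound $\ell'$ for the second-order terms — this bookkeeping, carried uniformly over $\Lambda(\delta)$, is the technical core.

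For part (b), I would argue that $K^{\rm lin}$ is an approximate stationary point of $\mcal{C}$: since $K_1^{\rm lin}$ is exactly optimal for the LQR problem and $K_2^{\rm lin}$ is its natural analogue for the $C$-column, $\nabla\mcal{C}(K^{\rm lin})$ equals a pure perturbation term of size $O(\ell\cdot(\text{constants}))$. Combining $\|\nabla\mcal{C}(K^{\rm lin})\|_F\lesssim \ell\cdot(\cdots)$ with the $\mu$-strong convexity on $\Lambda(\delta)$ gives $\|K^*-K^{\rm lin}\|_F\le \|\nabla\mcal{C}(K^{\rm lin})\|_F/\mu$, and the chosen $\ell$-threshold makes this at most $\delta/3$; one also checks $K^*$ is genuinely the global (not just local) minimizer, either by noting strong convexity on $\Lambda(\delta)$ pins down a unique critical point there and coercivity of $\mcal{C}$ plus the stability structure rules out competitors outside, or by a direct comparison argument. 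I expect the main obstacle to be the uniform trajectory and sensitivity bounds over the whole ball $\Lambda(\delta)$ — making sure the constants $c_1,\rho_1,\Gamma,D_0$ that appear in the gradient/Hessian perturbation estimates are the same ones that define $\Omega$ and $\delta$, so that the smallness conditions on $\ell,\ell'$ close the argument self-consistently rather than circularly.
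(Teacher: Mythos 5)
Your outline shares the paper's structural skeleton — the inclusion $\Lambda(\delta)\subset\Omega$ via a Lipschitz perturbation of the stable closed loop, strong convexity sourced from $R+B^{\top}P_{K_1}B\succeq R+B^{\top}QB\succeq\sigma I$ together with $\Sigma_{K}^{\psi\psi}\succeq\E[\psi(x_0)\psi(x_0)^{\top}]\succeq\sigma_x I$, the observation that $E_{K^{\rm lin}}=RK^{\rm lin}-B^{\top}P_{K_1^{\rm lin}}(H-BK^{\rm lin})=0$ so that the gradient at $K^{\rm lin}$ is a pure $O(\ell,\ell')$ term, and smallness of $\ell,\ell'$ absorbing all nonlinear remainders — but the technical route differs in two places. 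For part (a) the paper never forms $\nabla^2\mcal{C}(K)$: it uses an exact cost-difference identity (Lemma \ref{lemma: cost difference}, an advantage-function decomposition in the style of Fazel et al.) whose quadratic term is explicit and whose only nonquadratic piece is a telescoped difference of $G_K$, the nonlinear part of the value function; that piece is then sandwiched using only the \emph{local Lipschitz continuity of $\nabla G_K$ in $x$} (Lemma \ref{lemma:lip of G}) plus first-order sensitivities $\partial x_t/\partial x_0$ and $x_t'[\Delta]$. This buys a real saving over your Hessian plan, which would force you to control $\partial^2 x_t/\partial K^2$ through the nonlinear closed loop — doable but substantially heavier bookkeeping, and it is exactly the second-order trajectory sensitivity that the paper's identity sidesteps. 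For part (b) the paper does not use $\norm{K^*-K^{\rm lin}}_F\le\norm{\nabla\mcal{C}(K^{\rm lin})}_F/\mu$; it applies the same cost-difference identity with base point $K^{\rm lin}$ to show directly that $\mcal{C}(K)\ge\mcal{C}(K^{\rm lin})$ whenever $\norm{K-K^{\rm lin}}_F>\delta/3$, so the minimum over the region is forced into $\Lambda(\delta/3)$ by comparison rather than by stationarity. This matters for the one genuine soft spot in your proposal: the stationarity bound presupposes that $K^*$ already lies in the strongly convex ball, and your fallback ("coercivity plus the stability structure rules out competitors outside") is not an argument — a priori nothing you have established excludes a lower-valued policy elsewhere in $\Omega$. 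The paper's annulus comparison is the "direct comparison argument" you mention in passing, and it is the step you would actually need to carry out; with that substitution your plan closes.
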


Part (a) of Theorem \ref{thm: landscape} indicates that the cost function $\mcal{C}(K)$ is strongly convex and smooth within a $\delta$-neighborhood of the initializer $K^{\rm lin}$. Part~(b) shows that the optimal controller $K^{*}$ lies in a $\delta/3$-neighborhood of the initialization $K^{\rm lin}$. Consequently, the cost function is strongly convex and smooth in a region that contains both the initialization $K^{\rm lin}$ and the global optimizer $K^{*}$. These facts are crucial in establishing the global convergence of the proposed algorithm. {\color{black}We also remark that the bounds derived in Theorem \ref{thm: landscape} are only sufficient conditions and thus can be loose. Indeed, our numerical results in Section \ref{sec:experiments} show that the algorithm may still converge even when the Lipschitz constants are larger than the bounds required in Theorem 4.7.} The proof of Theorem \ref{thm: landscape} relies on Lemmas \ref{lemma:value_function}--\ref{lemma: bound Sigma_psi} which are detailed in Section \ref{sec: pf of local convexity}. {\color{black}Roughly speaking, Assumption \ref{ass:feature} implies $\norm{\nabla^2 \phi} \leq \ell'$ (assuming the second-order derivative exists). Consequently, we expect that $\norm{\nabla^2 C(K^*)} \geq \sigma - \ell' > 0$ when $\ell'$ is sufficiently small, which is the key idea for the proof.} 

\vspace{0.1cm}

\begin{proof}[Proof of Theorem \ref{thm: landscape}]
    We first show that $ \Lambda(\delta) \subset \Omega $ for any $ \delta \leq \min\curly{\frac{1 - \rho_{1}}{2\Gamma c_{1}^{\rm lin}}, \frac{c_{2}^{\rm lin}}{\Gamma}} $. Consider the following dynamics for $y_t \in \R^n$:
    \begin{align*}
    y_{t+1} = (A - BK_{1})y_{t} = (A - BK_{1}^{\rm lin})y_{t} + B(K_{1}^{\rm lin} - K_{1})y_{t}.
    \end{align*}
    Define a function $f: \R^n \to \R^n$ as $ f(y) = B(K_{1}^{\rm lin} - K_{1})y $. Simple algebraic manipulations show that $ f $ is $ \ell_{f} $-Lipschitz with $ \ell_{f} = \Gamma \delta $. Following the same argument in \cite[Lemma 4(a)]{Qu2020CombiningMA}, together with the assumption that $K \in \Omega$, we have
    \begin{subequations}
        \begin{align}
        \norm{y_{t}} & \leq 2c_{1}^{\rm lin}(\rho_{1}^{\rm lin} + 2c_{1}^{\rm lin}\ell_{f})^{t}\norm{y_{0}} \label{eqn:y_t a}
        \\ & \leq c_{1}(\rho_{1}^{\rm lin} + (1 - \rho_{1}))^t\norm{y_{0}} \label{eqn: y_t b}
        \\ & \leq c_{1}\rho_{1}^{t}\norm{y_{0}}. \label{eq: y_t c}
        \end{align}
        \end{subequations}
       Here, Eq.~\eqref{eqn:y_t a} follows \cite[Lemma 4(a)]{Qu2020CombiningMA}, Eq.~\eqref{eqn: y_t b} follows from the facts $ c_{1} \geq 2c_{1}^{\rm lin} $ and $ \delta \leq \frac{1- \rho_{1}}{2\Gamma c_{1}^{\rm lin}} $, and in Eq.~\eqref{eq: y_t c} we have used the fact $\rho_{1} \geq \frac{\rho_{1}^{\rm lin}+1}{2}$.
    Hence, we obtain 
    \begin{align}
    \norm{(A - BK_{1})^{t}} = \sup_{y_{0}\in \R^n}\frac{\norm{(A - BK_{1})^{t}y_{0}} }{\norm{y_{0}}} \leq c_{1}\rho_{1}^{t}. \label{eq:A - BK_1}
    \end{align}
    Additionally, since $ \norm{C - BK_{2}^{\rm lin}} \leq c_{2}^{\rm lin} $ and $ \norm{K_{2}^{\rm lin} - K^{2}} \leq \delta $, it follows
    \begin{align}
    \norm{C - BK_{2}} & = \norm{C- BK_{2}^{\rm lin} + B(K_{2}^{\rm lin} - K_{2})} \nonumber \\ & \leq \norm{C - BK_{2}^{\rm lin}} + \norm{B}\norm{K_{2}^{\rm lin} - K_{2}} \nonumber\\ & \leq c_{2}^{\rm lin} + \Gamma \delta  \leq 2c_{2}^{\rm lin} \leq c_{2}. \label{eq:C - BK_2}
    \end{align}
    Here, the penultimate inequality holds since $ \delta \leq c_{2}^{\rm lin}/\Gamma $ and the ultimate inequality follows from the fact that $ c_{2} \geq 2c_{2}^{\rm lin} $. Combining Eq.~\eqref{eq:A - BK_1} and \eqref{eq:C - BK_2}, we conclude that $ \Lambda(\delta) \subset \Omega $. 

    Next, we prove the local strong convexity of $ \mathcal{C}(K) $. 
    Let $ H = (A, C) \in\R^{n \times \parenthesis{n+d}}$ and $ \psi(x) = (x^{\top}, \phi(x)^{\top})^{\top} \in \R^{n+d} $. Since $ \phi $ is $ \ell $-Lipschitz, we know that $ \psi $ is $ \ell_{\psi} $-Lipschitz with $ \ell_{\psi} = \sqrt{1 + \ell^{2}} $. By Lemma \ref{lemma: grad of cost function}, the policy gradient satisfies
    \begin{align}
    \nabla \mcal{C}(K) = 2E_{K}\Sigma_{K}^{\psi\psi} - B^{\top}\Sigma_{K}^{G\psi}\label{eq:nabla C-1},
    \end{align}
    where we have defined
    \begin{align}
        E_{K}  & = RK - B^{\top}P_{K_{1}}(H - BK), \; \Sigma_{K}^{\psi\psi} = \E\bracket{\sum_{t = 0}^{\infty}\psi(x_{t})\psi(x_{t})^{\top}}, \nonumber \\  \; \text{and} \; \Sigma_{K}^{G\psi}  & = \E\bracket{\sum_{t = 0}^{\infty}\nabla G_{K}(x_{t+1})\psi(x_{t})^{\top}}. \label{eq:def-Ek}
    \end{align}
    Let $P_{K_1}$ satisfy
\begin{eqnarray}\label{eq:P}
(A-BK_1)^{\top}P_{K_1}(A-BK_1)-P_{K_1} +Q +K_1^{\top}R K_1=0,
\end{eqnarray}
and define $G_K(x)$  as
\begin{eqnarray}
 G_K(x) &:=&  \Tr \Big( \Big( K_2^{\top} R K_2 + (C-BK_2)^{\top}P_{K_1}(C-BK_2) \Big)\sum_{t=0}^{\infty} \phi(x_t)\phi(x_t)^{\top}\Big)\nonumber\\
&& + 2 \Tr \Big (\Big(K_1^{\top} R K_2+ (A-BK_1)^{\top}P_{K_1}(C-BK_2) \Big)\sum_{t=0}^{\infty}\phi(x_t)x_t^{\top} \Big).\label{eq:G_K}
\end{eqnarray}
Here, $\{x_t\}_{t=0}^{\infty}$ is the trajectory generated by the policy $K=(K_1,K_2)$ starting with the initial position $x_0$.
    By Lemma \ref{lemma: cost difference}, we have
    \begin{align}
        & \mcal{C}(K') - \mcal{C}(K) \nonumber\\ & = \Tr\parenthesis{(K' - K)^{\top}(R + B^{\top}P_{K_{1}}B)(K' - K)\Sigma_{K'}^{\psi\psi}} + 2\Tr\parenthesis{(K' - K)^{\top}E_{K}\Sigma_{K'}^{\psi\psi} }\nonumber\\ & \qquad +  \E\bracket{\sum_{t = 0}^{\infty}\bracket{G_{K}((H - BK')\psi(x_{t}')) - G_{K}((H - BK)\psi(x_{t}'))}}\nonumber \\ & = 2\Tr\parenthesis{(K' - K)^{\top}E_{K}\Sigma_{K}^{\psi\psi}} + 2\Tr\parenthesis{(K' - K)^{\top}E_{K}(\Sigma_{K'}^{\psi\psi} - \Sigma_{K}^{\psi\psi})}\nonumber \\ & \qquad + \Tr\parenthesis{(K' - K)^{\top}(R + B^{\top}P_{K_{1}}B)(K' - K)\Sigma_{K'}^{\psi\psi}} \nonumber\\ & \qquad + \E\bracket{\sum_{t = 0}^{\infty}\bracket{G_{K}((H - BK')\psi(x_{t}')) - G_{K}((H - BK)\psi(x_{t}'))}}. \label{eq:cvx-1}
    \end{align} 
    Moreover, since $ \phi $ is $ \ell $-Lipschitz with $ \ell \leq 1 $, Lemma \ref{lemma:trajectory_stability} implies
    \begin{align}
    \norm{(H - BK')\psi(x_{t'})} & = \norm{x_{t+1}'} \leq cD_{0}  \leq (c_{1} + c_{2})cD_{0},    \quad \text{and} \nonumber\\
    \norm{(H - BK)\psi(x_{t}')} & \leq \norm{(A - BK_{1})x_{t}'} + \norm{(C - BK_{2})\phi(x_{t}')} \leq (c_{1} + \ell c_{2})cD_{0} \leq (c_{1} + c_{2})cD_{0}. \label{eq: lip condition}
    \end{align}
    As $K \in \Lambda (\delta)$, Eq.~\eqref{eq: lip condition} implies that we can apply Lemma \ref{lemma:lip of G} to conclude
    \begin{align}
        & G_K((H - BK')\psi(x_t')) - G_K((H - BK)\psi(x_t')) \nonumber\\ & \geq (K - K')^{\top}B^{\top}\nabla G_K(x_{t+1}')\psi(x_t')^{\top}  - \frac{L}{2}\norm{B(K' - K)\psi(x_t')}_2^2, \label{eq:descent-1}
    \end{align}
    where we have defined
    \begin{align}
         L = \frac{5c_{2}c^{5}(1 + \Gamma)^{4}}{16(1 - \rho)^{2}}\ell + \frac{3Dc_{2}^{2}c^{6}(1 + \Gamma)^{2}}{16(1 - \rho)^{3}}\ell',\label{eq:L}
    \end{align}
    with $ D = (c_{1}+c_{2})c^{2}D_{0} $. Using Eq.~\eqref{eq:nabla C-1} and \eqref{eq:descent-1}, we can rewrite Eq.~\eqref{eq:cvx-1} as
    \begin{align}
        & \mcal{C}(K') - \mcal{C}(K) \nonumber
        \\ & \geq \Tr\parenthesis{(K' - K)^{\top}(2E_{K}\Sigma_{K}^{\psi\psi} - B^{\top}\Sigma_{K}^{G\psi})} + 2\Tr\parenthesis{(K' - K)^{\top}E_{K}(\Sigma_{K'}^{\psi\psi} - \Sigma_{K}^{\psi\psi})} \nonumber\\ & \qquad + \Tr\parenthesis{(K' - K)^{\top}(R + B^{\top}P_{K_{1}}B)(K' - K)\Sigma_{K'}^{\psi\psi}} \nonumber\\ & \qquad + \Tr\parenthesis{(K' - K)^{\top}B^{\top}\parenthesis{\E\bracket{\sum_{t = 0}^{\infty} \nabla G_{K}(x_{t+1})\psi(x_{t})^{\top}} - \E\bracket{\sum_{t = 0}^{\infty}\nabla G_{K}(x_{t+1}')\psi(x_{t}')^{\top}}}} \nonumber\\ & \qquad - \frac{L}{2}\E\bracket{\sum_{t = 0}^{\infty}\norm{B(K' - K)\psi(x_{t}')}^{2}} \nonumber\nonumber\\ & \geq \Tr\parenthesis{(K' - K)^{\top}\nabla \mcal{C}(K)} + \Tr\parenthesis{(K' - K)^{\top}(R + B^{\top}P_{K_{1}}B)(K' - K)\Sigma_{K'}^{\psi\psi}} \nonumber\\ & \qquad- 2\norm{K' - K}_{F}\norm{E_{K}}\norm{\Sigma_{K'}^{\psi\psi} - \Sigma_{K}^{\psi\psi}}_{F} \nonumber\\ & \qquad - \norm{K' - K}_{F}\norm{B}\norm{\E\bracket{\sum_{t = 0}^{\infty} \nabla G_{K}(x_{t+1})\psi(x_{t})^{\top}} - \E\bracket{\sum_{t = 0}^{\infty}\nabla G_{K}(x_{t+1}')\psi(x_{t}')^{\top}}}_{F} \nonumber\\ & \qquad - \frac{L}{2}\E\bracket{\sum_{t = 0}^{\infty}\norm{B}^{2}\norm{K' - K}_{F}^{2}\norm{\psi(x_{t}')}^{2}}. \label{eq:cvx-2}
    \end{align}
    Furthermore, one can see that
    \begin{align}
    & \Tr\parenthesis{(K' - K)^{\top}(R + B^{\top}P_{K_{1}}B)(K' - K)\Sigma_{K^{'}}^{\psi\psi}}\nonumber \\ & = \Tr\parenthesis{\parenthesis{(K' - K)(\Sigma_{K'}^{\psi\psi})^{1/2}}^{\top}(R + B^{\top}P_{K_{1}}B)(K' - K)(\Sigma_{K'}^{\psi\psi})^{1/2}} \label{eq:tr a} \\ & \geq \Tr\parenthesis{\parenthesis{(K' - K)(\Sigma_{K'}^{\psi\psi})^{1/2}}^{\top}(R + B^{\top}QB)(K' - K)(\Sigma_{K'}^{\psi\psi})^{1/2}} \label{eq:tr b}\\ & \geq \sigma\Tr\parenthesis{\parenthesis{(K' - K)(\Sigma_{K'}^{\psi\psi})^{1/2}}^{\top}(K' - K)(\Sigma_{K'}^{\psi\psi})^{1/2}} \label{eq:tr c}\\ & = \sigma\Tr \parenthesis{(K' - K)^{\top}\Sigma_{K'}^{\psi\psi}(K' - K)}  \geq \mu \norm{K' - K}_{F}^{2}, \label{eq: Tr lower bound}
    \end{align}
    where $ \mu = \sigma_{x}\sigma $. Eq.~\eqref{eq:tr a} holds because $ \Sigma_{K'}^{\psi\psi} \succ 0 $. Noting $ P_{K_{1}} \succeq Q $ and $ R + B^{\top}QB \succeq \sigma I $, we obtain Eq.~\eqref{eq:tr b} and \eqref{eq:tr c}. Finally, Eq.~\eqref{eq: Tr lower bound} holds due to the fact  $ \Sigma_{K'}^{\psi\psi} \succeq \E \bracket{\psi(x_{0})\psi(x_{0})^{\top}} \succeq \sigma_{x}I $. Combining Eq.~\eqref{eq:cvx-2} and \eqref{eq: Tr lower bound}, and applying Lemmas \ref{lemma:trajectory_stability} and \ref{lemma: bound Sigma},  it follows
    \begin{align*}
     \mcal{C}(K') - \mcal{C}(K) & \geq \Tr\parenthesis{(K' - K)^{\top}\nabla \mcal{C}(K)} + \mu\norm{K' - K}_{F}^{2} -2C_{1}C_{E}\norm{K - K^{\rm lin}}\norm{K' - K}_{F}^{2} \\ & \qquad  - \Gamma C_{2}\norm{K' - K}_{F}^{2} - \frac{L}{2}\Gamma^{2}\ell_{\psi}^{2}\norm{K' - K}_{F}^{2}\sum_{t = 0}^{\infty}\E\bracket{\norm{x_{t}'}^{2}} \\ & \geq \Tr\parenthesis{(K' - K)^{\top}\nabla \mcal{C}(K)} + \mu\norm{K' - K}_{F}^{2} - \bracket{2C_{1}C_{E}\delta + \frac{1}{2}\Gamma L C_{1} + \frac{L}{2}\frac{\Gamma^{2}\ell_{\psi}^{2}c^{2}D_{0}^{2}}{1 - \rho}}\norm{K' - K}_{F}^{2}, 
    \end{align*}
    where $ c = 2c_{1} $, $ \rho = \frac{\rho_{1}+1}{2} $,  and $ C_{E} $, $ C_{1}$ and $ C_{2}$ are defined as
    \begin{align} \label{eq:CE}
        C_{E} = 3(c_{1} + c_{2})\frac{\Gamma^{4}c^{3}}{(1 - \rho)^{2}}, C_{1} = \frac{4c^{3}\Gamma D_{0}^{2}}{( 1- \rho)^{2}},  C_{2} = LC_{1}/2 
    \end{align}
     To establish the local  strong convexity of $ \mcal{C}(\cdot) $, it remains to show that
    \begin{align}\label{eq:cvx-3}
        2C_{1}C_{E}\delta + \frac{1}{2}\Gamma L C_{1} + \frac{L}{2}\frac{\Gamma^{2}\ell_{\psi}^{2}c^{2}D_{0}^{2}}{1 - \rho} \leq \frac{\mu}{2}.
    \end{align}
    Notice by Lemma \ref{lemma:lip of G},  $ L = \ell C_{\ell} + \ell' C_{\ell'} $ with $ C_{\ell} = \frac{5c_{2}c^{5}(1 + \Gamma)^{4}}{16(1 - \rho)^{2}}$ and $ C_{\ell'} = \frac{3Dc_{2}^{2}c^{6}(1+\Gamma)^{2}}{16(1 - \rho)^{3}} $, where  $ D = (c_{1} + c_{2})c^{2}D_{0} $. Also, since $ \ell_{\psi} \leq \sqrt{2} $, we observe $ \frac{\Gamma \ell_{\psi}^{2}c^{2}D_{0}^{2}}{1 - \rho} \leq C_{1}  $. 
    Consequently, we conclude
    \begin{align}
        2C_{1}C_{E}\delta + \frac{1}{2}\Gamma L C_{1} + \frac{L}{2}\frac{\Gamma^{2}\ell_{\psi}^{2}c^{2}D_{0}^{2}}{1 - \rho}  \leq 2C_{1}C_{E}\delta + \frac{1}{2}\Gamma L C_{1} + \frac{L}{2}\Gamma C_{1} = 2C_{1}C_{E}\delta + \Gamma C_{1}(\ell C_{\ell} + \ell' C_{\ell'}) \leq \frac{\mu}{2}, \label{eq:cvx-4}
    \end{align}
    where the last inequality holds as long as
    \begin{align*}
    \delta  & \leq \frac{\mu}{12C_{1}C_{E}} = \frac{(1 - \rho)^{4}\sigma_{x}\sigma}{144(c_{1}+c_{2})c^{6}\Gamma^{2}D_{0}}, \\  \ell & \leq \frac{\mu}{6\Gamma C_{1}C_{\ell}} = \frac{(1 - \rho)^{4}\sigma_{x}\sigma}{30c_{2}c^{8}(1 + \Gamma)^{6}D_{0}^{2}}, \quad \text{and} \quad \\  \ell'  & \leq \frac{\mu}{6\Gamma C_{1}C_{\ell'}} = \frac{( 1- \rho)^{5}\sigma_{x}\sigma}{18(c_{1}+c_{2})c_{2}^{2}c^{11}(1 + \Gamma)^{4}D_{0}^{3}}.
    \end{align*}

    In a similar manner to the analysis of local strong convexity, we will demonstrate next that $ \mathcal{C}(K) $ is locally $ h $-smooth. First note Eq.~\eqref{eq: lip condition} and Lemma \ref{lemma:lip of G} imply
    \begin{align}
        & G_K((H - BK')\psi(x_t')) - G_K((H - BK)\psi(x_t')) \nonumber\\ & \leq (K - K')^{\top}B^{\top}\nabla G_K(x_{t+1}')\psi(x_t')^{\top}  + \frac{L}{2}\norm{B(K' - K)\psi(x_t')}_2^2, \label{eq:descent-2}
    \end{align}
    Then it follows from Lemma \ref{lemma: cost difference} that
    \begin{subequations}
        \begin{align}
        & \mcal{C}(K') - \mcal{C}(K)  \nonumber \\ & = \Tr\parenthesis{(K' - K)^{\top}(R + B^{\top}P_{K_{1}}B)(K' - K)\Sigma_{K'}^{\psi\psi}} \nonumber \\ & \qquad + 2\Tr\parenthesis{(K' - K)^{\top}E_{K}\Sigma_{K'}^{\psi\psi}} \nonumber\\ & \qquad +  \E\bracket{\sum_{t = 0}^{\infty}\bracket{G_{K}((H - BK')\psi(x_{t}')) - G_{K}((H - BK)\psi(x_{t}'))}}\nonumber \\ & = 2\Tr\parenthesis{(K' - K)^{\top}E_{K}\Sigma_{K}^{\psi\psi}} + 2\Tr\parenthesis{(K' - K)^{\top}E_{K}(\Sigma_{K'}^{\psi\psi} - \Sigma_{K}^{\psi\psi})} \nonumber \\ & \qquad + \Tr\parenthesis{(K' - K)^{\top}(R + B^{\top}P_{K_{1}}B)(K' - K)\Sigma_{K'}^{\psi\psi}} \nonumber\\ & \qquad + \E\bracket{\sum_{t = 0}^{\infty}\bracket{G_{K}((H - BK')\psi(x_{t}')) - G_{K}((H - BK)\psi(x_{t}'))}}. \label{eq:smooth a}
        \end{align}
        Applying Eq.~\eqref{eq:descent-2} to further upper bound
        \begin{align}
        \eqref{eq:smooth a} & \leq 2\Tr\parenthesis{(K' - K)^{\top}E_{K}\Sigma_{K}^{\psi\psi}} + 2\Tr\parenthesis{(K' - K)^{\top}E_{K}(\Sigma_{K'}^{\psi\psi} - \Sigma_{K}^{\psi\psi})}  \nonumber\\ & \qquad + \Tr\parenthesis{(K' - K)^{\top}(R + B^{\top}P_{K_{1}}B)(K' - K)\Sigma_{K'}^{\psi\psi}} \nonumber \\ & \qquad + \E\bracket{\sum_{t = 0}^{\infty}\Tr\parenthesis{(K - K')^{\top}B^{\top}\nabla G_{K}(x_{t+1}')\psi(x_{t}')^{\top}}}  \nonumber\\ & \qquad+ \E\bracket{\sum_{t = 0}^{\infty}\frac{L}{2}\norm{B(K' - K)\psi(x_{t}')}^{2}}\nonumber
        \\ & = \Tr\parenthesis{(K' - K)^{\top}\nabla \mcal{C}(K)} + 2\Tr\parenthesis{(K' - K)^{\top}E_{K}(\Sigma_{K'}^{\psi\psi} - \Sigma_{K}^{\psi\psi})}  \nonumber\\ & \qquad+ \Tr\parenthesis{(K' - K)^{\top}(R + B^{\top}P_{K_{1}}B)(K' - K)\Sigma_{K'}^{\psi\psi}} \nonumber \\ & \qquad + \E\left[\sum_{t = 0}^{\infty}\Tr\left((K' - K)^{\top}B^{\top}\left(\nabla G_{K}(x_{t+1})\psi(x_{t})^{\top} \right.\right.\right. \nonumber \\ & \left. \left.\left. \qquad\qquad\qquad\qquad\qquad\qquad\qquad\qquad\qquad - \nabla G_{K}(x_{t+1}')\psi(x_{t}')^{\top}\right)\right)\right] \nonumber  \\ & \qquad + \E\bracket{\sum_{t = 0}^{\infty}\frac{L}{2}\norm{B(K' - K)\psi(x_{t}')}^{2}}. \label{eq: smooth b}
        \end{align}
        The Cauchy-Schwarz inequality leads to
        \begin{align}
        \eqref{eq: smooth b} & \leq \Tr\parenthesis{(K' - K)^{\top}\nabla \mcal{C}(K)} + 2\norm{K' - K}_{F}\norm{E_{K}}\norm{\Sigma_{K'}^{\psi\psi} - \Sigma_{K}^{\psi\psi}}_{F} \nonumber \\ & \qquad + \norm{K' - K}_{F}^{2}\norm{R + B^{\top}P_{K_{1}}B}\norm{\Sigma_{K'}^{\psi\psi}}\nonumber \\ & \qquad + \norm{K' - K}_{F}\norm{B}\left\|\E\bracket{\sum_{t = 0}^{\infty}\nabla G_{K}(x_{t+1})\psi(x_{t})^{\top}} \right. \nonumber \\ & \left. \qquad\qquad\qquad\qquad\qquad\qquad\qquad\qquad- \E\bracket{\sum_{t = 0}^{\infty}\nabla G_{K}(x_{t+1}')\psi(x_{t}')^{\top}}\right\|_{F} \nonumber \\ & \qquad  + \E\bracket{\sum_{t = 0}^{\infty}\frac{L}{2}\norm{B}^{2}\norm{K' - K}_{F}^{2}\norm{\psi(x_{t}')}^{2}}. \label{eq: smooth c}
        \end{align}
        Next, we apply Lemmas \ref{lemma:trajectory_stability} and \ref{lemma: bound Sigma} to derive
        \begin{align}
        \eqref{eq: smooth c} & \leq \Tr\parenthesis{(K' - K)^{\top}\nabla \mcal{C}(K)} + \left(2C_{1}C_{E}\delta + \Gamma C_{2} + \frac{L}{2}\frac{\Gamma^{2}\ell_{\psi}^{2}c^{2}D_{0}^{2}}{1 - \rho} \right. \nonumber \\ & \qquad\qquad\qquad\qquad\qquad\quad\qquad  + \norm{R + B^{\top}P_{K_{1}}B}\norm{\Sigma_{K'}^{\psi\psi}}\Biggr)\norm{K' - K}_{F}^{2} \nonumber
       \\ & \leq \Tr(K' - K)^{\top}\nabla \mcal{C}(K) + \parenthesis{\frac{\mu}{2} + \norm{R + B^{\top}P_{K_{1}}B}\norm{\Sigma_{K'}^{\psi\psi}}}\norm{K' - K}_{F}^{2}, \label{eq: smooth d}
    \end{align}
    \end{subequations}
    where Eq.~\eqref{eq: smooth d} is a consequence of Eq.~\eqref{eq:cvx-3}.
    Finally, applying the upper bounds on $ \norm{P_{K_{1}}} $ and $ \norm{\Sigma_{K'}^{\psi\psi}} $ from Lemmas \ref{lemma: bound P_K} and  \ref{lemma: bound Sigma_psi},  we obtain
    \begin{align}
    \mu + 2\norm{R + B^{\top}P_{K_{1}}B}\norm{\Sigma_{K'}^{\psi\psi}}  & \leq \mu + 2\parenthesis{1 + \Gamma^{2}\frac{c^{2}\Gamma^{2}}{1 - \rho}}\frac{2c^{2}D_{0}^{2}}{1 - \rho} \nonumber \\ & \leq 9\frac{\Gamma^{4}c^{4}D_{0}^{2}}{(1 - \rho)^{2}} \eqqcolon h. \label{eq: h}
    \end{align}
    Therefore, combining Eq.~\eqref{eq:cvx-4} and \eqref{eq: h}, we finish the proof of part (a) of Theorem \ref{thm: landscape}.  

    In the following, we will prove part (b) of Theorem \ref{thm: landscape}. We first observe that $ E_{K^{\rm lin}} = RK^{\rm lin} - B^{\top}P_{K_1^{\rm lin}}(H - BK^{\rm lin}) = 0 $ by Eq.~\eqref{eq:linear_initialization}--\eqref{eq:nonlinear_initialization}. Then it follows from Lemma \ref{lemma: cost difference} that 
    \begin{align}
    \mcal{C}(K) - \mcal{C}(K^{\rm lin}) & = 2\Tr\parenthesis{(K - K^{\rm lin})^{\top}E_{K^{\rm lin}}\Sigma_{K}^{\psi\psi}} + \Tr\parenthesis{(K - K^{\rm lin})^{\top}(R + B^{\top}P_{K_{1}^{\rm lin}}B)(K - K^{\rm lin})\Sigma_{K}^{\psi\psi}} \nonumber\\ & \qquad + \E\bracket{\sum_{t = 0}^{\infty}\bracket{G_{K^{\rm lin}}((H - BK))\psi(x_{t}) - G_{K^{\rm lin}}((H - BK^{\rm lin}))\psi(x_{t})}} \nonumber\\ & = \Tr\parenthesis{(K - K^{\rm lin})^{\top}(R + B^{\top}P_{K_{1}^{\rm lin}}B)(K - K^{\rm lin})\Sigma_{K}^{\psi\psi}} \nonumber\\ & \qquad + \E\bracket{\sum_{t = 0}^{\infty}\bracket{G_{K^{\rm lin}}((H - BK))\psi(x_{t}) - G_{K^{\rm lin}}((H - BK^{\rm lin}))\psi(x_{t})}}. \label{eq:opt-1}
    \end{align}
    Additionally,  Eq.~\eqref{eq: Tr lower bound} with $ K' = K^{\rm lin} $ implies that
    \begin{align*}
        \Tr(K - K^{\rm lin})^{\top}(R + B^{\top}P_{K_{1}^{\rm lin}}B)(K - K^{\rm lin})\Sigma_{K}^{\psi\psi}  \geq \sigma_{x}\sigma \norm{K - K^{\rm lin}}_{F}^{2} = \mu\norm{K - K^{\rm lin}}_{F}^{2}.
    \end{align*}
    Also, note that
    \begin{align*}
    \norm{(H - BK)\psi(x_{t})} & = \norm{x_{t+1}} \leq c\norm{x_{0}} \leq cD_{0} \leq (c_{1} + c_{2})cD_{0}, \quad \text{and} \\ \norm{(H - BK^{\rm lin})\psi(x_{t})} & = \norm{(A - BK_{1}^{\rm lin})x_{t} + (C - BK_{2}^{\rm lin})\phi(x_{t})} \leq (c_{1} + \ell c_{2})cD_{0} \leq (c_{1} + c_{2})cD_{0}.
    \end{align*}
    Hence, we can apply Lemma \ref{lemma:lip of G} to obtain
    \begin{align*}
       &  G_{K^{\rm lin}}((H - BK))\psi(x_{t}) - G_{K^{\rm lin}}((H - BK^{\rm lin}))\psi(x_{t}) \nonumber\\ & \geq -\Tr\parenthesis{(B(K - K^{\rm lin})\psi(x_{t}))^{\top}\nabla G_{K^{\rm lin}}((H - BK)\psi(x_{t}))} - \frac{L}{2}\norm{B(K - K^{\rm lin})\psi(x_{t})}^{2} \\ & \geq - \norm{B}\norm{K - K^{\rm lin}}_{F}\norm{\psi(x_{t})}L\norm{x_{t+1}} - \frac{L}{2}\norm{B}^{2}\norm{K - K^{\rm lin}}_{F}^{2}\norm{\psi(x_{t})}^{2}, 
    \end{align*}
    where we have used the fact that $ \nabla G_{K^{\rm lin}}(0) = 0 $ to reach the second inequality. 
    As such, we can deduce
    \begin{align}
        & \E\bracket{\sum_{t = 0}^{\infty}\bracket{G_{K^{\rm lin}}((H - BK))\psi(x_{t}) - G_{K^{\rm lin}}((H - BK^{\rm lin}))\psi(x_{t})}} \nonumber \\ & \geq -L\Gamma\ell_{\psi}\norm{K - K^{\rm lin}}_{F}\sum_{t = 0}^{\infty}\E\bracket{\norm{x_{t}}^{2}} - \frac{L}{2}\Gamma^{2}\ell_{\psi}^{2}\norm{K - K^{\rm lin}}_{F}^{2}\sum_{t = 0}^{\infty}\E\bracket{\norm{x_{t}}^{2}} \label{eq: G a} \\ & \geq -\frac{\rho L\Gamma \ell_{\psi}c^{2}D_{0}^{2}}{1 - \rho^{2}}\norm{K - K^{\rm lin}}_{F} - \frac{L}{2}\frac{\Gamma^{2}\ell_{\psi}^{2}c^{2}D_{0}^{2}}{1 - \rho^{2}}\norm{K - K^{\rm lin}}_{F}^{2} \label{eq: G b}   \\ & \geq -\frac{ L\Gamma \ell_{\psi}c^{2}D_{0}^{2}}{1 - \rho}\norm{K - K^{\rm lin}}_{F} - \frac{L}{2}\frac{\Gamma^{2}\ell_{\psi}^{2}c^{2}D_{0}^{2}}{1 - \rho}\norm{K - K^{\rm lin}}_{F}^{2}. \label{eq:descent-3}
    \end{align}
    Here, Eq.~\eqref{eq: G a} holds since $ \norm{B} \leq \Gamma $ and $ \norm{\psi(x_{t})} \leq \ell_{\psi}\norm{x_{t}} $, Eq.~\eqref{eq: G b} follows from $ \norm{x_{t}} \leq c\rho^{t}D_{0} $, and we have used the fact that $1/2 < \rho < 1$ to obtain Eq \eqref{eq:descent-3}.    
    By leveraging Eq.~\eqref{eq:descent-3}, we can rewrite Eq.~\eqref{eq:opt-1} as
    \begin{align*}
    \mcal{C}(K) - \mcal{C}(K^{\rm lin}) \geq \bracket{\mu - \frac{L}{2}\cdot\frac{\Gamma^{2}\ell_{\psi}^{2}c^{2}D_{0}^{2}}{1 - \rho}}\norm{K - K^{\rm lin}}_{F}^{2} - \frac{L\Gamma\ell_{\psi}c^{2}D_{0}^{2}}{1 - \rho}\norm{K - K^{\rm lin}}_{F}.
    \end{align*}
    Since $ \norm{K - K^{\rm lin}}_{F} > \delta / 3 $, it suffices to show that
    \begin{align*}
        \bracket{\mu - \frac{L}{2}\cdot\frac{\Gamma^{2}\ell_{\psi}^{2}c^{2}D_{0}^{2}}{1 - \rho}}\norm{K - K^{\rm lin}}_{F} - \frac{L\Gamma\ell_{\psi}c^{2}D_{0}^{2}}{1 - \rho}\geq 0.
    \end{align*}
    Indeed, this inequality holds since
    \begin{align*}
        \frac{L}{2}\cdot\frac{\Gamma^{2}\ell_{\psi}^{2}c^{2}D_{0}^{2}}{1 - \rho} \leq \frac{\mu}{2}, \quad\text{and}\quad
        \frac{L\Gamma\ell_{\psi}c^{2}D_{0}^{2}}{1 - \rho} \leq \frac{\mu \delta}{6},
    \end{align*}
    as long as the following conditions are satisfied
    \begin{align*}
    \ell & \leq \delta\frac{2(1 - \rho)^{3}\sigma_{x}\sigma}{9c_{2}c^{7}(1 + \Gamma)^{6}D_{0}^{2}}, \quad \text{and} \quad \ell' \leq \delta\frac{2(1 - \rho)^{4}\sigma_{x}\sigma}{9(c_{1}+c_{2})c_{2}^{2}c^{10}(1 + \Gamma)^{4}D_{0}^{3}}.
    \end{align*}
    Choose $ \delta $ as in the proof of part (a), we finish the proof of part (b) of Theorem \ref{thm: landscape}. 
\end{proof}

Given the landscape results, if $\nabla \mcal{C}(K)$ is assumed to be known, starting from the initialization $K^{\rm lin}$, the policy gradient method leads to  the global minimum of the cost function $\mcal{C}(K)$. Hence, it is not surprising that the policy gradient method \eqref{eq:algo} converges to the globally optimal solution with the gradient estimation {in Algorithm \ref{alg:policy gradient estimation}}. This result is formally stated in Theorem \ref{thm: conv of algo}. Recall the policy update rule $K^{(m+1)} = K^{(m)} - \eta \widehat{\nabla \mathcal{C}(K^{(m)})}$.

\begin{Theorem}\label{thm: conv of algo}
      Assume the conditions in Theorem \ref{thm: landscape} hold. Let $ \epsilon > 0 $ and $ \nu \in (0, 1) $ be given. Suppose the step size $ \eta < \frac{1}{h} $ and the number of gradient descent steps $ M \geq  \frac{2}{\eta \mu}\log \parenthesis{\frac{\delta}{3}\sqrt{\frac{2h}{\epsilon}}} $. Further, assume the gradient estimator parameter {in Algorithm \ref{alg:policy gradient estimation}} satisfies $ r \leq \min\curly{\frac{\delta}{3}, \frac{1}{3h}e_{\text{grad}}} $, $ T \geq \frac{1}{1 - \rho_{1}}\log\frac{6\widehat{D}C_{\text{max}}}{e_{\text{grad}}r}$, and
      \begin{align*}
      J \geq \frac{\widehat{D}^{2}}{e_{\text{grad}}^{2}r^{2}}\log \frac{4\widehat{D}M}{\nu}\max\curly{36\parenthesis{\mcal{C}(K^{*}) + 2h\delta^{2}}^{2}, 144C_{\max}^{2}},
      \end{align*}
    where $ C_{\max} = \frac{24(1+\Gamma)^{2}c_1^{2}D_{0}^{2}}{1 - \rho_1}  $, and $ e_{\text{grad}} = \min\curly{\frac{\delta \mu}{6}, \frac{\mu}{2}\sqrt{\frac{\epsilon}{2h}}} $. Then with probability at least $ 1 - \nu $, we have $ \mcal{C}(K^{(M)}) - \mcal{C}(K^{*}) < \epsilon $. 
\end{Theorem}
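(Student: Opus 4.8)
The plan is to treat the update rule \eqref{eq:algo} as an \emph{inexact} gradient descent on $\mcal{C}$, where the inexactness is precisely the zeroth‑order estimation error produced by Algorithm \ref{alg:policy gradient estimation}, and to run the entire argument inside the good region $\Lambda(\delta)$ supplied by Theorem \ref{thm: landscape}. The three ingredients are: (i) per‑iteration control of the estimation error via Lemma \ref{lemma: estimation C(K)}; (ii) an invariance induction keeping every iterate (and every perturbed query policy) inside $\Lambda(\delta)$; and (iii) the standard contraction analysis of inexact gradient descent for a strongly convex and smooth objective.

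\textbf{Gradient accuracy.} Fix any $K$ with $\norm{K - K^{\rm lin}}_F \le 2\delta/3$. Since $r \le \delta/3$, each perturbed policy $\widehat K^j = K + U^j$ lies in $\Lambda(\delta) \subset \Omega$, so the trajectory it generates decays geometrically with rate $\rho_1$ and its finite‑horizon cost is bounded in terms of $C_{\max}$. Under this uniform boundedness, Lemma \ref{lemma: estimation C(K)} applies: the condition $r \le \tfrac{1}{3h}e_{\text{grad}}$ kills the smoothing bias in \eqref{eq:gradient_estimate}, the condition $T \ge \tfrac{1}{1-\rho_1}\log\tfrac{6\widehat D C_{\max}}{e_{\text{grad}}r}$ kills the horizon‑truncation error, and the stated lower bound on $J$ (whose $\log(4\widehat D M/\nu)$ factor is the union‑bound inflation at confidence $\nu/M$) kills the Monte‑Carlo fluctuation, so $\norm{\widehat{\nabla\mcal{C}}(K) - \nabla\mcal{C}(K)}_F \le e_{\text{grad}}$ with probability at least $1-\nu/M$.

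\textbf{Invariance and linear rate.} On the event that all gradient estimates are $e_{\text{grad}}$‑accurate, I would show by induction that $\norm{K^{(m)} - K^*}_F \le \delta/3$ for all $m \le M$; the base case is Theorem \ref{thm: landscape}(b), and this bound forces $\norm{K^{(m)} - K^{\rm lin}}_F \le 2\delta/3$, so the previous paragraph indeed applies at step $m$. Because $\mcal{C}$ is $\mu$‑strongly convex and $h$‑smooth on the convex set $\Lambda(\delta) \ni K^{(m)}, K^*$, the exact gradient map $K \mapsto K - \eta\nabla\mcal{C}(K)$ is a $\sqrt{1-\eta\mu}$‑contraction toward $K^*$ whenever $\eta \le 1/h$; adding the estimation error and using $\sqrt{1-\eta\mu}\le 1-\eta\mu/2$ together with $e_{\text{grad}} \le \delta\mu/6$ gives $\norm{K^{(m+1)} - K^*}_F \le (1-\tfrac{\eta\mu}{2})\tfrac{\delta}{3} + \eta\tfrac{\delta\mu}{6} = \tfrac{\delta}{3}$, closing the induction. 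Unrolling the same recursion and summing the geometric error series yields $\norm{K^{(M)} - K^*}_F \le (1-\eta\mu)^{M/2}\tfrac{\delta}{3} + \tfrac{2e_{\text{grad}}}{\mu}$.

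\textbf{Final accuracy and probability.} By $h$‑smoothness at the stationary point $K^*$, $\mcal{C}(K^{(M)}) - \mcal{C}(K^*) \le \tfrac{h}{2}\norm{K^{(M)} - K^*}_F^2 \le h(1-\eta\mu)^{M}\tfrac{\delta^2}{9} + \tfrac{4he_{\text{grad}}^2}{\mu^2}$. The first term is below $\epsilon/2$ exactly when $M \ge \tfrac{2}{\eta\mu}\log(\tfrac{\delta}{3}\sqrt{2h/\epsilon})$ (use $(1-\eta\mu)^M \le e^{-M\eta\mu}$), and the second is below $\epsilon/2$ exactly when $e_{\text{grad}} \le \tfrac{\mu}{2}\sqrt{\epsilon/(2h)}$; both are among the hypotheses, so $\mcal{C}(K^{(M)}) - \mcal{C}(K^*) < \epsilon$. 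A union bound over the $M$ per‑iteration failure events (each of probability $\le \nu/M$), legitimate because one conditions on the history up to step $m$ — on which $K^{(m)}$ is deterministic — before drawing the fresh perturbations $U^j$, gives the claimed probability $1-\nu$. The main obstacle is Lemma \ref{lemma: estimation C(K)} itself: simultaneously bounding the $O(hr)$ smoothing bias, the $O(\rho_1^{T})$ truncation error and the $O(\widehat D C_{\max}/(r\sqrt{J}))$ sampling error while maintaining uniform bounds on costs and trajectories over $\Lambda(\delta)$ — which is exactly what the containment $r \le \delta/3$ buys. The invariance induction is the only other delicate point, since it is what lets the purely \emph{local} landscape guarantee of Theorem \ref{thm: landscape} be invoked at every iteration.
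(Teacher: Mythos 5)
Your proposal is correct and follows essentially the same route as the paper's proof: the same invariance induction keeping $K^{(m)}\in\text{Ball}(K^{*},\delta/3)$ (hence $\Lambda(2\delta/3)$) so that Lemma \ref{lemma: estimation C(K)} applies at each step, the same $\sqrt{1-\eta\mu}$ contraction plus $\eta e_{\text{grad}}$ perturbation with $e_{\text{grad}}\leq \delta\mu/6$ closing the induction, the same unrolled recursion $\norm{K^{(M)}-K^{*}}_{F}\leq(1-\eta\mu)^{M/2}\delta/3+2e_{\text{grad}}/\mu$, and the same filtration-based conditioning to justify the $\nu/M$ per-step failure probability (the paper writes this as the product $(1-\nu/M)^{M}\geq 1-\nu$ rather than a union bound, which is immaterial). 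The only cosmetic difference is that you split $\frac{h}{2}(a+b)^{2}\leq ha^{2}+hb^{2}$ into two $\epsilon/2$ terms while the paper bounds $a+b\leq\sqrt{2\epsilon/h}$ directly; the conditions on $M$ and $e_{\text{grad}}$ come out identical.
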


This result shows that, despite the existence of nonlinear terms, finding the optimal control policy is still tractable when nonlinear terms are ``sufficiently small''. Moreover, we comment that the convergence rate $ \mathcal{O}\parenthesis{\frac{h}{\mu}\log\parenthesis{\frac{1}{\epsilon}}} $ matches that of LQR \cite{fazel2018global} in terms of the dependency on $ \mu $ and $ \epsilon $. {\color{black}Furthermore, the policy gradient approach requires a total number of $ MJT $ samples to perform the gradient estimation for $ M $ times. Here, the dependency of parameters in Algorithm \ref{alg:policy gradient estimation} are given by $ r = \mcal{O}\parenthesis{\frac{\mu}{h}\sqrt{\frac{\epsilon}{h}}} $, $ T = \mcal{O}\parenthesis{\log\parenthesis{\frac{h^{2}}{\epsilon \mu^{2}}}} $ and $ J = \tilde{\mcal{O}}\parenthesis{\frac{h^{4}}{\epsilon^{2}\mu^{4}}} $. Finally, we remark that our zeroth-order optimization framework is one of many possibilities of policy-based methods. One can improve the sample complexity by incorporating variance reduction techniques into our framework.} The proof of Theorem \ref{thm: conv of algo} relies on Lemma \ref{lemma: estimation C(K)} which is deferred to Section \ref{sec: conv of algo}.

\begin{proof}[Proof of Theorem \ref{thm: conv of algo}]
    Let $ \mcal{F}_{m} $ be the filtration generated by $ \curly{\widehat{\nabla \mcal{C}(K^{(m')})}}_{m' = 0}^{m - 1} $. Define the following event:
    \begin{align*}
        \mcal{E}_{m} &  = \curly{K^{(m')} \in \text{Ball}(K^{*}, \delta/3), m' = 0, \dots, m} \cap \curly{\norm{\widehat{\nabla \mcal{C}}(K^{(m')}) - \nabla \mcal{C} (K^{(m')})}_{F}\leq e_{\text{grad}}, m' = 0, \dots, m - 1},
    \end{align*}
    where $ \text{Ball}(K^{*}, \delta/3) = \curly{K: \norm{K - K^{*}}_{F} \leq \delta/3} $. Apparently, both $ K^{(m)} $ and the event $ \mcal{E}_{m} $ are $ \mcal{F}_{m} $-measurable. We want to show the following inequality:
    \begin{align}\label{eqn: high prob event}
    \E\bracket{1(\mcal{E}_{m+1}) \vert \mcal{F}_{m}}1(\mcal{E}_{m}) \geq \parenthesis{1 - \frac{\nu}{M}}1(\mcal{E}_{m}).
    \end{align}
    Namely, if event $ \mcal{E}_{m} $ is true, conditioned on $ \mcal{F}_{m} $, the event $ \mcal{E}_{m+1} $ happens with probability at least $ 1-\nu/M$. Note that conditioned on event $ \mcal{E}_{m} $, we have $ \norm{K^{(m)} - K^{\rm lin}}_{F} \leq \norm{K^{(m)} - K^{*}}_{F} + \norm{K^{\rm lin} - K^{*}}_{F} \leq 2\delta/3 $, which follows that $ K^{(m)} \in \Lambda(2\delta/3)  $.  Next, we show that $ K^{(m+1)} \in \text{Ball}(K^{*}, \delta/3) $. 
    Note that by $ \mu $-strong convexity of the cost function $ \mcal{C} $, it holds
    \begin{align}
    & \norm{K^{(m)} - \eta \nabla\mcal{C}(K^{(m)}) - K^{*}}_{F}^{2} \nonumber \\ & = \norm{K^{(m)} - K^{*}}_{F}^{2} - 2\eta\Tr\parenthesis{\nabla\mcal{C}(K^{(m)})^{\top}(K^{(m)} - K^{*})} + \eta^{2}\norm{\nabla \mcal{C}(K^{(m)})}_{F}^{2} \nonumber \\ & \leq (1 - \eta \mu)\norm{K^{(m)} - K^{*}}_{F}^{2} - 2\eta\parenthesis{\mcal{C}(K^{(m)}) - \mcal{C}(K^{*})} + \eta^{2}\norm{\nabla \mcal{C}(K^{(m)})}_{F}^{2}. \label{eq:strong-cvx}
    \end{align}
    Furthermore, since $ \mcal{C}(\cdot) $ is $ h $-smooth, we have
    \begin{align*}
    \mcal{C}(K^{(*)}) - \mcal{C}(K^{(m)}) & \leq \mcal{C}\parenthesis{K^{(m)} - \frac{1}{h}\nabla \mcal{C}(K^{m})} - \mcal{C}(K^{(m)})\\ & \leq -\frac{1}{h}\Tr\parenthesis{\nabla \mcal{C}(K^{(m)})^{\top}\nabla \mcal{C}(K^{(m)})} + \frac{h}{2}\norm{\frac{1}{h}\nabla \mcal{C}(K^{(m)})}_{F}^{2} = -\frac{1}{2h}\norm{\nabla \mcal{C}(K^{(m)})}_{F}^{2}.
    \end{align*}
    Thus, Eq.~\eqref{eq:strong-cvx} becomes
    \begin{align}
        \norm{K^{(m)} - \eta \nabla\mcal{C}(K^{(m)}) - K^{*}}_{F}^{2} & \leq (1 - \eta \mu)\norm{K^{(m)} - K^{*}}_{F}^{2} - 2\eta(1 - h\eta)\parenthesis{\mcal{C}(K^{(m)}) - \mcal{C}(K^{*})}\nonumber \\ & \leq (1 - \eta \mu)\norm{K^{(m)} - K^{*}}_{F}^{2}, \label{eq:smooth-bound}
    \end{align}
    where $ \eta(1 - h\eta) > 0 $ since $ 0 < \eta < 1/h $. Note under our selection of parameters, by Lemma \ref{lemma: estimation C(K)}, we have $ \norm{\widehat{\nabla \mcal{C}}(K^{(m)}) - \nabla \mcal{C} (K^{(m)})}_{F} \leq e_{\text{grad}} $ with probability at least $ 1 - \nu/M $. Together with Eq.~\eqref{eq:smooth-bound}, with probability at least $ 1 - \nu/M $, we have
    \begin{align}
    \norm{K^{(m+1)} - K^{*}}_{F} & \leq \norm{K^{(m)} - \eta \nabla \mcal{C}(K^{(m)}) - K^{*}}_{F} + \eta \norm{\widehat{\nabla \mcal{C}}(K^{(m)}) - \nabla \mcal{C} (K^{(m)})}_{F} \nonumber\\ & \leq ( 1- \eta \mu)^{1/2}\norm{K^{(m)} - K^{*}}_{F} + \eta e_{\text{grad}} \label{eq:Km} \\ & \leq ( 1- \eta \mu)^{1/2}\frac{\delta}{3} + \eta e_{\text{grad}} \label{eq: k(m+1) a} \\ & \leq \parenthesis{1 - \frac{1}{2}\eta\mu}\frac{\delta}{3} + \eta \frac{\delta \mu}{6}  = \frac{\delta}{3}. \label{K(m+1) b}
    \end{align}
    Here, Eq.~\eqref{eq: k(m+1) a} follows from the fact $ \norm{K^{(m)} - K^{*}}_{F} \leq \frac{\delta}{3} $. 
    We have used the facts $ (1 - x)^{1/2} \leq 1 - \frac{1}{2}x $ and $ e_{\rm grad} \leq \frac{\delta \mu}{6} $ to derive Eq.~\eqref{K(m+1) b}.
    As such, taking the expectation of \eqref{eqn: high prob event} on both sides, we have
    \begin{align}\label{eq:recursion}
    \P(\mcal{E}_{m+1}) = \P(\mcal{E}_{m+1} \cap \mcal{E}_{m}) = \E\bracket{\E\bracket{1(\mcal{E}_{m+1}) \vert \mcal{F}_{m}}1(\mcal{E}_{m})} \geq \parenthesis{1 - \frac{\nu}{M}}\P(\mcal{E}_{m}).
    \end{align}
    Unrolling Eq.~\eqref{eq:recursion}, we obtain $ \P(\mcal{E}_{M}) \geq \parenthesis{1 - \frac{\nu}{M}}^{M}\P(\mcal{E}_{0}) = \parenthesis{1 - \frac{\nu}{M}}^{M} \geq 1 - \nu $. 
    Now, on event $ \mcal{E}_{M} $, by Eq.~\eqref{eq:Km}, we also have
    \begin{align}
    \norm{K^{(M)} - K^{*}}_{F} & \leq (1 - \eta\mu)^{M/2}\norm{K^{(0)} - K^{*}}_{F} + \eta e_{\text{grad}}\sum_{m = 0}^{M-1}(1 - \eta \mu)^{m/2} \nonumber\\ & \leq (1 - \eta\mu)^{M/2}\frac{\delta}{3} + \eta e_{\text{grad}}\sum_{m = 0}^{\infty}(1 - \eta \mu)^{m/2}\label{eq:M a} \\ & \leq (1 - \eta \mu)^{M/2}\frac{\delta}{3} + \frac{2e_{\text{grad}}}{\mu}  \leq \sqrt{\frac{2\epsilon}{h}}. \label{eq:M}
    \end{align} 
    Here, Eq.~\eqref{eq:M a} holds since $\norm{K^{(0)} - K^{*}} \leq \delta/3$, and Eq.~\eqref{eq:M} follows from the assumptions that $ M \geq \frac{2}{\eta \mu}\log \parenthesis{\frac{\delta}{3}\sqrt{\frac{2h}{\epsilon}}} $ and $ e_{\text{grad}} \leq \frac{\mu}{2} \sqrt{\frac{\epsilon}{2h}} $. Finally, by the $ h $-smoothness of $ \mcal{C}(\cdot) $ again, we conclude that with probability at least $ 1 - \nu $, 
    \begin{align*}
    \mcal{C}(K^{(M)}) \leq \mcal{C}(K^{*}) + \frac{h}{2}\norm{K^{(M)} - K^{*}}_{F}^{2} \leq \mcal{C}(K^{*}) + \epsilon,
    \end{align*}
    which finishes the proof. 
\end{proof}

{\color{black}
\section{Numerical Experiments}
In this section, we numerically evaluate the performance of our policy gradient method proposed in Section \ref{sec: algorithm} through extensive experiments. %
In particular, we  focus on addressing the following questions:
\begin{itemize}
    \item In practice, how fast does the policy gradient algorithm with {\it known model parameters} converge to the optimal solution? How sensitive is the policy gradient algorithm to the initialization?
    \item Does the policy gradient algorithm still converge when the Lipschitz continuity assumption in Theorem \ref{thm: conv of algo} is violated? How restrictive is the condition in practice?
\end{itemize} 
As we will see in this section, our policy gradient algorithm converges to the globally optimal solution and is robust to the magnitude of the nonlinear term and the policy initialization regimes.

\paragraph{Model and Parameter Setup}We experiment on (randomly generated) synthetic data. Specifically, we set $ n $ (the dimension of state), $ p $ (the dimension of control) and $ d $ (the dimension of kernel basis) to be $3$. The cost is set to be $ Q = R = I_{3 \times 3} $. The matrices $ A $, $ B $ and $ C $ are generated randomly, with each entry drawn from a standard Gaussian distribution. The model parameters are normalized such that the spectral radius is less than $ 1 $ with high probability. The kernel basis is fixed to be $ \phi(x) = \ell\sin(x) $, where the operations are understood as entrywise and $ \ell $ is the Lipschitz constant of the nonlinear term. The initial distribution $ \mathcal{D} $ of $ x_{0} $ is chosen as a standard Gaussian and $\norm{x_0}$ is rescaled to be 1. 

\paragraph{Evaluation}To study the convergence of the policy gradient method, we consider two different settings. For the first setting, we fix $ \ell = 1 $ and choose three initialization regimes. In Figure \ref{fig:pg-init}, $ K^{\rm lin} $ is computed by \eqref{eq:linear_initialization}--\eqref{eq:nonlinear_initialization}, where $ P $ is obtained by solving the ARE \eqref{eq:ARE} as $ A $, $ B $ and $ C $ are assumed to be known. Also, the random policy $ K^{\rm rand} $ is generated by drawing a matrix of size $ p \times (n+d) $ from the unit sphere (in 2-norm) uniformly at random. The gradient estimate is constructed by Algorithm \ref{alg:policy gradient estimation} with parameters $ J = 300 $, $ T = 10 $ and $ r = 0.6 $. To measure the performance of each policy, we empirically evaluate the cost function by sampling $ J $ trajectories with the same $ T $ in each trajectory. We perform the gradient descent step for 200 iterations and choose the step size $ \eta = 10^{-4} $. In the second experiment, the initial policy is fixed to be $ K^{(0)}=K^{\rm lin} $. We vary the Lipschitz constant $ \ell $ from $ 1 $ to $ 6 $ and report the cost across iterations in Figure \ref{fig:pg-ell}. {\color{black}Moreover, we demonstrate the robustness of our algorithm by varying the random seeds for model parameter generation (see Figure \ref{fig:robust}).  In this experiment, the model parameters $A$, $B$ and $C$ are randomly generated, while the Lipschitz constant is fixed as $\ell = 3$ and the initial policy is set to be $K^{\rm lin}$. }

\begin{figure}[H]
    \centering
    \begin{subfigure}[H]{0.49\linewidth}
        \centering
        \includegraphics[width=\linewidth]{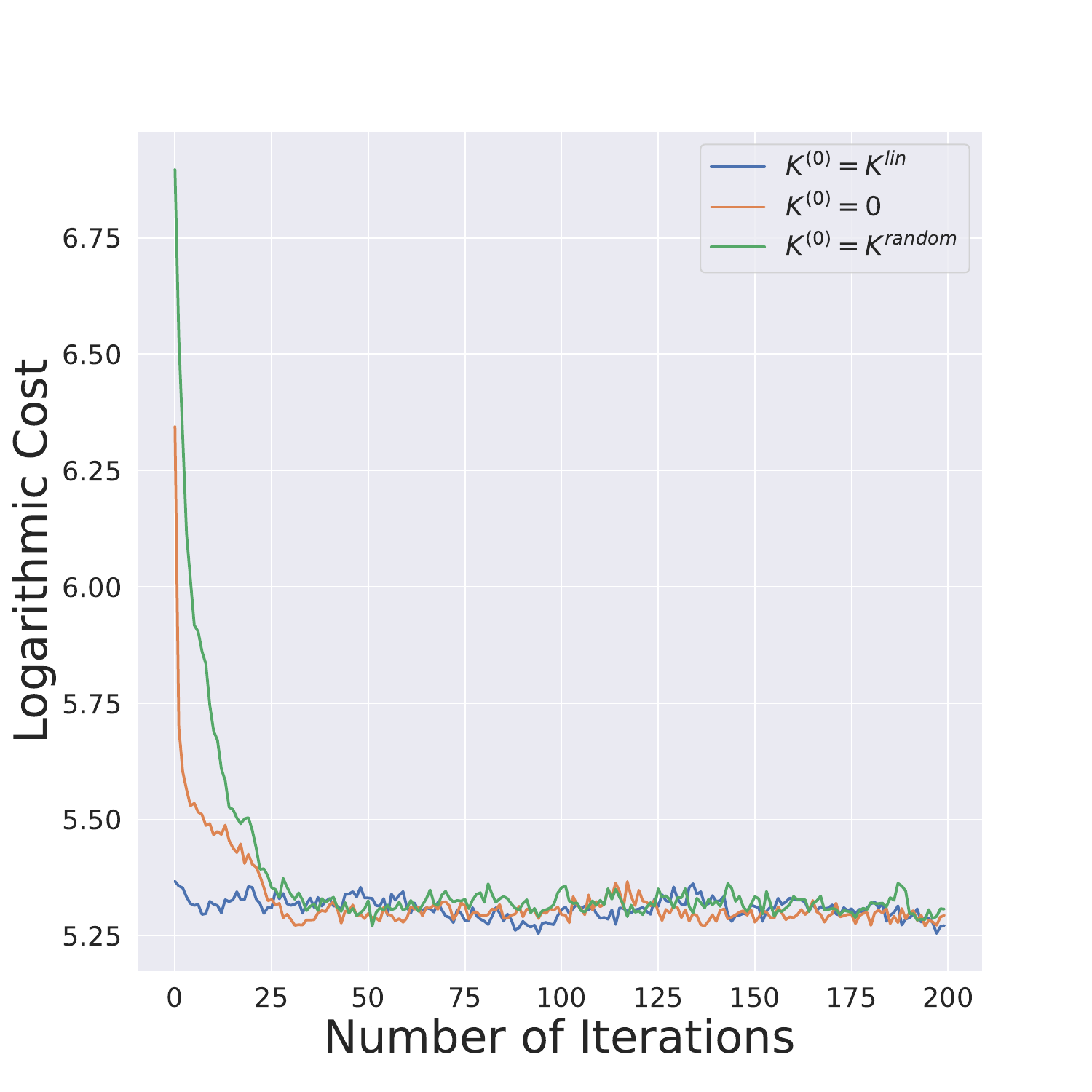}
        \caption{Impact of initialization}
        \label{fig:pg-init}
    \end{subfigure}
    \hfill
    \begin{subfigure}[H]{0.49\linewidth}
        \centering
        \includegraphics[width=\linewidth]{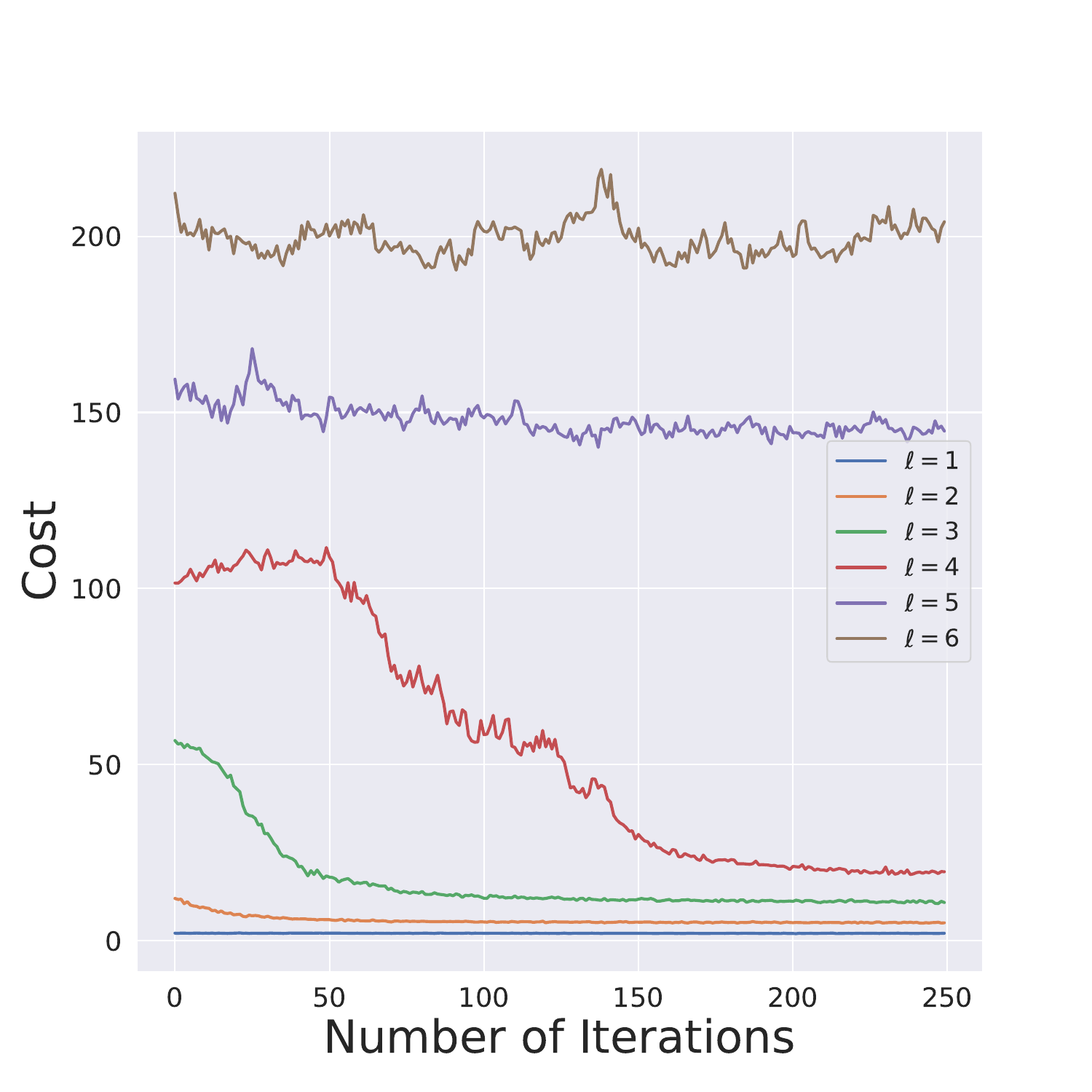}
        \caption{Impact of Lipschitz constants}
        \label{fig:pg-ell}
    \end{subfigure}
    \caption{Convergence of the policy gradient algorithm}
\end{figure}

\paragraph{Discussion}In Figure \ref{fig:pg-init}, we observe that the policy gradient algorithm converges under all three initialization regimes, with promising accuracy achieved within around 50 iterations.  This indicates that the algorithm is relatively stable with small fluctuations and is consistent with the linear convergence rate demonstrated in the theoretical part. We also observe that the initial value obtained by the policy $ K^{\rm lin} $ is comparably close to its convergent value. Such a phenomenon implies that $ K^{\rm lin} $ is close to the optimal solution $ K^{*} $ as expected. 

In Figure \ref{fig:pg-ell}, we observe that the policy gradient algorithm converges when $ \ell \leq 4 $ and the method does not converge for $ \ell \geq 5 $. {\color{black}Furthermore, Figure \ref{fig:robust} suggests the convergence of our policy gradient method under numerous model configurations regardless of the non-linear system dynamics. } Therefore, we conclude that the algorithm is robust within a certain magnitude of the nonlinear term, and extends to cases beyond the theoretical requirements.

}

\begin{figure}[H]
\centering
\includegraphics[width=0.6\linewidth]{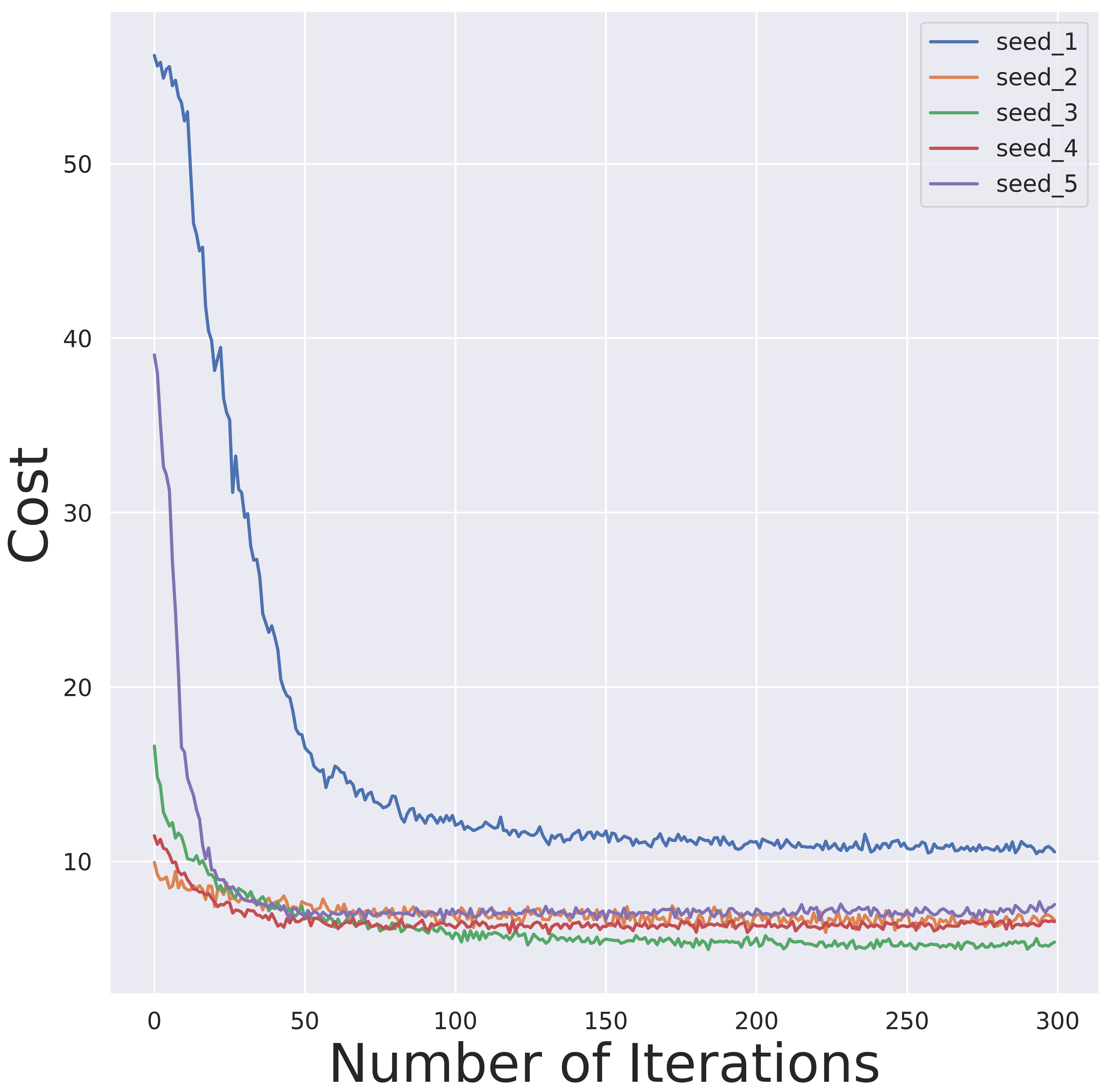}
\caption{Robustness of policy gradient algorithm.}
\label{fig:robust}
\end{figure}

\section{Proofs}\label{sec: proofs}
In this section, we prove several technical lemmas that are used in Section \ref{sec: main results}. 
\subsection{Proof of Proposition \ref{prop: inv of design matrix}}\label{sec: proof of matrix inv}
We start by presenting a standard non-asymptotic bound on the minimum singular value of a random matrix with sub-Gaussian rows. Denote $\lambda_{\min}(Y^\top Y)$ as the smallest singular value of a matrix $Y$. 
\begin{Lemma}[{\cite[Theorem 5.39]{vershynin2010introduction}}] \label{lemma: sub-gaussian rows}
    Let $ Y \in \R^{N \times k} $ be a matrix whose rows are independent sub-Gaussian isotropic random vectors in $ \R^{k} $. Then for every $ \nu \in (0, 1)$, with probability at least $ 1 - \nu $, one has 
    \begin{align*}
        \sqrt{\lambda_{\min}(Y^{\top}Y)} \geq \sqrt{N} - d_1\sqrt{k} - \sqrt{\frac{1}{d_2}\log\frac{2}{\nu}}.
    \end{align*}
    Here $ d_{1}, d_{2} $ are absolute constants that only depend on the sub-Gaussian norm of the rows. In particular, $ d_{1} = 1, d_{2} = 1/2 $ if $Y$ has i.i.d. $\mathcal{N}(0, 1)$ entries.
\end{Lemma}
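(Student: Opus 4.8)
Since Lemma~\ref{lemma: sub-gaussian rows} is the classical non-asymptotic lower bound on the smallest singular value of a matrix with independent sub-Gaussian isotropic rows, the plan is to reproduce the standard covering-number proof. Writing $\sqrt{\lambda_{\min}(Y^{\top}Y)} = \min_{x \in S^{k-1}}\norm{Yx}_2$, the goal is a lower bound on $\norm{Yx}_2$ that is uniform over the unit sphere $S^{k-1}\subset\R^{k}$. It is convenient to reformulate this as an operator-norm control on the symmetric matrix $A := \tfrac1N Y^{\top}Y - I_{k}$: indeed $\lambda_{\min}(Y^{\top}Y) \geq N(1 - \norm{A})$, so it suffices to show $\norm{A} \leq \delta$ for a suitably small $\delta$. (We may assume $N \gtrsim k$, since otherwise the right-hand side of the claimed bound is non-positive and there is nothing to prove.)

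First I would establish fixed-vector concentration. For a fixed $x \in S^{k-1}$ the scalars $\xi_{i} := \langle Y_{i}, x\rangle$ are independent; isotropy gives $\E\xi_{i}^{2} = 1$, and the sub-Gaussian assumption on the rows transfers to $\xi_{i}$ with a comparable sub-Gaussian norm. Hence $\xi_{i}^{2} - 1$ are independent, centered, sub-exponential random variables, and Bernstein's inequality yields
\begin{align*}
\mathbb{P}\!\left(\left| \tfrac1N\norm{Yx}_2^{2} - 1 \right| \geq t \right) \leq 2\exp\!\left(-cN\min\{t, t^{2}\}\right),
\end{align*}
where $c > 0$ depends only on the sub-Gaussian norm of the rows. (In the special case of i.i.d.\ $\mathcal N(0,1)$ entries this step can instead use the exact $\chi^{2}$-tail, or be deferred entirely to the Gaussian argument below, which is what produces the sharp constants $d_{1} = 1$, $d_{2} = 1/2$.)

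Next comes the net argument. For a $\tfrac14$-net $\mathcal N$ of $S^{k-1}$ one has the standard bound $\norm{A} \leq 2\max_{y\in\mathcal N}|\langle Ay, y\rangle|$ for symmetric $A$, with $|\mathcal N| \leq 9^{k}$. Since $\langle Ay, y\rangle = \tfrac1N\sum_{i}(\langle Y_{i}, y\rangle^{2} - 1)$, the fixed-vector estimate applies to each $y \in \mathcal N$, and a union bound shows that $\max_{y\in\mathcal N}|\langle Ay,y\rangle| \leq t$ with probability at least $1 - \nu$ as soon as $cN\min\{t,t^{2}\} \geq k\log 9 + \log(2/\nu)$. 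Working in the sub-Gaussian (i.e.\ $t^{2}$) regime, this holds for $t \asymp \sqrt{(k + \log(2/\nu))/N}$, giving $\norm{A} \leq 2t =: \delta$. Finally, $\sqrt{\lambda_{\min}(Y^{\top}Y)} \geq \sqrt{N}\sqrt{1-\delta} \geq \sqrt{N}(1-\delta) = \sqrt{N} - \delta\sqrt{N} \geq \sqrt{N} - C\sqrt{k} - C\sqrt{\log(2/\nu)}$ for an absolute constant $C$, which is the asserted inequality with $d_{1}, d_{2}$ read off from $c$.

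For the sharp-constant Gaussian case I would instead invoke Gordon's Gaussian comparison (min-max) inequality, which gives $\E\min_{x\in S^{k-1}}\norm{Yx}_{2} \geq \sqrt{N} - \sqrt{k}$, hence $d_{1} = 1$; since $Y \mapsto \min_{x}\norm{Yx}_{2}$ is $1$-Lipschitz in the entries of $Y$, the Gaussian concentration inequality then supplies the deviation term $\sqrt{2\log(2/\nu)}$, i.e.\ $d_{2} = 1/2$. The only genuinely delicate point is the quantitative bookkeeping in the covering step: one must net the \emph{quadratic form} $\langle Ay,y\rangle$ rather than crudely approximate $\norm{Yx}_2$ by $\norm{Yy}_2$ via the triangle inequality, since the latter leaks an uncontrolled $\norm{Y}$ term and would spoil the additive form $\sqrt{N} - d_{1}\sqrt{k} - \sqrt{d_{2}^{-1}\log(2/\nu)}$; choosing the net radius ($\tfrac14$) and the Bernstein regime correctly is exactly what makes the $\sqrt{k}$ and $\sqrt{\log(2/\nu)}$ terms come out additively.
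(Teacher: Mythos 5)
The paper does not prove this lemma at all: it is imported verbatim as \cite[Theorem 5.39]{vershynin2010introduction}, so the only ``proof'' on record is the citation. Your argument is correct and is essentially the proof given in that cited reference --- the reduction to $\|\tfrac1N Y^{\top}Y - I_k\|$, fixed-vector Bernstein concentration for sub-exponential $\langle Y_i,x\rangle^2-1$, a $\tfrac14$-net union bound on the quadratic form, and Gordon's comparison plus Gaussian concentration for the sharp constants $d_1=1$, $d_2=1/2$ --- so there is nothing to reconcile with the paper beyond noting that you have supplied the standard proof it chose to omit.
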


The next result shows that the second-moment matrix of the row vectors of $ \Phi_{N}^{\top}\Phi_{N} $ is invertible. Recall $ \Sigma = \E\bracket{\varphi^{(i)}\parenthesis{\varphi^{(i)}}^{\top}} $ with $ \varphi^{(i)} = \varphi(x_{0}^{(i)}, u_{0}^{(i)}) $.

\begin{Lemma}\label{lemma: inv of sed mom matrix}
    Assume Assumption \ref{ass:initial_distr} holds. If the random vector $ u_{0}^{(i)} \in \R^{p} $ is such that $ 0 < \P\parenthesis{\abs{w^{\top}u_{0}^{(i)}} > 0} < 1 $ for all $ w \neq 0 $, then the matrix $ \Sigma $ is invertible.
\end{Lemma}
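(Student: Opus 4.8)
## Proof Plan for Lemma \ref{lemma: inv of sed mom matrix}

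The plan is to show that $\Sigma = \E[\varphi^{(i)}(\varphi^{(i)})^{\top}]$ is positive definite by proving $v^{\top}\Sigma v > 0$ for every nonzero $v \in \R^{n+p+d}$. Since $\Sigma$ is automatically positive semidefinite as a second-moment matrix, it suffices to rule out the existence of a nonzero $v$ with $v^{\top}\Sigma v = 0$, i.e., with $v^{\top}\varphi^{(i)} = 0$ almost surely. First I would decompose $v = (v_1^{\top}, v_2^{\top}, v_3^{\top})^{\top}$ conformally with the block structure $\varphi(x_0,u_0) = (x_0^{\top}, u_0^{\top}, \phi(x_0)^{\top})^{\top}$, so that $v^{\top}\varphi^{(i)} = v_1^{\top}x_0^{(i)} + v_2^{\top}u_0^{(i)} + v_3^{\top}\phi(x_0^{(i)})$. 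The key observation is that $x_0^{(i)}$ and $u_0^{(i)}$ are drawn independently, so I can condition on $x_0^{(i)}$ and exploit the independence of $u_0^{(i)}$.

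The core of the argument is the following. Suppose $v^{\top}\varphi^{(i)} = 0$ almost surely. Condition on the event $\{x_0^{(i)} = x\}$ (or integrate against the law of $x_0$): then $v_1^{\top}x + v_3^{\top}\phi(x) + v_2^{\top}u_0^{(i)} = 0$ almost surely in $u_0^{(i)}$. If $v_2 \neq 0$, then by the hypothesis $0 < \P(|w^{\top}u_0^{(i)}| > 0) $ applied with $w = v_2$, the random variable $v_2^{\top}u_0^{(i)}$ is nonzero with positive probability, while $v_1^{\top}x + v_3^{\top}\phi(x)$ is a constant given $x$; hence $v_2^{\top}u_0^{(i)}$ would have to equal a fixed constant almost surely, which combined with $\P(|v_2^{\top}u_0^{(i)}|>0)>0$ and the symmetry-type structure (or simply using that the constant must be $0$ on the atom $u_0^{(i)}=0$, which occurs with positive probability under the mixture in line \ref{line: dist u_0}) forces a contradiction. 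Therefore $v_2 = 0$. With $v_2 = 0$, we are left with $v_1^{\top}x_0^{(i)} + v_3^{\top}\phi(x_0^{(i)}) = 0$ almost surely, i.e., $\tilde v^{\top}\psi(x_0^{(i)}) = 0$ a.s. where $\tilde v = (v_1^{\top}, v_3^{\top})^{\top}$ and $\psi(x_0) = (x_0^{\top},\phi(x_0)^{\top})^{\top}$. But Assumption \ref{ass:initial_distr} states $\E[\psi(x_0)\psi(x_0)^{\top}] \succeq \sigma_x I \succ 0$, so $\tilde v^{\top}\E[\psi(x_0)\psi(x_0)^{\top}]\tilde v \geq \sigma_x\|\tilde v\|^2 > 0$ unless $\tilde v = 0$; this contradicts $\tilde v^{\top}\psi(x_0^{(i)}) = 0$ a.s. Hence $v_1 = 0$ and $v_3 = 0$ as well, so $v = 0$, establishing invertibility of $\Sigma$.

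The step I expect to require the most care is the handling of the $v_2$ component: the hypothesis as stated, $0 < \P(|w^{\top}u_0^{(i)}| > 0) < 1$, gives both that $u_0^{(i)}$ is nondegenerate in every direction and that it has an atom (indeed, under the specific mixture distribution in Algorithm \ref{alg:init estimation}, $u_0^{(i)} = 0$ with probability $1/2$). I would use the atom at $0$ to pin the constant: on $\{u_0^{(i)} = 0\}$ (positive probability) the relation forces $v_1^{\top}x + v_3^{\top}\phi(x) = 0$, and then on a set of positive probability where $v_2^{\top}u_0^{(i)} \neq 0$ the relation is violated — giving the contradiction cleanly. An alternative, fully general route avoiding the atom is to use the law of total variance: $0 = \operatorname{Var}(v^{\top}\varphi^{(i)} \mid x_0^{(i)}) = \operatorname{Var}(v_2^{\top}u_0^{(i)})$ by independence, which is positive when $v_2 \neq 0$ under the stated hypothesis — this is the slicker phrasing and I would likely present it this way. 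Everything else is a direct appeal to Assumption \ref{ass:initial_distr} and elementary linear algebra.
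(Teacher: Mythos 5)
Your proof is correct and follows essentially the same route as the paper's: both reduce invertibility to positive definiteness, exploit the independence of $x_0^{(i)}$ and $u_0^{(i)}$ together with the two-sided condition $0<\P(|w^{\top}u_0^{(i)}|>0)<1$ to isolate the $u_0$-block, and then invoke Assumption \ref{ass:initial_distr} ($\E[\psi(x_0)\psi(x_0)^{\top}]\succeq \sigma_x I$) for the $(x_0,\phi(x_0))$-block. The only difference is organizational — you argue contrapositively and dispose of the $v_2$ component via the law of total variance (or the atom at $0$), whereas the paper argues directly by conditioning on the positive-probability event $\{s_2^{\top}u_0^{(i)}=0\}$ — which changes nothing of substance.
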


\begin{proof}
    Since $\Sigma$ is a symmetric matrix, it is equivalent to show that $ \Sigma $ is positive definite. 
    Let $ s = (s_{1}^{\top}, s_{2}^{\top}, s_{3}^{\top})^{\top} \neq 0 $. The matrix $ \Sigma $ is positive definite if and only if
    \begin{align*}
    s^{\top}\Sigma s = \E \bracket{\abs{s^{\top}\varphi^{(i)}}^{2}} > 0.
    \end{align*}
    This is equivalent to 
    \begin{align}\label{eq:pd}
    \P\parenthesis{\abs{s^{\top}\varphi^{(i)}} > 0} & = \P\parenthesis{\abs{s_{1}^{\top}x_{0}^{(i)} + s_{2}^{\top}u_{0}^{(i)} + s_{3}^{\top}\phi\parenthesis{x_{0}^{(i)}}} > 0} > 0.
    \end{align}
    We consider two cases of $ (s_{1}, s_{3}) $. If $ (s_{1}, s_{3}) \neq 0 $, by Assumption \ref{ass:initial_distr}, we have
    \begin{align}\label{eq:v1v3}
    \P\parenthesis{\abs{s_{1}^{\top}x_{0}^{(i)} + s_{3}^{\top}\phi\parenthesis{x_{0}^{(i)}}} > 0} > 0.
    \end{align} 
    Moreover, since $ x_{0}^{(i)} $ and $ u_{0}^{(i)} $ are independent, Eq.~\eqref{eq:pd} becomes
    \begin{align*}
        \P\parenthesis{\abs{s_{1}^{\top}x_{0}^{(i)} + s_{2}^{\top}u_{0}^{(i)} + s_{3}^{\top}\phi\parenthesis{x_{0}^{(i)}}} > 0} & \geq \P\parenthesis{\left. \abs{s_{1}^{\top}x_{0}^{(i)} + s_{2}^{\top}u_{0}^{(i)} + s_{3}^{\top}\phi\parenthesis{x_{0}^{(i)}}} > 0 \right\vert s_{2}^{\top}u_{0}^{(i)} = 0}\P(s_{2}^{\top}u_{0}^{(i)} = 0) \\ & = \P\parenthesis{\left. \abs{s_{1}^{\top}x_{0}^{(i)}  + s_{3}^{\top}\phi\parenthesis{x_{0}^{(i)}}} > 0 \right\vert s_{2}^{\top}u_{0}^{(i)} = 0}\P(s_{2}^{\top}u_{0}^{(i)} = 0) \\ & = \P\parenthesis{ \abs{s_{1}^{\top}x_{0}^{(i)}  + s_{3}^{\top}\phi\parenthesis{x_{0}^{(i)}}} > 0 }\parenthesis{1 - \P\parenthesis{\abs{s_{2}^{\top}u_{0}^{(i)}} > 0}}.
    \end{align*}
    By Eq.~\eqref{eq:v1v3} and the fact that $ \P\parenthesis{\abs{s_{2}^{\top}u_{0}^{(i)}} > 0} < 1 $ for all $ s_{2} $, Eq.~\eqref{eq:pd} holds when $ (s_{1}, s_{3}) \neq 0 $. Furthermore, if $ (s_{1}, s_{3}) = 0 $, Eq.~\eqref{eq:pd} simplifies to $ \P\parenthesis{\abs{s_{2}^{\top}u_{0}^{(i)}} > 0 } > 0 $ for all $ s_{2} \neq 0 $, which holds by definition of $ u_{0}^{(i)} $ in Algorithm \ref{alg:init estimation}. Therefore, we conclude that $ \Sigma $ is positive definite and thus invertible.
\end{proof}

\subsection{Proof of Theorem \ref{thm: landscape}}\label{sec: pf of local convexity}
We devote this subsection to the missing proofs of Theorem \ref{thm: landscape}. We begin by proving several auxiliary lemmas. 
Denote the value function and $ Q $ function conditioned on the initial position as 
\begin{eqnarray}
V_K(x) & = &  \mathbb{E}\left.\left[ \sum_{t=0}^{\infty}x_t^{\top} Q x_t + u_t^{\top}R u_t\right|x_0 = x,  u_t = - K_1 x_t - K_2 \phi(x_t)\right], \label{eq:cost-x0} \\ 
Q_K(x, u) & = & x^{\top}Qx + u^{\top}Ru + V_{K}(Ax + C\phi(x) + Bu). \label{eq:Q}
\end{eqnarray}

First, we provide a characterization of the value function below.
\begin{Lemma}[Value Function]\label{lemma:value_function}
The value function takes the form
\begin{eqnarray}
V_K(x) = x^{\top}P_{K_1} x + G_K(x),
\end{eqnarray}
where  $P_{K_1}$ satisfies Eq.~\eqref{eq:P} 
and $G_K(x)$ is defined as in Eq.~\eqref{eq:G_K}.
\end{Lemma}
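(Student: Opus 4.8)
\textbf{Proof plan for Lemma \ref{lemma:value_function}.}

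The plan is to verify directly that the proposed expression $V_K(x) = x^\top P_{K_1} x + G_K(x)$ satisfies the Bellman (fixed-point) equation for the value function under policy $K$, and then invoke uniqueness. First I would write down the Bellman consistency equation: for the policy $u_t = -K_1 x_t - K_2\phi(x_t)$, the one-step recursion gives
\begin{align}
V_K(x) = x^\top Q x + u^\top R u + V_K\big((A - BK_1)x + (C - BK_2)\phi(x)\big),
\end{align}
where $u = -K_1 x - K_2\phi(x)$, since the closed-loop next state is $x^+ = Ax + C\phi(x) + Bu = (A-BK_1)x + (C-BK_2)\phi(x)$. Expanding the running cost, $u^\top R u = x^\top K_1^\top R K_1 x + 2 x^\top K_1^\top R K_2 \phi(x) + \phi(x)^\top K_2^\top R K_2 \phi(x)$.

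Next I would substitute the ansatz $V_K(y) = y^\top P_{K_1} y + G_K(y)$ into the right-hand side with $y = (A-BK_1)x + (C-BK_2)\phi(x)$. The quadratic piece $y^\top P_{K_1} y$ expands into three groups: $x^\top (A-BK_1)^\top P_{K_1}(A-BK_1) x$, a cross term $2 x^\top (A-BK_1)^\top P_{K_1}(C-BK_2)\phi(x)$, and $\phi(x)^\top (C-BK_2)^\top P_{K_1}(C-BK_2)\phi(x)$. Collecting the pure-$x^\top(\cdot)x$ terms on the right and matching against $V_K(x)$'s quadratic part $x^\top P_{K_1} x$, the requirement is exactly
\begin{align}
P_{K_1} = Q + K_1^\top R K_1 + (A-BK_1)^\top P_{K_1}(A-BK_1),
\end{align}
which is Eq.~\eqref{eq:P}. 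Then the remaining terms — the $\phi\phi^\top$ and $\phi x^\top$ cross terms from both the running cost and the quadratic expansion, plus $G_K(y)$ — must reproduce $G_K(x)$. Here I would use that along the closed-loop trajectory $\{x_t\}$ started at $x$, one has $x_1 = (A-BK_1)x + (C-BK_2)\phi(x)$, so $G_K$ evaluated at the next state telescopes: comparing the definition \eqref{eq:G_K} of $G_K(x)$ (an infinite sum over $\phi(x_t)\phi(x_t)^\top$ and $\phi(x_t)x_t^\top$ starting at $t=0$) with the same sum starting at $t=1$ (which is $G_K(x_1)$), the difference is precisely the $t=0$ term, namely $\mathrm{Tr}\big((K_2^\top R K_2 + (C-BK_2)^\top P_{K_1}(C-BK_2))\phi(x)\phi(x)^\top\big) + 2\,\mathrm{Tr}\big((K_1^\top R K_2 + (A-BK_1)^\top P_{K_1}(C-BK_2))\phi(x)x^\top\big)$, which matches the leftover terms exactly. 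This establishes that the ansatz solves the Bellman equation.

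The main obstacle is not any single calculation but rather making the argument rigorous on two fronts: (i) well-posedness — one must check the infinite sum defining $G_K(x)$ converges, which follows from asymptotic stability of the closed loop on $\Omega$ (i.e. $\|x_t\| \le c\rho^t\|x_0\|$ via Lemma \ref{lemma:trajectory_stability}) together with the $\ell$-Lipschitz bound $\|\phi(x_t)\| \le \ell\|x_t\|$, so the summand decays geometrically; and (ii) uniqueness — one should argue that the value function is the unique solution to the Bellman equation in an appropriate class (e.g. functions with at most quadratic growth), so that verifying the ansatz suffices. Both (i) and (ii) are standard for stabilized linear-type systems, so I would state them using the stability estimates available from earlier lemmas and the definition of $\Omega$, and keep the bulk of the proof as the direct substitution-and-matching computation outlined above.
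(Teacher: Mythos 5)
Your proposal is correct, and the core algebra is identical to the paper's: expand the Bellman right-hand side at the closed-loop next state $x_1 = (A-BK_1)x + (C-BK_2)\phi(x)$, match the pure quadratic terms against the Lyapunov equation \eqref{eq:P}, and identify the leftover $\phi\phi^{\top}$ and $\phi x^{\top}$ trace terms with the $t=0$ term of the sum defining $G_K$. The difference is purely in the logical direction. The paper runs the argument constructively: it \emph{defines} $G_K(x) := V_K(x) - x^{\top}P_{K_1}x$, substitutes into the Bellman equation to obtain the one-step recursion $G_K(x) = (\text{$t=0$ trace terms}) + G_K(x_1)$, and then unrolls this recursion along the trajectory to arrive at the closed form \eqref{eq:G_K}. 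You instead take \eqref{eq:G_K} as given, verify that the ansatz satisfies the Bellman equation via the telescoping identity $G_K(x) - G_K(x_1) = (\text{$t=0$ term})$, and then appeal to uniqueness of the Bellman fixed point in a quadratic-growth class. What each buys: the paper's derivation never needs a uniqueness theorem (it starts from $V_K$ itself, defined as the infinite cost sum), but its unrolling step implicitly requires $G_K(x_t) \to 0$ along the trajectory; your verification route makes the convergence of the sum in \eqref{eq:G_K} explicit up front but trades the tail-vanishing condition for a uniqueness argument, which you correctly flag as the point needing justification. Both hinge on the same stability estimates ($\|x_t\| \le c\rho^t\|x_0\|$ from Lemma \ref{lemma:trajectory_stability} and the Lipschitz bound on $\phi$), so neither route is materially harder; your version just carries one extra standard-but-nontrivial step (uniqueness) that the paper's formulation sidesteps.
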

\begin{proof}
By the Bellman equation, the value function  satisfies,
\begin{eqnarray}
V_K(x) &=&  x^{\top}Q x + \Big(K_1 x + K_2 \phi(x)\Big)^{\top}R \Big(K_1 x + K_2 \phi(x)\Big) \nonumber \\ & & \qquad\qquad\qquad\qquad\qquad\qquad\qquad\qquad + V_K\Big(Ax - B(K_1 x + K_2 \phi(x)) + C\phi(x)\Big)\nonumber\\
&=&  x^{\top}(Q + K_1^{\top} R K_1) x + \phi(x)^{\top} K_2^{\top} R K_2 \phi(x) + 2 x^{\top} K_1^{\top} RK_2 \phi(x)\nonumber \\
&&\qquad\qquad\qquad\qquad\qquad\qquad\qquad\qquad+ V_K\Big( (A - BK_1) x + (C - BK_2)\phi(x)\Big).\label{eq:Bellman}
\end{eqnarray}
Define $G_K(x) = V_K(x) - x^{\top}P_{K_1}x$. Replacing $ V_{K}(x) $ by $ G_{K}(x) + x^{\top}P_{K_{1}}x $ on both sides of Eq.~\eqref{eq:Bellman}, we have
\begin{eqnarray*}
&& x^{\top}P_{K_1}x + G_K(x) \\ &=&  x^{\top}(Q + K_1^{\top} R K_1) x + \phi(x)^{\top} K_2^{\top} R K_2 \phi(x) + 2 x^{\top} K_1^{\top} RK_2 \phi(x)\\
&&+ \Big( (A - B K_1) x + (C-B K_2 )\phi(x)\Big)^{\top}P_{K_1} \Big( (A - B K_1) x + (C-B K_2)\phi(x)\Big)  + G_K(x_1),
\end{eqnarray*}
with $x_1 = (A -B K_1) x + (C-BK_2 )\phi(x)$. Since $P_{K_1}$ satisfies \eqref{eq:P}, we have
\begin{eqnarray}
G_K(x) &=& \phi(x)^{\top} \Big( K_2^{\top} R K_2 + (C-BK_2)^{\top}P_{K_1}(C-BK_2) \Big) \phi(x)  \nonumber\\
&&+ 2 x^{\top} \Big(K_1^{\top} R K_2+ (A-BK_1)^{\top}P_{K_1}(C-BK_2) \Big)  \phi(x) + G_K(x_1) \nonumber \\
&=& \Tr \Big( \Big( K_2^{\top} R K_2 + (C-BK_2)^{\top}P_{K_1}(C-BK_2) \Big) \phi(x)\phi(x)^{\top}\Big) \nonumber \\
&& + 2 \Tr \Big (\Big(K_1^{\top} R K_2+ (A-BK_1)^{\top}P_{K_1}(C-BK_2) \Big)\phi(x)x^{\top} \Big)+ G_K(x_1) \label{eq:tr} \\
&=&   \Tr \Big( \Big( K_2^{\top} R K_2 + (C-BK_2)^{\top}P_{K_1}(C-BK_2) \Big)\sum_{t=0}^{\infty} \phi(x_t)\phi(x_t)^{\top}\Big)\nonumber \\
&& + 2 \Tr \Big (\Big(K_1^{\top} R K_2+ (A-BK_1)^{\top}P_{K_1}(C-BK_2) \Big)\sum_{t=0}^{\infty}\phi(x_t)x_t^{\top} \Big).\label{eq:unrolling G}
\end{eqnarray}
Here, we have used the matrix trace property in Eq.~\eqref{eq:tr}. By unrolling the recursive relation \eqref{eq:tr}, we obtain Eq.~\eqref{eq:unrolling G}. Therefore, Eq.~\eqref{eq:G_K} holds.
\end{proof}

Recall that we have defined the coefficient matrix $ H = (A, C) $, and the feature map $ \psi(x) = (x^{\top}, \phi(x)^{\top})^{\top} $. Then the dynamics \eqref{eq:dynamics} can be written as
\begin{align}
x_{t+1} = (A - BK_{1})x_{t} + (C - BK_{2})\phi(x_{t}) = (H - BK)\psi(x_{t}).
\end{align}
Since $ \phi(x) $ is $ \ell $-Lipschitz by Assumption \ref{ass:feature}, we know that $ \psi(x) $ is also $ \ell_{\psi} $-Lipschitz with $ \ell_{\psi} := \sqrt{1 + \ell^{2}} $. The following lemma gives us the gradient of the cost function $\mathcal{C}(K)$.

\begin{Lemma}[Gradient of $ \mcal{C}(K) $]\label{lemma: grad of cost function}
    The gradient of $ \mcal{C}(K) $ satisfies
    \begin{align}
    \nabla_{K}\mcal{C}(K) & = 2E_{K}\Sigma_{K}^{\psi\psi} - B^{\top}\Sigma_{K}^{G\psi},
    \end{align}
    where $ E_{K}$, $ \Sigma_{K}^{\psi\psi} $ and $ \Sigma_{K}^{G\psi} $ are defined as in Eq.~\eqref{eq:def-Ek}.
\end{Lemma}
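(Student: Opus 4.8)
The plan is to obtain $\nabla_K\mcal{C}(K)$ from a \emph{performance-difference} (advantage) identity, which avoids differentiating the Riccati matrix $P_{K_1}$ and the series $G_K$ separately. Fix a stabilizing $K$ (so that $K$ lies in the interior of $\Omega$) and a direction $\Delta \in \R^{p\times(n+d)}$; for small $\epsilon$ let $K^{\epsilon} = K + \epsilon\Delta$, with closed-loop trajectory $\curly{x_t^{\epsilon}}$, $u_t^{\epsilon} = -K^{\epsilon}\psi(x_t^{\epsilon})$, started at $x_0\sim\mcal{D}$. Write the advantage $A_K(x,u) = Q_K(x,u) - V_K(x)$ with $Q_K$ as in \eqref{eq:Q}, and note that $A_K$ itself does \emph{not} depend on $\epsilon$. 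Telescoping $\sum_{t=0}^{T}\parenthesis{V_K(x_{t+1}^{\epsilon}) - V_K(x_t^{\epsilon})} = V_K(x_{T+1}^{\epsilon}) - V_K(x_0)$ and using that $K^{\epsilon}\in\Omega$ forces $\norm{x_{T+1}^{\epsilon}}\to 0$, hence $V_K(x_{T+1}^{\epsilon})\to 0$ by the quadratic growth of $V_K$, I would arrive at
\begin{align*}
\mcal{C}(K^{\epsilon}) = \E_{x_0}\bracket{V_K(x_0) + \sum_{t=0}^{\infty}A_K(x_t^{\epsilon}, u_t^{\epsilon})}.
\end{align*}

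The next step is to differentiate this at $\epsilon = 0$. The key structural fact is the Bellman identity underlying Lemma \ref{lemma:value_function}: $A_K(x, -K\psi(x)) \equiv 0$ in $x$, so differentiating in $x$ gives $\nabla_x A_K(x,u)\big|_{u = -K\psi(x)} + \parenthesis{-K\nabla\psi(x)}^{\top}\nabla_u A_K(x,u)\big|_{u = -K\psi(x)} = 0$, where $-K\nabla\psi(x)$ is the Jacobian of $x\mapsto -K\psi(x)$. Applying the chain rule to each term $A_K(x_t^{\epsilon}, u_t^{\epsilon})$, the pieces generated by the $\epsilon$-dependence of $x_t^{\epsilon}$ (and of $u_t^{\epsilon}$ through $x_t^{\epsilon}$) combine into exactly this vanishing expression, so only the explicit $\epsilon$-dependence of $u_t^{\epsilon} = -K^{\epsilon}\psi(x_t^{\epsilon})$ survives, namely $\tfrac{d}{d\epsilon}u_t^{\epsilon}|_{0}$ contributes $-\Delta\psi(x_t)$. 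This gives
\begin{align*}
\frac{d}{d\epsilon}\mcal{C}(K^{\epsilon})\Big|_{\epsilon = 0} = -\E\bracket{\sum_{t=0}^{\infty}\Tr\parenthesis{\Delta^{\top}\,\nabla_u A_K(x_t, u_t)\,\psi(x_t)^{\top}}},
\end{align*}
that is, $\nabla_K\mcal{C}(K) = -\E\bracket{\sum_{t=0}^{\infty}\nabla_u A_K(x_t, u_t)\,\psi(x_t)^{\top}}$.

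It remains to evaluate $\nabla_u A_K$ along the trajectory. From $A_K(x,u) = x^{\top}Qx + u^{\top}Ru + V_K(Ax + C\phi(x) + Bu) - V_K(x)$ together with $\nabla V_K(y) = 2P_{K_1}y + \nabla G_K(y)$ (Lemma \ref{lemma:value_function}), one gets $\nabla_u A_K(x,u) = 2Ru + 2B^{\top}P_{K_1}(Ax + C\phi(x) + Bu) + B^{\top}\nabla G_K(Ax + C\phi(x) + Bu)$. Substituting $u = u_t = -K\psi(x_t)$ and $Ax_t + C\phi(x_t) + Bu_t = (H - BK)\psi(x_t) = x_{t+1}$ collapses this to $-2E_K\psi(x_t) + B^{\top}\nabla G_K(x_{t+1})$ with $E_K = RK - B^{\top}P_{K_1}(H - BK)$ as in \eqref{eq:def-Ek}. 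Plugging back and recalling the definitions of $\Sigma_K^{\psi\psi}$ and $\Sigma_K^{G\psi}$ in \eqref{eq:def-Ek} yields
\begin{align*}
\nabla_K\mcal{C}(K) = 2E_K\,\E\bracket{\sum_{t=0}^{\infty}\psi(x_t)\psi(x_t)^{\top}} - B^{\top}\E\bracket{\sum_{t=0}^{\infty}\nabla G_K(x_{t+1})\psi(x_t)^{\top}} = 2E_K\Sigma_K^{\psi\psi} - B^{\top}\Sigma_K^{G\psi}.
\end{align*}

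The main difficulty is analytic rather than algebraic: justifying differentiability of $\mcal{C}$ at $K$ and the interchange of $\tfrac{d}{d\epsilon}$ with the infinite sum and the expectation. This rests on (i) a geometric bound $\norm{x_t^{\epsilon}}\lesssim\rho^{t}\norm{x_0}$ uniform for small $\epsilon$, which follows from the definition of $\Omega$, the $\ell$-Lipschitzness of $\phi$ (Assumption \ref{ass:feature}), and a perturbation argument of the kind used in Eqs.~\eqref{eqn:y_t a}--\eqref{eq: y_t c}; (ii) differentiability of $G_K$, with $\nabla G_K$ obtained by term-by-term differentiation of the absolutely convergent series \eqref{eq:G_K} (legitimate because $\phi$ and $\nabla\phi$ are Lipschitz and the iterates decay geometrically), together with the standard smooth dependence of $P_{K_1}$ on $K_1$; and (iii) the boundedness $\norm{x_0}\leq D_0$ from Assumption \ref{ass:initial_distr}, which supplies a dominating function for the $x_0$-expectation. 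An equivalent reorganization that sidesteps the $\epsilon$-calculus is to read the gradient off the first-order term of the cost-difference identity used in the proof of Theorem \ref{thm: landscape} (the opening display there): the quadratic-in-$(K'-K)$ term is negligible, the term $2\Tr\parenthesis{(K'-K)^{\top}E_K\Sigma_{K'}^{\psi\psi}}$ contributes $2E_K\Sigma_K^{\psi\psi}$, and the first-order expansion of $G_K((H - BK')\psi(x_t')) - G_K((H - BK)\psi(x_t'))$ in $K' - K$ contributes $-B^{\top}\Sigma_K^{G\psi}$.
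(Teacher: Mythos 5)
Your derivation is correct, but it is organized differently from the paper's. The paper proves this lemma by differentiating the Bellman recursion $V_K(x) = x^{\top}Qx + (K\psi(x))^{\top}RK\psi(x) + V_K((H-BK)\psi(x))$ directly in $K$: the total derivative of the last term splits into a ``same-functional'' piece $\nabla_K V(x_1)$ and a trajectory-sensitivity piece $(\partial x_1/\partial K)^{\top}\nabla_x V_K(x_1) = -2B^{\top}P_{K_1}x_1\psi(x)^{\top} - B^{\top}\nabla G_K(x_1)\psi(x)^{\top}$, and unrolling the resulting recursion for $\nabla_K V_K$ yields $2E_K\sum_t\psi(x_t)\psi(x_t)^{\top} - B^{\top}\sum_t\nabla G_K(x_{t+1})\psi(x_t)^{\top}$. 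You instead differentiate the performance-difference identity $\mcal{C}(K^{\epsilon}) - \mcal{C}(K) = \E\bracket{\sum_t A_K(x_t^{\epsilon},u_t^{\epsilon})}$ at $\epsilon=0$ and invoke the on-policy identity $A_K(x,-K\psi(x))\equiv 0$ to cancel all terms generated by the $\epsilon$-dependence of the state trajectory, leaving only $-\E\bracket{\sum_t\nabla_u A_K(x_t,u_t)\psi(x_t)^{\top}}\Delta$-type contributions; evaluating $\nabla_u A_K$ via $\nabla V_K(y) = 2P_{K_1}y+\nabla G_K(y)$ then gives the same formula. The two routes are dual: your cancellation via the vanishing on-policy advantage is exactly what makes the paper's recursion close on itself, and both require the same analytic inputs (differentiability of $G_K$, geometric decay of the closed loop, dominated convergence for the infinite sum and the $x_0$-expectation, and $K$ in the interior of $\Omega$ so that perturbed policies remain stabilizing). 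What your version buys is transparency --- it exhibits the result as the deterministic policy gradient theorem and makes clear why only the explicit perturbation of the policy matters; what it costs is a dependence on the advantage-decomposition identity, which the paper only establishes afterwards (in Lemma \ref{lemma: cost difference}, citing \cite{fazel2018global}), though your telescoping derivation of that identity is self-contained, so there is no circularity. One point worth stating explicitly if you write this up: the interchange of $d/d\epsilon$ with the infinite sum needs the uniform-in-$\epsilon$ geometric bound on $\norm{x_t^{\epsilon}}$ \emph{and} on $\norm{dx_t^{\epsilon}/d\epsilon}$ (the latter is supplied by an argument like Lemma \ref{lemma:directional derivative of state}), which you gesture at but should invoke concretely.
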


\begin{proof}
  Recall the Bellman equation
  \begin{align}\label{eq:bellman-M}
  V_{K}(x) = x^{\top}Qx + (K\psi(x))^{\top}RK\psi(x) + V_{K}((H - BK)\psi(x)).
  \end{align}
  Taking gradient in $ K $ on both sides of Eq.~\eqref{eq:bellman-M}, we have
  \begin{align}\label{eq:grad V}
  \nabla_{K}V_{K}(x) & = 2RK\psi(x)\psi(x)^{\top} + \nabla_{K}V(x_{1}) + \parenthesis{\frac{\partial x_{1}}{\partial K}}^{\top}\nabla_{x}V_{K}(x_{1}),
  \end{align}
  where $ \nabla_{K}V(x_{1}) = \left.\frac{\partial V_{K}(x_{1})}{\partial K}\right\vert_{x_{1} = (H - BK)\psi(x)} $. Note the directional derivative of $ x_{1} $ in $ K $ along the direction  $ \Delta $ is $ x_{1}'\bracket{\Delta} = -B\Delta \psi(x) $. Since $ \nabla_{x}V_{K}(x) = 2P_{K_{1}}x + \nabla G(x) $, we have
  \begin{align*}
  x_{1}'\bracket{\Delta}^{\top}\nabla_{x}V_{K}(x_{1}) & = -\psi(x)^{\top}\Delta^{\top}B^{\top}\parenthesis{2P_{K_{1}}x_{1} + \nabla G_{K}(x_{1})} \\ & = \Tr \parenthesis{\Delta^{\top}\parenthesis{-2B^{\top}P_{K_{1}}x_{1}\psi(x)^{\top} - B^{\top}\nabla G_{K}(x_{1})\psi(x)^{\top}}}.
  \end{align*}
  Since $ x_{1} = (H - BK)\psi(x) $, it follows that
  \begin{align}
    \parenthesis{\frac{\partial x_{1}}{\partial K}}^{\top}\nabla_{x}V_{K}(x_{1}) & = -2B^{\top}P_{K_{1}}x_{1}\psi(x)^{\top} - B^{\top}\nabla G_{K}(x_{1})\psi(x)^{\top} \nonumber \\ & = - 2B^{\top}P_{K_{1}}(H- BK)\psi(x)\psi(x)^{\top} - B^{\top}\nabla G_{K}(x_{1})\psi(x)^{\top}. \label{eq:direction}
  \end{align}
  Substituting Eq.~\eqref{eq:direction} back into Eq.~\eqref{eq:grad V}, we obtain
  \begin{align*}
  \nabla_{K}V_{K}(x) & = 2RK\psi(x)\psi(x)^{\top}  - 2B^{\top}P_{K_{1}}(H- BK)\psi(x)\psi(x)^{\top} - B^{\top}\nabla G_{K}(x_{1})\psi(x)^{\top} + \nabla_{K}V(x_{1}) \\ & = \parenthesis{2RK - 2B^{\top}P_{K_{1}}(H- BK)}\psi(x)\psi(x)^{\top} - B^{\top}\nabla G_{K}(x_{1})\psi(x)^{\top} + \nabla_{K}V(x_{1}).
  \end{align*}
  Unrolling this recursive relation and apply the definition of $ E_{K} $ in \eqref{eq:def-Ek}, we conclude that
  \begin{align*}
    \nabla_{K}V_{K}(x) = 2E_{K}\sum_{t = 0}^{\infty}\psi(x_{t})\psi(x_{t})^{\top}  - B^{\top}\sum_{t = 0}^{\infty}\nabla G_{K}(x_{t+1})\psi(x_{t})^{\top}.
  \end{align*}
  Take expectation w.r.t. $ x_{0} = x $ and then we finish the proof. 
  \end{proof}

  With Lemma \ref{lemma:value_function} and Lemma \ref{lemma: grad of cost function}, we provide a formula for $\mcal{C}(K') - \mcal{C}(K)$ in the following lemma.

  \begin{Lemma}[Cost Difference Lemma]\label{lemma: cost difference}
      For $ K = (K_{1}, K_{2}) $ and $ K' = (K_{1}', K_{2}') $, we have
      \begin{align*}
      \mcal{C}(K') -\mcal{C}(K) & = \Tr\parenthesis{(K' - K)^{\top}(R + B^{\top}P_{K_{1}}B)(K' - K)\Sigma_{K'}^{\psi\psi}} + 2\Tr\parenthesis{(K' - K)^{\top}E_{K}\Sigma_{K'}^{\psi\psi}} \nonumber \\ & \qquad +  \E\bracket{\sum_{t = 0}^{\infty}\bracket{G_{K}((H - BK')\psi(x_{t}')) - G_{K}((H - BK)\psi(x_{t}'))}}.
      \end{align*}
  \end{Lemma}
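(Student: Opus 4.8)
The plan is to run the standard ``cost difference'' (performance difference) telescoping argument, carried along the trajectory $\{x_t'\}_{t\ge 0}$ generated by the policy $K'$ from $x_0' = x_0 \sim \mcal{D}$. Write $u_t' = -K'\psi(x_t')$ and $c_t' = (x_t')^\top Q x_t' + (u_t')^\top R u_t'$. Using $\mcal{C}(K) = \E_{x_0\sim\mcal{D}}[V_K(x_0)]$ (from \eqref{eq:cost-x0}) together with the telescoping identity $-V_K(x_0') = \sum_{t=0}^\infty \big(V_K(x_{t+1}') - V_K(x_t')\big)$ and the dynamics $x_{t+1}' = Ax_t' + C\phi(x_t') + Bu_t'$, I would first obtain
\[
\mcal{C}(K') - \mcal{C}(K) = \E\Big[\sum_{t=0}^\infty \big(c_t' + V_K(x_{t+1}') - V_K(x_t')\big)\Big] = \E\Big[\sum_{t=0}^\infty \big(Q_K(x_t', u_t') - V_K(x_t')\big)\Big],
\]
where the second equality is merely the definition of $Q_K$ in \eqref{eq:Q}.

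The next step is to evaluate the per-step ``advantage'' $Q_K(x, -K'\psi(x)) - V_K(x)$ in closed form. Since $V_K(x) = Q_K(x, -K\psi(x))$ and $Ax + C\phi(x) + Bu = H\psi(x) + Bu$, setting $\Delta = K'-K$ gives $(H - BK')\psi(x) = (H-BK)\psi(x) - B\Delta\psi(x)$. Plugging $V_K(y) = y^\top P_{K_1}y + G_K(y)$ from Lemma \ref{lemma:value_function} and expanding the quadratic form in $P_{K_1}$ and the $R$-term, the zeroth-order pieces cancel and one is left with
\[
Q_K(x, -K'\psi(x)) - V_K(x) = \psi(x)^\top \Delta^\top (R + B^\top P_{K_1}B)\Delta\psi(x) + 2\,\psi(x)^\top \big(RK - B^\top P_{K_1}(H-BK)\big)^\top \Delta\psi(x) + G_K\big((H-BK')\psi(x)\big) - G_K\big((H-BK)\psi(x)\big).
\]
Recognizing $RK - B^\top P_{K_1}(H-BK) = E_K$ and rewriting the two quadratic forms as $\Tr\big(\Delta^\top(R+B^\top P_{K_1}B)\Delta\,\psi(x)\psi(x)^\top\big)$ and $\Tr\big(\Delta^\top E_K\,\psi(x)\psi(x)^\top\big)$ via cyclicity of the trace, then summing over $t$ along $\{x_t'\}$ and using $\Sigma_{K'}^{\psi\psi} = \E\big[\sum_{t\ge0}\psi(x_t')\psi(x_t')^\top\big]$, one arrives at precisely the claimed identity.

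The algebra is routine; the only genuine obstacle is the analytic bookkeeping needed to legitimize the two manipulations: (i) the telescoping step requires $\lim_{T\to\infty}\E[V_K(x_{T+1}')] = 0$, and (ii) interchanging $\E$ with $\sum_t$ requires an integrable dominating sequence. Both follow from the stability available to $K, K'$ in $\Omega$ (resp.\ $\Lambda(\delta)$): the $K'$-trajectory decays geometrically, $\|x_t'\| \le c\,\rho^t\|x_0\|$ for some $\rho\in(0,1)$, while $V_K$, $G_K$, and $\psi$ grow at most quadratically on the bounded support of $\mcal{D}$ (using an operator-norm bound on $P_{K_1}$ and the $\ell$-Lipschitzness of $\phi$ with $\phi(0)=0$). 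Hence each summand in $t$ is bounded by a constant times $\rho^{2t}D_0^2$ uniformly over $x_0$, so dominated convergence and Tonelli apply and complete the proof.
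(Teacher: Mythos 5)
Your proposal is correct and follows essentially the same route as the paper: reduce $\mcal{C}(K')-\mcal{C}(K)$ to a sum of advantages $Q_K(x_t',u_t')-V_K(x_t')$ along the $K'$-trajectory, then expand the advantage using $V_K(y)=y^\top P_{K_1}y+G_K(y)$ to produce the quadratic term, the $E_K$ cross term, and the $G_K$ difference. The only difference is cosmetic: the paper invokes \cite[Lemma 10]{fazel2018global} for the advantage decomposition, whereas you derive it by telescoping and additionally justify the limit $\E[V_K(x_{T+1}')]\to 0$ and the expectation--sum interchange via the geometric decay of the trajectory, which the paper leaves implicit.
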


  \begin{proof}
    By \cite[Lemma 10]{fazel2018global}, we have
    \begin{align}\label{eq:cost diff A}
    V_{K'}(x) - V_{K}(x) = \sum_{t = 0}^{\infty}A_{K}(x_{t}', u_{t}'),
    \end{align}
    where $ \curly{x_{t}'} $ is the trajectory generated by $ x_{0}' = x $ and $ u_{t}' = - K'\psi(x_{t}') $, and $ A_{K}(x, u) = Q_{K}(x, u) - V_{K}(x) $ is the advantage function. 
    
    For given $ u = -K'\psi(x) $, by definition \eqref{eq:cost-x0} and \eqref{eq:Q}, we have
    \begin{align}
    A_{K}(x, u) & = Q_{K}(x, u) - V_{K}(x) \nonumber \\ & = x^{\top}Qx + (K'\psi(x))^{\top}R(K'\psi(x)) + V_{K}((H - BK')\psi(x)) -  V_{K}(x) \nonumber\\ & = (K'\psi(x))^{\top}R(K'\psi(x)) - (K\psi(x))^{\top}R(K\psi(x)) \nonumber\\ & \qquad + V_{K}((H - BK')\psi(x)) - V_{K}((H - BK)\psi(x)) \nonumber\\ & = \psi(x)^{\top}(K' - K)^{\top}R(K' - K)\psi(x) + 2\psi(x)^{\top}(K' - K)^{\top}RK\psi(x) \nonumber\\ & \qquad + V_{K}((H - BK')\psi(x)) - V_{K}((H - BK)\psi(x)). \label{eq:diff V}
    \end{align}
    We next compute the last two terms in Eq.~\eqref{eq:diff V}. By Lemma \ref{lemma:value_function}, we notice
    \begin{align*}
        & V_{K}((H - BK')\psi(x))- V_{K}((H - BK)\psi(x))  \\ &  = \parenthesis{(H - BK')\psi(x)}^{\top}P_{K_{1}}\parenthesis{(H - BK')\psi(x)}- \parenthesis{(H - BK)\psi(x)}^{\top}P_{K_{1}}\parenthesis{(H - BK)\psi(x)} \\ & \qquad + G_{K}\parenthesis{(H - BK')\psi(x)}  - G_{K}\parenthesis{(H - BK)\psi(x)} \\ & = \psi(x)^{\top}(K' - K)^{\top}B^{\top}P_{K_{1}}B(K' - K)\psi(x) + 2\psi(x)^{\top}(K - K')^{\top}B^{\top}P_{K_{1}}(H - BK)\psi(x)  \\ & \qquad + G_{K}\parenthesis{(H - BK')\psi(x)}  - G_{K}\parenthesis{(H - BK)\psi(x)}.
    \end{align*}
    Substitution it back into Eq.~\eqref{eq:diff V}, we obtain
    \begin{align*}
    & A_{K}(x, u)  = \psi(x)^{\top}(K' - K)^{\top}(R + B^{\top}P_{K_{1}}B)(K' - K)\psi(x)\\ & \qquad  + 2\psi(x)^{\top}(K' - K)^{\top}(RK - B^{\top}P_{K_{1}}(H - BK))\psi(x) \\ & \qquad + G_{K}\parenthesis{(H - BK')\psi(x)}  - G_{K}\parenthesis{(H - BK)\psi(x)}.
    \end{align*}
    Finally, we take expectation of both sides of Eq.~\eqref{eq:cost diff A} w.r.t. $ x_{0} $, yielding
    \begin{align*}
    \mcal{C}(K') - \mcal{C}(K) & = \E\bracket{\sum_{t = 0}^{\infty}A_{K}(x_{t}', u_{t}')} \\ & = \Tr\parenthesis{(K' - K)^{\top}(R + B^{\top}P_{K_{1}}B)(K' - K)\E\bracket{\sum_{t = 0}^{\infty}\psi(x_{t}')\psi(x_{t}')^{\top}}} \\ & \qquad + 2\Tr\parenthesis{(K' - K)^{\top}(RK - B^{\top}P_{K_{1}}(H - BK))\E\bracket{\sum_{t = 0}^{\infty}\psi(x_{t}')\psi(x_{t}')^{\top}}} \\ & \qquad +  \E\bracket{\sum_{t = 0}^{\infty}\bracket{G_{K}((H - BK')\psi(x_{t}')) - G_{K}((H - BK)\psi(x_{t}'))}}.
    \end{align*}

\end{proof}

Next, we show that the state trajectory has an exponential decay property regardless of the initial state. In consequence, the cost function $\mcal{C}(\cdot)$ is bounded. 

\begin{Lemma}[Stability of the Trajectory $ \curly{x_{t}} $]\label{lemma:trajectory_stability}
    Assume Assumption \ref{ass:feature} holds,  $ K \in \Omega $ and $ \ell \leq \frac{1 - \rho_{1}}{4c_{1}c_{2}} $. The following results hold for each $t \geq 0 $:
    \begin{enumerate}[(a)]
        \item\label{item:stability a} For any $ x_{0} \in \R^{n} $, we have $ \norm{x_{t}} \leq c\rho^{t}\norm{x_{0}} $, where $ c = 2c_{1} $ and $ \rho = \frac{\rho_{1} + 1}{2} $. 
        \item Let $ \curly{x_{t}} $ and $ \curly{x_{t}'} $ be the state trajectories starting from $ x_{0} $ and $ x_{0}' $, respectively. Then $ \norm{x_{t} - x_{t}'} \leq c\rho^{t}\norm{x_{0} - x_{0}'} $, and consequently, $ \norm{\frac{\partial x_{t}}{\partial x_{0}}} \leq c\rho^{t} $. 

        \item Let $ \curly{x_{t}} $ and $ \curly{x_{t}'} $ be trajectories defined as above. Then $ \norm{\frac{\partial x_{t}}{\partial x_{0}} - \frac{\partial x_{t}'}{\partial x_{0}'}} \leq \frac{c_{2}\ell'c^{3}}{1 - \rho}\rho^{t - 1}\norm{x_{0} - x_{0}'} $. 
    \end{enumerate}
\end{Lemma}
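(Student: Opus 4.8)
The plan is to write the closed-loop dynamics in variation-of-constants form, $x_{t+1}=(A-BK_1)x_t+(C-BK_2)\phi(x_t)$, and exploit the three ingredients at hand: $\norm{(A-BK_1)^t}\le c_1\rho_1^t$ and $\norm{C-BK_2}\le c_2$ (from $K\in\Omega$), together with $\norm{\phi(x)-\phi(x')}\le\ell\norm{x-x'}$ and $\phi(0)=0$ (Assumption~\ref{ass:feature}), plus the gradient-Lipschitz bound for part (c).

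For (a), I would unroll to $x_t=(A-BK_1)^tx_0+\sum_{s=0}^{t-1}(A-BK_1)^{t-1-s}(C-BK_2)\phi(x_s)$ and take norms, using $\norm{\phi(x_s)}\le\ell\norm{x_s}$, to get $\norm{x_t}\le c_1\rho_1^t\norm{x_0}+c_1c_2\ell\sum_{s=0}^{t-1}\rho_1^{t-1-s}\norm{x_s}$. Substituting $y_t:=\rho_1^{-t}\norm{x_t}$ turns this into the discrete Grönwall inequality $y_t\le c_1\norm{x_0}+(c_1c_2\ell/\rho_1)\sum_{s<t}y_s$, hence $y_t\le c_1\norm{x_0}(1+c_1c_2\ell/\rho_1)^t$, i.e. $\norm{x_t}\le c_1\norm{x_0}(\rho_1+c_1c_2\ell)^t$. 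The hypothesis $\ell\le(1-\rho_1)/(4c_1c_2)$ gives $\rho_1+c_1c_2\ell\le(3\rho_1+1)/4\le(\rho_1+1)/2=\rho$, and $c_1\le 2c_1=c$, which is the claim. (This mirrors \cite[Lemma~4(a)]{Qu2020CombiningMA}.)

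For (b), the difference $d_t:=x_t-x_t'$ solves the same linear recursion driven by $\phi(x_s)-\phi(x_s')$, so the identical computation, now with $\norm{\phi(x_s)-\phi(x_s')}\le\ell\norm{d_s}$, gives $\norm{d_t}\le c\rho^t\norm{d_0}$. Differentiating the dynamics, the Jacobian $J_t:=\partial x_t/\partial x_0$ satisfies $J_{t+1}=M_tJ_t$, $J_0=I$, with $M_s:=(A-BK_1)+(C-BK_2)\nabla\phi(x_s)$ and $\norm{\nabla\phi(x_s)}\le\ell$; unrolling and rerunning the Grönwall argument with $\norm{x_0}$ replaced by $\norm{J_0}=1$ yields $\norm{J_t}\le c\rho^t$ (equivalently, this is the limit of the difference quotients already bounded in the first half of (b)).

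The crux is (c). Write $J_t=M_{t-1}\cdots M_0$, $J_t'=M_{t-1}'\cdots M_0'$ with $M_s':=(A-BK_1)+(C-BK_2)\nabla\phi(x_s')$, and telescope:
\[
J_t-J_t'=\sum_{s=0}^{t-1}\bigl(M_{t-1}\cdots M_{s+1}\bigr)\bigl(M_s-M_s'\bigr)\bigl(M_{s-1}'\cdots M_0'\bigr).
\]
Then three observations close the argument: $M_{s-1}'\cdots M_0'=\partial x_s'/\partial x_0'$ has norm $\le c\rho^s$ by (b); $M_{t-1}\cdots M_{s+1}=\partial x_t/\partial x_{s+1}$ is the Jacobian of the trajectory segment started at time $s+1$, so the very argument of (b) applied to the shifted trajectory bounds it by $c\rho^{t-1-s}$; and $M_s-M_s'=(C-BK_2)(\nabla\phi(x_s)-\nabla\phi(x_s'))$ has norm $\le c_2\ell'\norm{x_s-x_s'}\le c_2\ell'c\rho^s\norm{x_0-x_0'}$ since $\nabla\phi$ is $\ell'$-Lipschitz and $\norm{x_s-x_s'}\le c\rho^s\norm{x_0-x_0'}$. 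Multiplying and summing $\sum_{s=0}^{t-1}\rho^{t-1-s}\cdot\rho^s\cdot\rho^s=\rho^{t-1}\sum_{s=0}^{t-1}\rho^s\le\rho^{t-1}/(1-\rho)$ produces exactly $\norm{J_t-J_t'}\le\frac{c_2\ell'c^3}{1-\rho}\rho^{t-1}\norm{x_0-x_0'}$. The only genuine idea is recognizing that the head product $M_{t-1}\cdots M_{s+1}$ is a time-shifted Jacobian and therefore inherits the geometric decay of (b); bounding the $M_s$ individually fails because $\norm{M_s}$ need not be below $1$. Keeping the $\nabla\phi\colon\R^n\to\R^{d\times n}$ Jacobian conventions straight is the only bookkeeping subtlety and does not affect any operator-norm estimate.
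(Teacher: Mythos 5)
Your proof is correct, and the substance matches the paper's route: the paper simply observes that $f(x)=(C-BK_{2})\phi(x)$ is $c_{2}\ell$-Lipschitz with $c_{2}\ell'$-Lipschitz gradient and then cites \cite[Lemma 4]{Qu2020CombiningMA}, so your Gr\"onwall argument for (a)--(b) and the telescoped product $J_t-J_t'=\sum_{s}(M_{t-1}\cdots M_{s+1})(M_s-M_s')(M_{s-1}'\cdots M_0')$ for (c) are exactly the details that citation encapsulates. The one genuinely non-routine step --- bounding the head product $M_{t-1}\cdots M_{s+1}$ by $c\rho^{t-1-s}$ via time-homogeneity rather than bounding each factor --- is handled correctly, and the constants work out to the stated $\frac{c_{2}\ell'c^{3}}{1-\rho}\rho^{t-1}$.
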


\begin{proof}
  Let $ f(x) = (C - BK_{2})\phi(x) : \R^n \to \R^n $. Then, the dynamics \eqref{eq:dynamics} become $ x_{t+1} = (A - BK_{1})x_{t} + f(x_{t}) $. Also, by definition of $ \Omega $ in \eqref{eq:omega}, we have
  \begin{align*}
  \norm{f(x) - f(x')} & = \norm{(C - BK_{2})(\phi(x) - \phi(x'))} \leq \norm{C - BK_{2}}\norm{\phi(x) - \phi(x')} \leq c_{2}\ell\norm{x - x'},\\ 
  \norm{\nabla f(x) - \nabla f(x')} & = \norm{(C - BK_{2})(\nabla \phi(x) - \nabla \phi(x'))} = \norm{C - BK_{2}}\norm{\nabla \phi(x) - \nabla \phi(x')} \leq c_{2}\ell'\norm{x - x'}.
  \end{align*}
   Apply \cite[Lemma 4]{Qu2020CombiningMA} and then we finish the proof. 
\end{proof}

We provide an upper bound on $ \norm{P_{K_{1}}} $ that will be used in the rest of this subsection. 
\begin{Lemma}\label{lemma: bound P_K}
    Assume Assumption \ref{ass: cost function} holds. If $ \norm{K - K^{\rm lin}}_F \leq \delta \leq 1$, we have
    \begin{align}\label{eq:C_P}
    \norm{P_{K_{1}}} \leq C_{P} \coloneqq  \frac{2c_{1}^{2}(1 + \Gamma)^{2}}{1 - \rho_{1}} ,
    \end{align}
    where $ P_{K_{1}} $ is the solution to the Lyapunov equation \eqref{eq:P}.
\end{Lemma}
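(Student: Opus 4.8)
The plan is to use the closed-form series representation of the solution of the discrete Lyapunov equation \eqref{eq:P} and bound it termwise. First I would observe that, although the hypothesis only states $\norm{K - K^{\rm lin}}_F \le \delta \le 1$, the relevant value of $\delta$ is the one from Theorem \ref{thm: landscape}, for which the proof there already establishes $\Lambda(\delta) \subset \Omega$; hence $K \in \Omega$ and $A - BK_1$ obeys the exponential stability bound $\norm{(A-BK_1)^t} \le c_1 \rho_1^t$ for all $t \ge 1$, so in particular its spectral radius is strictly below $1$. Consequently the unique solution of \eqref{eq:P} is
\begin{align*}
P_{K_1} = \sum_{t = 0}^{\infty} \bigl((A-BK_1)^{\top}\bigr)^{t}\,\bigl(Q + K_1^{\top}R K_1\bigr)\,(A-BK_1)^{t},
\end{align*}
and this series converges absolutely in operator norm.

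Next I would bound each summand by submultiplicativity. Using $\norm{(A-BK_1)^t} \le c_1\rho_1^t$ for $t\ge 1$ and $c_1 > 1$, $\rho_1 \in (0,1)$ to also absorb the $t=0$ term, we get
\begin{align*}
\norm{P_{K_1}} \le \sum_{t=0}^{\infty} c_1^2 \rho_1^{2t}\,\norm{Q + K_1^{\top}R K_1} = \frac{c_1^2}{1-\rho_1^2}\,\norm{Q + K_1^{\top}R K_1} \le \frac{c_1^2}{1-\rho_1}\,\norm{Q + K_1^{\top}R K_1},
\end{align*}
where the geometric sum is evaluated and $1-\rho_1^2 \ge 1-\rho_1$ is used.

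It then remains to control $\norm{Q + K_1^{\top}R K_1}$. By Assumption \ref{ass: cost function}, $\norm{Q},\norm{R} \le 1$, so $\norm{Q + K_1^{\top}R K_1} \le 1 + \norm{K_1}^2$. Since $\norm{K_1} \le \norm{K}_F \le \norm{K^{\rm lin}}_F + \delta \le \Gamma + 1$ (using $\delta \le 1$ and $\Gamma = \max\curly{\norm{A},\norm{B},\norm{C},\norm{K^{\rm lin}}_F,1}$), and since $\Gamma \ge 1$ forces $1 \le (1+\Gamma)^2$, we obtain $1 + \norm{K_1}^2 \le 1 + (1+\Gamma)^2 \le 2(1+\Gamma)^2$. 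Combining the three bounds gives $\norm{P_{K_1}} \le \tfrac{2c_1^2(1+\Gamma)^2}{1-\rho_1} = C_P$, which is the claim.

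I do not anticipate a serious obstacle here: the only point requiring care is justifying that $A - BK_1$ is stable so that the Neumann-type series converges, which is exactly the content of $\Lambda(\delta)\subset\Omega$ established in the proof of Theorem \ref{thm: landscape}; everything else is routine bookkeeping with submultiplicativity, the triangle inequality, and a geometric series.
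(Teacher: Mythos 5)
Your proposal is correct and follows essentially the same route as the paper's proof: unroll the Lyapunov equation into the series $\sum_{t}((A-BK_1)^{\top})^{t}(Q+K_1^{\top}RK_1)(A-BK_1)^{t}$, bound it termwise via $\norm{(A-BK_1)^t}\le c_1\rho_1^t$ and a geometric sum, and control $\norm{Q+K_1^{\top}RK_1}\le 1+\norm{K}^2\le 1+(1+\Gamma)^2$ using Assumption \ref{ass: cost function} and $\norm{K}\le\norm{K^{\rm lin}}_F+\delta\le 1+\Gamma$. The only addition is your explicit remark that convergence of the series rests on $K\in\Omega$ (via $\Lambda(\delta)\subset\Omega$), which the paper leaves implicit.
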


\begin{proof}
  By unrolling the Lyapunov equation \eqref{eq:P}, we have
  \begin{align*}
  P_{K_{1}} = \sum_{t=0}^{\infty}\parenthesis{(A - BK_{1})^{\top}}^{t}(Q + K_{1}^{\top}RK_{1})(A-BK_{1})^{t}.
  \end{align*}    
  Since $ \norm{(A - BK_{1})^{t}} \leq c_{1}\rho_{1}^{t} $ for some $ c_{1} > 1 $ and $ \rho_{1} \in (0, 1) $, it follows
  \begin{align}\label{eq:bound P}
  \norm{P_{K_{1}}} \leq  \sum_{t = 0}^{\infty}c_{1}^{2}\rho^{2t}\norm{Q + K_{1}^{\top}RK_{1}} \leq \frac{c_{1}^{2}}{1 - \rho_{1}^{2}}\norm{Q + K_{1}^{\top}RK_{1}} \leq \frac{c_{1}^{2}}{1 - \rho_{1}}\norm{Q + K_{1}^{\top}RK_{1}}.
  \end{align}
  Moreover, by Assumption \ref{ass: cost function}, we have $ \norm{Q}, \norm{R} \leq 1 $, leading to
  \begin{align}\label{eq:Q +KRK}
  \norm{Q + K_{1}^{\top}RK_{1}} \leq 1 + \norm{K_{1}}^{2} \leq 1 + \norm{K}^{2}.
  \end{align}
  Also, since $ \norm{K^{\rm lin}}_{F} \leq \Gamma $ and $ \norm{K - K^{\rm lin}}_{F} \leq \delta \leq 1 $, we have
\begin{align}\label{eq:bound K}
    \norm{K} \leq \norm{K - K^{\rm lin}} + \norm{K^{\rm lin}} \leq \norm{K - K^{\rm lin}}_{F} + \norm{K^{\rm lin}}_{F} \leq 1 + \Gamma.
\end{align}
    Finally, combining Eq.~\eqref{eq:Q +KRK} and \eqref{eq:bound K}, Eq.~\eqref{eq:bound P} becomes
  \begin{align*}
  \norm{P_{K_{1}}} \leq  \frac{c_{1}^{2}(1 + (1 + \Gamma)^{2})}{1 - \rho_{1}} < \frac{2c_{1}^{2}(1 + \Gamma)^{2}}{1 - \rho_{1}} \eqqcolon C_{P},
  \end{align*}
  which finishes the proof.
  \end{proof}

The key property to guarantee the local strong convexity of the cost function $ \mcal{C}(\cdot) $ is the local Lipschitz continuity of $\nabla G_K(x)$. Recall $c = 2c_1$ and $\rho = (\rho_1 + 1)/2$.
\begin{Lemma}[Local Lipschitz Continuity of $ \nabla G_{K}(x) $]\label{lemma:lip of G}
Assume Assumptions \ref{ass:feature}, \ref{ass: cost function} and \ref{ass:initial_distr} hold.
    When $ \norm{K - K^{\rm lin}}_F \leq \delta $ and  $ \norm{x}, \norm{x'} \leq (c_{1} + c_{2})cD_{0} $, we have
    \begin{align}
    \norm{\nabla G_{K}(x) - \nabla G_{K}(x')} \leq L \norm{x - x'}, \label{eq:nabla G diff}
    \end{align}
    where $ L $ is defined as in Eq.~\eqref{eq:L}.
\end{Lemma}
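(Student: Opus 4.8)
The plan is to exploit the explicit formula \eqref{eq:G_K} for $G_K(x)$, which expresses it as a sum of traces against $\sum_{t=0}^{\infty}\phi(x_t)\phi(x_t)^\top$ and $\sum_{t=0}^{\infty}\phi(x_t)x_t^\top$, where $\{x_t\}$ is the trajectory generated by $K$ from $x_0 = x$. Differentiating in $x_0$, $\nabla G_K(x)$ becomes a sum (over $t$) of terms each involving $\phi(x_t)$, $\nabla\phi(x_t)$, $x_t$, and $\partial x_t/\partial x_0$, all contracted with the fixed matrices $M_1 := K_2^\top R K_2 + (C-BK_2)^\top P_{K_1}(C-BK_2)$ and $M_2 := K_1^\top R K_2 + (A-BK_1)^\top P_{K_1}(C-BK_2)$. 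First I would record uniform bounds on these ingredients on the ball $\norm{x}\le(c_1+c_2)cD_0$: namely $\norm{M_1}, \norm{M_2}\lesssim c_2^2 C_P$ (using $\norm{C-BK_2}\le c_2$, $\norm{A-BK_1}\le c_1$, $\norm{K}\le 1+\Gamma$, and Lemma \ref{lemma: bound P_K}), $\norm{\phi(x_t)}\le\ell\norm{x_t}$, $\norm{\nabla\phi(x_t)}\le\ell\cdot\!$ (bounded since $\nabla\phi$ is $\ell'$-Lipschitz with $\nabla\phi(0)$ controlled—actually $\norm{\nabla\phi(x)}\le\ell$ from the Lipschitz bound on $\phi$), $\norm{x_t}\le c\rho^t\norm{x_0}$ and $\norm{\partial x_t/\partial x_0}\le c\rho^t$ from Lemma \ref{lemma:trajectory_stability}(a)–(b), and $\norm{\partial x_t/\partial x_0 - \partial x_t'/\partial x_0'}\le \frac{c_2\ell'c^3}{1-\rho}\rho^{t-1}\norm{x_0-x_0'}$ from Lemma \ref{lemma:trajectory_stability}(c).

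Next I would write $\nabla G_K(x) - \nabla G_K(x')$ as a telescoping/term-by-term difference and bound each summand by splitting it into the difference of each factor, using the standard ``$\norm{\prod a_i - \prod b_i}\le\sum(\prod\text{bounds})\norm{a_i-b_i}$'' device. Each factor-difference is Lipschitz: $\norm{\phi(x_t)-\phi(x_t')}\le\ell\norm{x_t-x_t'}\le \ell c\rho^t\norm{x-x'}$; $\norm{\nabla\phi(x_t)-\nabla\phi(x_t')}\le\ell'\norm{x_t-x_t'}\le\ell' c\rho^t\norm{x-x'}$; $\norm{x_t-x_t'}\le c\rho^t\norm{x-x'}$; and the Jacobian difference from Lemma \ref{lemma:trajectory_stability}(c). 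Collecting terms, the contributions naturally separate into those proportional to $\ell$ (the terms where no $\nabla\phi$-Lipschitz or Jacobian-Lipschitz factor appears, so only the $\ell$-Lipschitzness of $\phi$ and $\psi$ is used) and those proportional to $\ell'$ (terms that pick up either $\norm{\nabla\phi(x_t)-\nabla\phi(x_t')}$ or the Jacobian difference, both of which carry an $\ell'$). Each geometric sum $\sum_t\rho^{2t}$ or $\sum_t\rho^t$ converges, contributing factors $\frac{1}{1-\rho}$ or $\frac{1}{1-\rho^2}\le\frac{1}{1-\rho}$; the term from Lemma \ref{lemma:trajectory_stability}(c) contributes an extra $\frac{1}{1-\rho}$. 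Tracking the powers of $c$, $c_2$, $(1+\Gamma)$, and $D$ (recall $D = (c_1+c_2)c^2 D_0$ absorbs the norm bound $\norm{x}\le(c_1+c_2)cD_0$ whenever an un-differentiated $x_t$ or $\phi(x_t)$ factor is merely bounded rather than differenced) should reproduce exactly $L = \frac{5c_2 c^5(1+\Gamma)^4}{16(1-\rho)^2}\ell + \frac{3Dc_2^2 c^6(1+\Gamma)^2}{16(1-\rho)^3}\ell'$.

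The main obstacle is purely bookkeeping: there are four distinct ``atoms'' ($\phi$, $\nabla\phi$, $x$, $\partial x/\partial x_0$) appearing in several products, so the term-by-term expansion generates many summands, and one must carefully assign each to the $\ell$-bucket or the $\ell'$-bucket and verify that the accumulated constants and exponents of $c, c_2, 1+\Gamma, D, (1-\rho)^{-1}$ match the claimed $L$ — in particular getting the powers $(1-\rho)^{-2}$ versus $(1-\rho)^{-3}$ right (the $\ell'$-terms pay one extra $(1-\rho)^{-1}$, either from Lemma \ref{lemma:trajectory_stability}(c) or from an additional geometric sum). No single estimate is deep; the care lies in not dropping or double-counting a term and in using $\norm{\phi(x_t)}\le\ell\norm{x_t}$ (so that some $\ell$-factors can be traded for an additional $\norm{x_t}$ bound to keep the homogeneity correct). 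I would organize the computation by first bounding $\norm{\nabla G_K(x)}$ itself (to warm up the constants), then bounding the difference, grouping the $\ell$- and $\ell'$-contributions separately from the start.
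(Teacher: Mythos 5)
Your proposal follows essentially the same route as the paper's proof: differentiate the series formula \eqref{eq:G_K} to express $\nabla G_K(x)$ as a sum over $t$ of products of $\phi(x_t)$, $\nabla\phi(x_t)$, $x_t$, and $\partial x_t/\partial x$ contracted against the matrices $F_K^{12}$ and $F_K^{22}$, then bound the difference term-by-term via the product-difference device, invoking Lemma \ref{lemma: bound P_K} for $\norm{P_{K_1}}$ and Lemma \ref{lemma:trajectory_stability}(a)--(c) for the trajectory, Jacobian, and Jacobian-difference bounds, and finally summing geometric series with the $\ell$- and $\ell'$-contributions tracked separately. This matches the paper's argument in both decomposition and ingredients, so the proposal is correct.
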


\begin{proof}
    Define  $ \pi_{K}(x_{t}) = -K_{1}x_{t} - K_{2}\phi(x_{t})  $. Also, let
  \begin{align*}
  F_{K}^{12} & = \parenthesis{F_{K}^{21}}^{\top} = K_{1}^{\top}RK_{2} + (A - BK_{1})^{\top}P_{K_{1}}(C - BK_{2}), \quad \text{and} \quad \\ 
  F_{K}^{22} & = K_{2}^{\top}RK_{2} + (C - BK_{2})^{\top}P_{K_{1}}(C - BK_{2}). 
  \end{align*} 
    By the definition of $ G_{K}(x) $ in Eq.~\eqref{eq:G_K}, we first compute its gradient  as follows
  \begin{align*}
  \bracket{\nabla G_{K}(x)}^{\top} & = 2\sum_{t = 0}^{\infty}\bracket{\phi(x_{t})^{\top}(F_{K}^{12})^{\top} + x_{t}^{\top}F_{K}^{12}\frac{\partial \phi(x_{t})}{\partial x_{t}}}\frac{\partial x_{t}}{\partial x} + 2\sum_{t = 0}^{\infty}\phi(x_{t})^{\top}F_{K}^{22}\frac{\partial \phi(x_{t})}{\partial x_{t}}\frac{\partial x_{t}}{\partial x} \\ & = 2\sum_{t = 0}^{\infty}\bracket{\phi(x_{t})^{\top}F_{K}^{21} - \pi_{K}(x_{t})^{\top}RK_{2}\frac{\partial \phi(x_{t})}{\partial x_{t}} + x_{t+1}^{\top}P_{K_{1}}(C - BK_{2})\frac{\partial \phi(x_{t})}{\partial x_{t}}}\frac{\partial x_{t}}{\partial x},
  \end{align*}
  As such, for two states $ x $ and $ x' $, we have
  \begin{align}
  & \norm{\nabla G_{K}(x) - \nabla G_{K}(x')} \nonumber\\ & \leq 2\sum_{t = 0}^{\infty}\left\| \bracket{\phi(x_{t}) - \phi(x_{t}')}^{\top}F_{K}^{21} - \bracket{\pi_{K}(x_{t})^{\top}RK_{2}\frac{\partial \phi(x_{t})}{\partial x_{t}} - \pi_{K}(x_{t}')^{\top}RK_{2}\frac{\partial \phi(x_{t}')}{\partial x_{t}'}} \right. \nonumber\\ & \qquad\qquad \left. + x_{t+1}^{\top}P_{K_{1}}(C - BK_{2})\frac{\partial \phi(x_{t})}{\partial x_{t}} - (x_{t+1}')^{\top}P_{K_{1}}(C - BK_{2})\frac{\partial \phi(x_{t}')}{\partial x_{t}'}\right\|\norm{\frac{\partial x_{t}}{\partial x}} \nonumber \\ & \qquad + 2\sum_{t=0}^{\infty}\norm{\phi(x_{t}')^{\top}F_{K}^{21} - \pi_{K}(x_{t}')^{\top}RK_{2}\frac{\partial \phi(x_{t}')}{\partial x_{t}'} +  (x_{t+1}')^{\top}P_{K_{1}}(C- BK_{2})\frac{\partial \phi(x_{t}')}{\partial x_{t}'}}\norm{\frac{\partial x_{t}}{\partial x} - \frac{\partial x_{t}'}{\partial x'}}. \label{eq: nabla G diff-1}
  \end{align} 
  
  We compute the bounds one by one. Firstly, since $ \norm{x_{t} - x_{t}'} \leq c\norm{x - x'} $ and $ \phi $ is $ \ell $-Lipschitz, we have
  \begin{align}\label{eq:phi-F-1}
  \norm{\bracket{\phi(x_{t}) - \phi(x_{t}')}^{\top}F_{K}^{21}} \leq \ell \norm{x_{t} - x_{t}'}\norm{F_{K}^{21}} \leq \ell c\norm{x - x'}\norm{F_{K}^{21}}.
  \end{align}
  Thus, it suffices to establish a bound on $ \norm{F_{K}^{21}} $. Note
  \begin{align}
  \norm{F_{K}^{21}} & = \norm{K_{2}^{\top}RK_{1} + (C - BK_{2})^{\top}P_{K_{1}}(A - BK_{1})} \nonumber\\ & \leq \norm{K_{1}}\norm{R}\norm{K_{2}} + \norm{C - BK_{2}}\norm{P_{K_{1}}}\norm{A - BK_{1}}. \label{eq: F_K^21-1}
  \end{align}
  Since $ K \in \Lambda(\delta) \subset \Omega $, we have $ \norm{A - BK_{1}} \leq c_{1} $ and $ \norm{C - BK_{2}} \leq c_{2} $. Also, by Lemma \ref{lemma: bound P_K}, we have $ \norm{P_{K_{1}}} \leq C_{P} $. Thus, Eq.~\eqref{eq: F_K^21-1} becomes
  \begin{align*}
    \norm{F_{K}^{21}} & \leq \norm{K_{1}}_{F}\norm{K_{2}}_{F} + c_{1}c_{2}C_{P} \tag{$ \norm{R} \leq 1$ } \\ & \leq \frac{1}{2}\norm{K}_{F}^{2} + c_{1}c_{2}C_{P} \tag{using $ ab \leq \frac{1}{2}(a^{2} + b^{2}) $}\\ & \leq \frac{1}{2}(1 + \Gamma)^{2} + c_{1}c_{2}C_{P} \tag{$ \norm{K}_{F} \leq 1 + \Gamma $}\\ & \leq \frac{5c_{2}c_{1}^{3}(1 + \Gamma)^{2}}{2(1 - \rho_{1})} \eqqcolon C_{F}^{21}, \tag{substituting $ C_{P} $ from Lemma \ref{lemma: bound P_K}}
  \end{align*}
  It follows from Eq.~\eqref{eq:phi-F-1} and the fact $ c = 2c_{1} $ that
  \begin{align}\label{eq:phi-F-2}
      \norm{\bracket{\phi(x_{t}) - \phi(x_{t}')}^{\top}F_{K}^{21}} \leq  \frac{5\ell c_{2}c_{1}^{4}(1 + \Gamma)^{2}}{1 - \rho_{1}}\norm{x - x'}.
  \end{align}
  
  Next, note that $ \norm{\pi_{K}(x)} \leq (\norm{K_{1}} + \ell\norm{K_{2}})\norm{x} $ and $ \norm{\pi_{K}(x) - \pi_{K}(x')} \leq \parenthesis{\norm{K_{1}} + \ell\norm{K_{2}}}\norm{x - x'} $ for any $ x $ and $ x' $. Then, it follow from Lemma \ref{lemma:trajectory_stability} that
  \begin{align}
  & \norm{\pi_{K}^{\top}(x_{t})RK_{2}\frac{\partial \phi(x_{t})}{\partial x_{t}} - \pi_{K}(x_{t}')^{\top}RK_{2}\frac{\partial \phi(x_{t}')}{\partial x_{t}'}} \nonumber \\ & \leq \norm{\parenthesis{\pi_{K}(x_{t}) - \pi_{K}(x_{t}')}^{\top}R K_{2}\frac{\partial \phi(x_{t})}{\partial x_{t}}}  + \norm{\pi_{K}^{\top}(x_{t}')R K_{2} \parenthesis{\frac{\partial \phi(x_{t})}{\partial x_{t}} - \frac{\partial \phi(x_{t}')}{\partial x_{t}'}}}\nonumber \\  & \leq \norm{\pi_{K}(x_{t}) - \pi_{K}(x_{t}')}\norm{R}\norm{K_{2}}\norm{\frac{\partial \phi(x_{t})}{\partial x_{t}}}  + \norm{\pi_{K}(x_{t}')}\norm{R}\norm{K_{2}}\norm{\frac{\partial \phi(x_{t})}{\partial x_{t}} - \frac{\partial \phi(x_{t}')}{\partial x_{t}'}}\nonumber \\ &  \leq \ell(\norm{K_{1}} + \ell\norm{K_{2}})\norm{K_{2}}\norm{x_{t} - x_{t}'}  + \ell'(\norm{K_{1}} + \ell\norm{K_{2}})\norm{x_{t}}\norm{K_{2}}\norm{x_{t} - x_{t}'}. \label{eq:pi-R-K}
  \end{align} 
  Furthermore, let $ D $ be such that  $ \norm{x_{t}} \leq c\norm{x_{0}} \leq (c_{1}+c_{2})c^{2}D_{0} \eqqcolon D $. Since $ \norm{K_{1}}\norm{K_{2}} \leq \frac{1}{2}\norm{K}_{F}^{2} $ and by Lemma \ref{lemma: estimation C(K)} that $ \norm{x_{t} - x_{t}'} \leq c\norm{x - x'} $, Eq.~\eqref{eq:pi-R-K} becomes 
  \begin{align*}
    & \norm{\pi_{K}^{\top}(x_{t})RK_{2}\frac{\partial \phi(x_{t})}{\partial x_{t}} - \pi_{K}(x_{t}')^{\top}RK_{2}\frac{\partial \phi(x_{t}')}{\partial x_{t}'}}
    \\ & \leq (\ell/2 + \ell^{2})\norm{K}_{F}^{2}c\norm{x - x'} + (\ell'/2 + \ell\ell')\norm{K}_{F}^{2}D c\norm{x - x'} \\ & \leq (\ell/2 + \ell^{2} + D\ell'/2 + D\ell\ell')(1 + \Gamma)^{2}c\norm{x - x'} \tag{$ \norm{K}_{F} \leq 1 + \Gamma $} \\ & = (1/2 + \ell)(\ell + D\ell')(1 + \Gamma)^{2}c\norm{x - x'} \\ & \leq 2(\ell + D\ell')(1 + \Gamma)^{2}c_{1}\norm{x - x'}. \tag{using $ \ell \leq 1/2 $ and $ c = 2c_{1} $}
  \end{align*}

  Moreover, note that
  \begin{align}
  & \norm{x_{t+1}^{\top}P_{K_{1}}(C - BK_{2})\frac{\partial \phi(x_{t})}{\partial x_{t}} - (x_{t+1}')^{\top}P_{K_{1}}(C - BK_{2})\frac{\partial \phi(x_{t}')}{\partial x_{t}'}}\nonumber \\ & \leq \norm{(x_{t+1}- x_{t+1}')^{\top}P_{K_{1}}(C - BK_{2})\frac{\partial \phi(x_{t})}{\partial x_{t}}} + \norm{(x_{t+1}')^{\top}P_{K_{1}}(C -BK_{2})\parenthesis{\frac{\partial \phi(x_{t})}{\partial x_{t}} - \frac{\partial \phi(x_{t}')}{\partial x_{t}'}}} \nonumber\\ & \leq \norm{x_{t+1} - x_{t+1}'}\norm{P_{K_{1}}}\norm{C - BK_{2}}\norm{\frac{\partial \phi(x_{t})}{\partial x_{t}}} + \norm{x_{t+1}'}\norm{P_{K_{1}}}\norm{C - BK_{2}}\norm{\frac{\partial \phi(x_{t})}{\partial x_{t}} - \frac{\partial \phi(x_{t}')}{\partial x_{t}'}}. \label{eq:xt+1}
  \end{align}
  Recall that $ \phi $ is $ \ell $-Lipschitz and $ \ell' $-gradient-Lipschitz. Also, we have $ \norm{P_{K_{1}}} \leq C_{P} $ and $ \norm{C - BK_{2}} \leq c_{2} $. Based on these facts, by applying Lemma \ref{lemma:trajectory_stability}, Eq.~\eqref{eq:xt+1} can be bounded as
  \begin{align*}
    & \norm{x_{t+1}^{\top}P_{K_{1}}(C - BK_{2})\frac{\partial \phi(x_{t})}{\partial x_{t}} - (x_{t+1}')^{\top}P_{K_{1}}(C - BK_{2})\frac{\partial \phi(x_{t}')}{\partial x_{t}'}}
     \\ & \leq C_{P}c_{2}\ell\norm{x_{t+1} - x_{t+1}'} + DC_{P}c_{2}\ell'\norm{x_{t} - x_{t}'} \tag{using $ \norm{x_{t+1}'} \leq D $}\\ & \leq cc_{2}C_{P}(\ell + D\ell')\norm{x - x'} \tag{using Lemma \ref{lemma:trajectory_stability}} \\ & \leq \frac{3c_{2}c_{1}^{3}(1 + \Gamma)^{2}}{1 - \rho_{1}}(\ell + D \ell')\norm{x - x'}. \tag{substituting $ C_{P} $ from Lemma \ref{lemma: bound P_K}}
  \end{align*}

  Finally, by Lipschitz property of $ \phi $ and $ \pi_{K} $, we obtain
  \begin{align*}
      & \norm{\phi(x_{t}')^{\top}F_{K}^{21} - \pi_{K}(x_{t}')^{\top}RK_{2}\frac{\partial \phi(x_{t}')}{\partial x_{t}'} +  (x_{t+1}')^{\top}P_{K_{1}}(C- BK_{2})\frac{\partial \phi(x_{t}')}{\partial x_{t}'}} \\ & \leq \ell\norm{x_{t}'} C_{F}^{21} + \ell(\norm{K_{1}} + \ell \norm{K_{2}})\norm{x_{t}'}\norm{K_{2}} + C_{P}c_{2}\ell\norm{x_{t+1}'} \tag{$ \norm{F_{K}^{21} }\leq C_{F}^{21} $ and $ \norm{P_{K_{1}}} \leq C_{P} $}\\ & \leq \ell D C_{F}^{21} + (\ell/2+ \ell^{2})\norm{K}_{F}^{2}D + c_{2}\ell C_{P}D \tag{using $ \norm{x_{t}'} \leq D $} \\ & \leq \ell D C_{F}^{21} + (3/2)\ell(1 + \Gamma)^{2}D + c_{2}\ell C_{P}D \tag{using $ \ell \leq 1 $} \\ & \leq \frac{11\ell D c_{2}c_{1}^{3}(1 + \Gamma)^{2}}{2(1 - \rho_{1})}. \tag{substituting $ C_{P} $ from Lemma \ref{lemma: bound P_K}}
  \end{align*}
  
  Plugging all these results into Eq.~\eqref{eq:diff nabla G}, and using the facts that $ \norm{\frac{\partial x_{t}}{\partial x}} \leq c\rho^{t} $ and $ \norm{\frac{\partial x_{t}}{\partial x} - \frac{\partial x_{t}'}{\partial x'}} \leq \frac{c_{2}\ell'c^{3}}{1 - \rho}\rho^{t-1}\norm{x - x'} $, we conclude that 
  \begin{align*}
   & \norm{\nabla G_{K}(x) - \nabla G_{K}(x')} \\ &  \leq 2\sum_{t = 0}^{\infty}\bracket{\frac{5\ell c_{2}c_{1}^{4}(1 + \Gamma)^{2}}{1 - \rho_{1}} + 2(\ell + D\ell')(1 + \Gamma)^{2}c_{1} + \frac{3c_{2}c_{1}^{3}(1 + \Gamma)^{2}}{1 - \rho_{1}}(\ell + D \ell')}\norm{x - x'} \norm{\frac{\partial x_{t}}{\partial x}} \\ & \qquad + 2 \sum_{t = 1}^{\infty}\frac{11\ell D c_{2}c_{1}^{3}(1 + \Gamma)^{2}}{2(1 - \rho_{1})}\norm{\frac{\partial x_{t}}{\partial x} - \frac{\partial x_{t}'}{\partial x'}} \\ & \leq \frac{2c}{1 - \rho}\parenthesis{\frac{10c_{2}c_{1}^{4}(1 + \Gamma)^{2}}{1 - \rho_{1}}\ell + \frac{5c_{2}c_{1}^{3}(1 + \Gamma)^{2}D}{1 - \rho_{1}}\ell'}\norm{x - x'} \\ & \qquad  + 2\cdot\frac{11\ell D c_{2}c_{1}^{3}(1 + \Gamma)^{2}}{2(1 - \rho_{1})}\cdot \frac{c_{2}\ell'c^{3}}{(1 - \rho)^{2}}\norm{x - x'} \tag{using Lemma \ref{lemma:trajectory_stability}} \\ & \leq \parenthesis{\frac{40c_{2}c_{1}^{5}(1 + \Gamma)^{4}}{(1 - \rho_{1})^{2}}\ell + \frac{176Dc_{2}^{2}c_{1}^{6}(1 + \Gamma)^{2}}{(1 - \rho_{1})^{3}}\ell'}\norm{x - x'} \tag{using $ c = 2c_{1} $ and $ \rho = (\rho_{1}+2)/2 $}\\ & = \parenthesis{\frac{5c_{2}c^{5}(1 + \Gamma)^{4}}{16(1 - \rho)^{2}}\ell + \frac{3Dc_{2}^{2}c^{6}(1 + \Gamma)^{2}}{16(1 - \rho)^{3}}\ell'}\norm{x - x'},
  \end{align*}
  which shows that $ \nabla G_{K}(x) $ is $ L $-Lipschitz in $ x $. 
\end{proof}

The following result establishes a bound on the directional derivative of the state. 
\begin{Lemma}\label{lemma:directional derivative of state}
Assume Assumption \ref{ass:feature} holds. 
    The directional derivative of $ x_{t} $ w.r.t. $ K = (K_{1}, K_{2}) $ along the direction $ \Delta = (\Delta_{1}, \Delta_{2}) $ satisfies
    \begin{align}\label{eq:dir bound}
    \norm{x_{t}'[\Delta]} \leq \frac{\sqrt{2}c^{2}\Gamma}{ 1- \rho}\rho^{t}\norm{x_{0}}\norm{\Delta}.
    \end{align}
\end{Lemma}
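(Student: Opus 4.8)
The plan is to differentiate the closed-loop state recursion with respect to the policy and then solve the resulting linear time-varying recursion via a discrete variation-of-constants formula. Writing the dynamics as $x_{t+1} = (A - BK_1)x_t + (C-BK_2)\phi(x_t)$ and differentiating along $\Delta = (\Delta_1, \Delta_2)$, the directional derivative $z_t := x_t'[\Delta]$ satisfies $z_0 = 0$ and
\begin{align*}
z_{t+1} = \left[(A - BK_1) + (C - BK_2)\nabla\phi(x_t)\right]z_t - B\Delta\psi(x_t),
\end{align*}
where $\Delta\psi(x_t) = \Delta_1 x_t + \Delta_2\phi(x_t)$. Since the matrix multiplying $z_t$ is exactly the one-step Jacobian $\partial x_{t+1}/\partial x_t$, unrolling the recursion gives
\begin{align*}
z_t = -\sum_{s=0}^{t-1}\frac{\partial x_t}{\partial x_{s+1}}\,B\Delta\psi(x_s),
\end{align*}
with $\partial x_t/\partial x_{s+1}$ understood as the product of the one-step Jacobians from time $s+1$ up to $t$.

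Next I would bound each factor using the stability estimates already in hand. Applying Lemma \ref{lemma:trajectory_stability}(b) to the sub-trajectory started at time $s+1$ gives $\norm{\partial x_t/\partial x_{s+1}} \leq c\rho^{t-s-1}$; Lemma \ref{lemma:trajectory_stability}(a) gives $\norm{x_s} \leq c\rho^s\norm{x_0}$; and since $\phi(0) = 0$ and $\phi$ is $\ell$-Lipschitz with $\ell \leq 1$ (guaranteed by the hypothesis $\ell \leq (1-\rho_1)/(4c_1c_2)$ needed to invoke Lemma \ref{lemma:trajectory_stability}), we have $\norm{\psi(x_s)} = \norm{(x_s^\top, \phi(x_s)^\top)^\top} \leq \sqrt{1+\ell^2}\,\norm{x_s} \leq \sqrt 2\,\norm{x_s}$. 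Combining with $\norm{B}\leq\Gamma$ and $\norm{\Delta\psi(x_s)} \leq \norm{\Delta}\norm{\psi(x_s)}$ yields
\begin{align*}
\norm{z_t} \leq \sum_{s=0}^{t-1} c\rho^{t-s-1}\cdot\Gamma\norm{\Delta}\cdot\sqrt 2\, c\rho^s\norm{x_0} = \sqrt 2\, c^2\Gamma\norm{x_0}\norm{\Delta}\sum_{s=0}^{t-1}\rho^{t-1}.
\end{align*}
Summing the resulting geometric convolution (using $\rho \geq 1/2$ and $\sum_{k\geq 0}\rho^k = (1-\rho)^{-1}$ to package the constants) delivers the stated bound $\norm{x_t'[\Delta]} \leq \frac{\sqrt 2\, c^2\Gamma}{1-\rho}\rho^t\norm{x_0}\norm{\Delta}$.

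The one genuine obstacle is that the one-step map $(A-BK_1) + (C-BK_2)\nabla\phi(x_t)$ is not a contraction — its norm may be as large as $c_1 + c_2\ell > 1$ — so a naive step-by-step recursion on $\norm{z_t}$ does not close. The exponential decay is a property of the products of these maps, and that is precisely what Lemma \ref{lemma:trajectory_stability}(b) supplies; once that is invoked, the argument reduces to the bookkeeping of the discrete convolution above. It is also worth verifying up front that the hypotheses of Lemma \ref{lemma:trajectory_stability} are in force (namely $K \in \Omega$ and $\ell$ small enough), since every estimate downstream depends on them.
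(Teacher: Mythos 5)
Your setup is fine and your recursion for $z_t = x_t'[\Delta]$ is correct, but the final summation step contains a genuine error that breaks the proof. You bound both factors in the convolution at the \emph{same} rate $\rho$: Lemma \ref{lemma:trajectory_stability}(b) gives $\norm{\partial x_t/\partial x_{s+1}} \leq c\rho^{t-s-1}$ and Lemma \ref{lemma:trajectory_stability}(a) gives $\norm{x_s}\leq c\rho^s\norm{x_0}$, so your sum is
\begin{align*}
\sum_{s=0}^{t-1}\rho^{t-s-1}\rho^{s} \;=\; t\,\rho^{t-1},
\end{align*}
which is \emph{not} a geometric series and cannot be absorbed into $\rho^{t}/(1-\rho)$: for $t > \rho/(1-\rho)$ one has $t\rho^{t-1} > \rho^{t}/(1-\rho)$, so the claimed bound $\frac{\sqrt{2}c^{2}\Gamma}{1-\rho}\rho^{t}\norm{x_0}\norm{\Delta}$ does not follow from your intermediate estimate. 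The identity $\sum_{k\geq 0}\rho^{k}=(1-\rho)^{-1}$ that you invoke applies to a genuine geometric convolution of two \emph{distinct} rates, which is exactly what you have destroyed by coarsening both factors to $\rho$.

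The paper avoids this by exploiting the gap between $\rho_1$ and $\rho=(\rho_1+1)/2$. It unrolls the recursion only with respect to the purely linear part $(A-BK_1)$, whose powers decay at the faster rate $\rho_1$, and keeps the Jacobian correction $(C-BK_2)\frac{\partial\phi(x_k)}{\partial x_k}x_k'[\Delta]$ inside the forcing term. The resulting convolution $\sum_k \rho_1^{t-k}\rho^{k}\leq \rho^{t+1}/(\rho-\rho_1)$ is summable because $\rho_1<\rho$, and the self-referential correction term is closed by an induction hypothesis $\norm{x_k'[\Delta]}\leq\alpha\rho^{k}$ together with the smallness condition $\ell\leq\frac{1-\rho_1}{4c_1c_2}$. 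To repair your argument you would need either to adopt this two-rate decomposition, or to give up the rate $\rho^{t}$ in the conclusion (e.g.\ trading $t\rho^{t-1}$ for $\tilde{\rho}^{t}$ with some $\tilde{\rho}\in(\rho,1)$ at the cost of a different constant), which would no longer match the statement as written and would propagate into the downstream perturbation bounds of Lemma \ref{lemma: bound Sigma}. As written, the proof does not establish the lemma.
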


\begin{proof}
Recall the dynamics are 
\begin{align*}
x_{t+1} = (A - BK_{1})x_{t} + (C - BK_{2})\phi(x_{t}).
\end{align*}
We compute the directional of $x_{t+1}$ derivative w.r.t $ K = (K_{1}, K_{2}) $ along the direction $ \Delta = (\Delta_{1},  \Delta_{2}) $:
\begin{align}
x_{t+1}'[\Delta] &  = (A - BK_{1})x_{t}'[\Delta] - B\Delta_{1}x_{t} + (C - BK_{2})\frac{\partial \phi(x_{t})}{\partial x_{t}}x_{t}'[\Delta] - B\Delta_{2}\phi(x_{t}) \nonumber \\ & = \sum_{k = 0}^{t}(A - BK_{1})^{t-k}\parenthesis{-B\Delta_{1}x_{k} + (C - BK_{2})\frac{\partial \phi(x_{k})}{\partial x_{k}}x_{k}'[\Delta] - B\Delta \phi(x_{k})}. \label{eq:dir xt}
\end{align}
Note that for $ K \in \Omega $, we have $ \norm{A - BK_{1}}^{t} \leq c_{1}\rho_{1}^{t} $ for each $ t \geq 0 $ and $ \norm{C - BK_{2}} \leq c_{2} $. Also, the Lipschitz property of $ \phi $ implies that $ \norm{\frac{\partial \phi(x)}{\partial x}} \leq \ell $ and $ \norm{\phi(x)} \leq \ell\norm{x} $. Hence, taking the norm of both sides of Eq.~\eqref{eq:dir xt} results in
\begin{align*}
\norm{x_{t+1}'[\Delta]} & \leq \sum_{k = 0}^{t}c_{1}\rho_{1}^{t-k}\parenthesis{\norm{B}\norm{\Delta_{1}}\norm{x_{k}} + c_{2}\ell \norm{x_{k}'[\Delta]} + \norm{B}\norm{\Delta_{2}}\ell \norm{x_{k}}} \\ & \leq \sum_{k = 0}^{t}c_{1}c_{2}\ell\rho_{1}^{t-k}\norm{x_{k}'[\Delta]} + \sum_{k = 0}^{t}c_{1}\rho_{1}^{t - k}\norm{B}(\norm{\Delta_{1}} +  \ell \norm{\Delta_{2}})\norm{x_{k}}.
\end{align*}
Since $ \norm{x_{k}} \leq c\rho^{k}\norm{x_{0}} $ by part (a) of Lemma \ref{lemma:trajectory_stability}, it follows
\begin{align*}
    \norm{x_{t+1}'[\Delta]}  & \leq \sum_{k = 0}^{t}c_{1}c_{2}\ell\rho_{1}^{t-k}\norm{x_{k}'[\Delta]} + \sum_{k = 0}^{t}c_{1}\rho_{1}^{t - k}\norm{B}(\norm{\Delta_{1}} +  \ell \norm{\Delta_{2}}) c\rho^{k}\norm{x_{0}} \\ & = \sum_{k = 0}^{t}c_{1}c_{2}\ell\rho_{1}^{t-k}\norm{x_{k}'[\Delta]} + c_{1}c\norm{B}(\norm{\Delta_{1}} +  \ell \norm{\Delta_{2}})\norm{x_{0}}\sum_{k = 0}^{t}(\rho/\rho_{1})^{k}
    \\ & = \sum_{k = 0}^{t}c_{1}c_{2}\ell\rho_{1}^{t-k}\norm{x_{k}'[\Delta]} + c_{1}c\norm{B}(\norm{\Delta_{1}} +  \ell \norm{\Delta_{2}})\norm{x_{0}} \frac{\rho^{t+1} - \rho_{1}^{t+1}}{\rho - \rho_{1}}.
\end{align*}
To prove Eq.~\eqref{eq:dir bound}, we assume that $ x_{t}'[\Delta] \leq \alpha \rho^{t} $,where $ \alpha = \frac{2c_{1}c\norm{x_{0}}\norm{B}(\norm{\Delta_{1}} + \ell\norm{\Delta_{2}})}{(\rho - \rho_{1})} $. Consequently, we have
\begin{align*}
\frac{\norm{x_{t+1}'[\Delta]}}{\alpha \rho^{t+1}} & \leq \sum_{k = 0}^{t}c_{1}c_{2}\ell \rho_{1}^{t-k} \frac{\alpha \rho^{k}}{\alpha \rho^{t+1}} + \frac{1}{2}(\rho^{t+1} - \rho_{1}^{t+1}) \\ & = \frac{c_{1}c_{2}\ell}{\rho}\sum_{k = 0}^{t}(\rho_{1}/\rho)^{k} +  \frac{1}{2}(\rho^{t+1} - \rho_{1}^{t+1}) \\ & \leq \frac{c_{1}c_{2}\ell}{\rho}\cdot\frac{(\rho_{1}/\rho)^{t+1} - 1}{\rho_{1}/\rho - 1} + \frac{1}{2}.
\end{align*}
Since $ \rho = (\rho_{1} + 1)/2 $, one can see that $ 0 < 1 - (\rho_{1}/\rho)^{t+1} < 1 $. Therefore, it holds from $ \ell \leq \frac{1 - \rho_{1}}{4c_{1}c_{2}} $ that
\begin{align*}
    \frac{\norm{x_{t+1}'[\Delta]}}{\alpha \rho^{t+1}}  \leq \frac{c_{1}c_{2}\ell}{\rho - \rho_{1}} + \frac{1}{2} \leq 1.
\end{align*}
By induction, we know that for each $ t \geq 1$, it holds
\begin{align*}
\norm{x_{t}'[\Delta]} \leq 2c_{1}c\norm{B}\norm{x_{0}}(\norm{\Delta_{1}} + \ell \norm{\Delta_{2}})\frac{\rho^{t}}{\rho - \rho_{1}}.
\end{align*}
Since $ \ell \leq 1 $, we have $ \norm{\Delta_{1}} + \ell\norm{\Delta_{2}} \leq \sqrt{2}\norm{\Delta} $. Additionally, by the definition of $ c $ and $ \rho $, we have $ \frac{2c_{1}c}{\rho - \rho_{1}} = \frac{c^{2}}{1 - \rho} $. Consequently, we conclude that
\begin{align*}
    \norm{x_{t}'[\Delta]} \leq \frac{\sqrt{2}c^{2}\Gamma}{ 1- \rho}\rho^{t}\norm{x_{0}}\norm{\Delta},
\end{align*}
where we have used the fact $ \norm{B} \leq \Gamma $.

\end{proof}

With Lemma \ref{lemma:directional derivative of state}, we establish the perturbation analysis of the covariance matrices $\Sigma_{K}^{\psi\psi}$ and $ \Sigma_{K}^{G\psi} $ and provide an upper bound on $ \norm{E_{K}} $. 

\begin{Lemma} \label{lemma: bound Sigma}
Assume Assumptions \ref{ass:feature}, \ref{ass: cost function} and \ref{ass:initial_distr} hold.
    For $ K, K' \in \Lambda(\delta) $, there exist constants $ C_{E}, C_{1} $ and $ C_{2} $ defined as in Eq.~\eqref{eq:CE} such that
    \begin{align}
    \norm{E_{K}} \leq C_{E}\norm{K - K^{\rm lin}}, \quad  \norm{\Sigma_{K'}^{\psi \psi} - \Sigma_{K}^{\psi\psi}}_{F}  \leq C_{1}\norm{K' - K}_{F}, \quad \text{and} \nonumber \\ \norm{\E\bracket{\sum_{t = 0}^{\infty}\nabla G_{K}(x_{t+1})(x_{t})^{\top}} - \E\bracket{\sum_{t = 0}^{\infty}\nabla G_{K}(x_{t+1}')(x_{t}')^{\top}}}_{F}  \leq C_{2}\norm{K' - K}_{F}. \label{eq:C1,C2,CE}
    \end{align}
\end{Lemma}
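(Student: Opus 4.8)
The plan is to establish the three inequalities separately. The second and third are essentially ``chain rule $+$ geometric series'' arguments built on Lemmas~\ref{lemma:trajectory_stability}, \ref{lemma:lip of G} and \ref{lemma:directional derivative of state}, while the bound on $\norm{E_K}$ needs one extra ingredient, namely the Lipschitz dependence of the Lyapunov solution on the gain.

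For $\norm{E_K}$ I would start from the identity $E_{K^{\rm lin}} = RK^{\rm lin} - B^\top P_{K_1^{\rm lin}}(H - BK^{\rm lin}) = 0$, which follows from \eqref{eq:linear_initialization}--\eqref{eq:nonlinear_initialization} once one observes that the ARE solution $P$ in \eqref{eq:ARE} equals $P_{K_1^{\rm lin}}$. Telescoping then gives $E_K = E_K - E_{K^{\rm lin}} = (R + B^\top P_{K_1}B)(K - K^{\rm lin}) - B^\top(P_{K_1} - P_{K_1^{\rm lin}})(H - BK^{\rm lin})$. The first term is at most $(1 + \Gamma^2 C_P)\norm{K - K^{\rm lin}}_F$ by Lemma~\ref{lemma: bound P_K}, and $\norm{H - BK^{\rm lin}} \leq c_1 + c_2$ since $K^{\rm lin} \in \Omega$ (using $c_1 \geq 2c_1^{\rm lin}$, $\rho_1 \geq (\rho_1^{\rm lin}+1)/2$, $c_2 \geq 2c_2^{\rm lin}$). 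It remains to show $\norm{P_{K_1} - P_{K_1^{\rm lin}}} \lesssim \norm{K_1 - K_1^{\rm lin}}$: subtracting the two Lyapunov equations \eqref{eq:P} for $K_1$ and $K_1^{\rm lin}$ and rearranging yields a single Lyapunov equation for $\Delta P := P_{K_1} - P_{K_1^{\rm lin}}$ with a first-order source term of size $O(c_1 C_P\Gamma\norm{K_1 - K_1^{\rm lin}})$ (using $\delta \leq 1$ to drop the quadratic remainder), which inverts as $\Delta P = \sum_{t\geq 0}((A-BK_1)^\top)^t(\text{source})(A-BK_1)^t$, an operator of norm at most $c_1^2/(1-\rho_1^2)$ because $K_1 \in \Omega$. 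Collecting the factors gives $\norm{E_K} \leq C_E\norm{K - K^{\rm lin}}$ after bookkeeping of constants; this mirrors the Lyapunov-solution perturbation estimates in \cite{Qu2020CombiningMA,fazel2018global}.

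For $\norm{\Sigma_{K'}^{\psi\psi} - \Sigma_K^{\psi\psi}}_F$ and the $\nabla G_K$-covariance difference I would couple trajectories: fix $x_0$ and let $\{x_t\}$, $\{x_t'\}$ be the trajectories started at $x_0$ under $K$ and $K'$. Since $\Lambda(\delta)$ is convex and contains $K,K'$, integrating Lemma~\ref{lemma:directional derivative of state} along the segment $\{K + s(K'-K)\}_{s\in[0,1]} \subset \Lambda(\delta) \subset \Omega$ yields $\norm{x_t - x_t'} \leq \frac{\sqrt{2}\, c^2\Gamma}{1-\rho}\rho^t\norm{x_0}\norm{K'-K}_F$, while Lemma~\ref{lemma:trajectory_stability} gives $\norm{x_t},\norm{x_t'} \leq c\rho^t\norm{x_0} \leq cD_0$. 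For $\Sigma^{\psi\psi}$, write $\psi(x_t)\psi(x_t)^\top - \psi(x_t')\psi(x_t')^\top = (\psi(x_t)-\psi(x_t'))\psi(x_t)^\top + \psi(x_t')(\psi(x_t)-\psi(x_t'))^\top$, use the $\ell_\psi$-Lipschitzness of $\psi$ with $\psi(0)=0$, sum $\sum_t\rho^{2t}$, and take expectation over $x_0$ (with $\norm{x_0}\leq D_0$) to obtain $C_1\norm{K'-K}_F$ (absorbing $\ell_\psi \leq \sqrt{2}$ into the constant). For the $\nabla G_K$ term, decompose $\nabla G_K(x_{t+1})\psi(x_t)^\top - \nabla G_K(x_{t+1}')\psi(x_t')^\top$ analogously; since every iterate stays in the ball of radius $(c_1+c_2)cD_0$, Lemma~\ref{lemma:lip of G} gives $\norm{\nabla G_K(x_{t+1}) - \nabla G_K(x_{t+1}')} \leq L\norm{x_{t+1}-x_{t+1}'}$, and $\nabla G_K(0)=0$ gives $\norm{\nabla G_K(x_{t+1}')}\leq L\norm{x_{t+1}'}$; inserting the trajectory bounds and summing over $t$ yields $C_2\norm{K'-K}_F$ with $C_2 = LC_1/2$.

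The main obstacle is the $\norm{E_K}$ estimate: the other two bounds are routine once the trajectory-sensitivity bound is in hand, but $\norm{E_K}$ requires the Lipschitz continuity of the Lyapunov map $K_1 \mapsto P_{K_1}$, which is not isolated as a separate lemma in the excerpt, together with careful verification that $K^{\rm lin}$ and every policy on the relevant line segments lies in $\Omega$ (so the exponential-stability constants $c_1,\rho_1$, equivalently $c,\rho$, may be used uniformly) and that $\nabla G_K$ is evaluated only where the hypotheses of Lemma~\ref{lemma:lip of G} hold.
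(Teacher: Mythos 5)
Your proposal is correct and follows essentially the same route as the paper: the $\norm{E_K}$ bound telescopes from $E_{K^{\rm lin}}=0$ and reduces to a Lipschitz bound on the Lyapunov map $K_1 \mapsto P_{K_1}$ (which the paper simply imports as Lemma 12 of \cite{Qu2020CombiningMA}, whereas you rederive it from the subtracted Lyapunov equations), and the two covariance bounds combine Lemma \ref{lemma:directional derivative of state} for trajectory sensitivity, Lemma \ref{lemma:lip of G} together with $\nabla G_K(0)=0$, and a geometric-series summation exactly as in the paper. The only cosmetic difference is that for $\Sigma_K^{\psi\psi}$ the paper differentiates the covariance in $K$ and integrates along the segment $K+s(K'-K)$, while you integrate the state sensitivity first and then telescope the rank-one products; the two are equivalent and produce the same constant $C_1$.
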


\begin{proof}[Proof of Lemma \ref{lemma: bound Sigma}]
    First, we prove $ \norm{\Sigma_{K'}^{\psi \psi} - \Sigma_{K}^{\psi\psi}}_{F} \leq C_{1}\norm{K' - K}_{F} $.
    Note that the directional derivative of $ \Sigma_{K}^{\psi\psi} $ w.r.t. $ K $ along the direction $ \Delta $ is
    \begin{align*}
        (\Sigma_{K}^{\psi\psi})'[\Delta] = \E\bracket{\sum_{t = 0}^{\infty}\parenthesis{\frac{\partial \psi(x_{t})}{\partial x_{t}}x_{t}'[\Delta]\psi(x_{t})^{\top} + \psi(x_{t})\frac{\partial \psi(x_{t})}{\partial x_{t}}x_{t}'[\Delta]^{\top}}}.
    \end{align*}
    Taking the norm of both sides, since $ \psi $ is $ \ell_{\psi} $-Lipschitz, we obtain
    \begin{align}
    \norm{(\Sigma_{K}^{\psi\psi})'[\Delta]}_{F} & \leq \E\bracket{\sum_{t = 0}^{\infty}2\ell_{\psi}\norm{x_{t}'[\Delta]}\norm{\psi(x_{t})}} \nonumber\\ & \leq \E\bracket{\sum_{t = 0}^{\infty}2\ell_{\psi}\frac{\sqrt{2}c^{2}\Gamma}{1 - \rho}\norm{x_{0}}\norm{\Delta}\ell_{\psi} c\rho^{t}\norm{x_{0}}} \label{eq:sigma+5.9}\\ &  \leq \frac{4c^{3}\Gamma D_{0}^{2}}{( 1- \rho)^{2}}\norm{\Delta}_{F}. \label{eq:Sigma-Delta}
    \end{align}
    Here, Eq.~\eqref{eq:sigma+5.9} is a consequence of Lemma \ref{lemma:trajectory_stability} and \ref{lemma:directional derivative of state}. Also, we have employed the facts that $ \ell_{\psi} \leq \sqrt{2}  $ and $ \E\bracket{\norm{x_{0}}} \leq D_{0} $ to derive Eq.~\eqref{eq:Sigma-Delta}. 
    Now, set $ g(t) = \Sigma_{K + t(K' - K)}^{\psi\psi} $. Then, its derivative in $t$ is $  g'(t) = \parenthesis{\Sigma_{K + t(K' - K)}^{\psi\psi}}'[K' - K] $.
     Since Eq.~\eqref{eq:Sigma-Delta} holds for arbitrary $ K $, we deduce
    \begin{align*}
    \norm{\Sigma_{K'}^{\psi\psi} - \Sigma_{K}^{\psi\psi}}_{F}  = \norm{\int_{0}^{1}g'(t){\rm d}t}_{F}\leq \int_{0}^{1}\norm{g'(t)}_{F}{\rm d}t \leq \frac{4c^{3}\Gamma D_{0}^{2}}{( 1- \rho)^{2}}\norm{K' - K}_{F}.
    \end{align*}

    Next, we show $  \norm{\E\bracket{\sum_{t = 0}^{\infty}\nabla G_{K}(x_{t+1})(x_{t})^{\top}} - \E\bracket{\sum_{t = 0}^{\infty}\nabla G_{K}(x_{t+1}')(x_{t}')^{\top}}}_{F} \leq C_{2}\norm{K' - K}_{F} $. Since $ \norm{x_{t}} \leq c\norm{x_{0}} \leq cD_{0} $, we apply Lemma \ref{lemma:lip of G} to obtain 
    \begin{align*}
    & \norm{\nabla G_{K}(x_{t+1})(x_{t})^{\top} - \nabla G_{K}(x_{t+1}')(x_{t}')^{\top}}_{F} \\ & \leq \norm{\nabla G_{K}(x_{t+1}) - \nabla G_{K}(x_{t+1}')}\norm{x_{t}} + \norm{\nabla G_{K}(x_{t+1}')}\norm{x_{t} - x_{t}'}   \\ & \leq L\norm{x_{t+1} - x_{t+1}'}\norm {x_{t}} + L\norm{x_{t+1}'}\norm{x_{t} - x_{t}'} \tag{using Lemma \ref{lemma:lip of G}}
    \end{align*}
    Moreover, as a consequence of Lemma \ref{lemma:directional derivative of state}, we have $ \norm{x_{t} - x_{t'}} \leq \frac{\sqrt{2}c^{2}\Gamma}{1 - \rho}\rho^{t}\norm{x_{0}}\norm{K' - K} $ for any $ t \geq 1 $. Thus, together with the fact $ \norm{x_{t}} \leq c\rho^{t}\norm{x_{0}} $, we have
    \begin{align}
        & \norm{\nabla G_{K}(x_{t+1})(x_{t})^{\top} - \nabla G_{K}(x_{t+1}')(x_{t}')^{\top}}_{F} \nonumber \\ & \leq L \frac{\sqrt{2}c^{2}\Gamma}{1 - \rho}\rho^{t+1}c\rho^{t}\norm{x_{0}}^{2}\norm{K' - K} + L\frac{\sqrt{2}c^{2}\Gamma}{1 - \rho}\rho^{t}c\rho^{t+1}\norm{x_{0}}\norm{K' - K}\nonumber \\ & \leq L\frac{2\sqrt{2}c^{3}\Gamma D_{0}^{2}}{1 - \rho}\rho^{2t+1}\norm{K' - K}. \label{eq:diff nabla G}
    \end{align}
    From this, we conclude that
    \begin{align*}
    & \norm{\E\bracket{\sum_{t = 0}^{\infty}\nabla G_{K}(x_{t+1})(x_{t})^{\top}}  - \E\bracket{\sum_{t = 0}^{\infty}\nabla G_{K}(x_{t+1}')(x_{t}')^{\top}}}_{F}  \\ & \leq \sum_{t = 0}^{\infty}\E\bracket{\norm{\nabla G_{K}(x_{t+1})(x_{t})^{\top} - \nabla G_{K}(x_{t+1}')(x_{t}')^{\top}}_{F}}  \\ & \leq \sum_{t = 0}^{\infty}L \frac{2\sqrt{2}c^{3}\Gamma D_{0}^{2}}{1 - \rho}\rho^{2t+1}\norm{K' - K} \tag{using Eq.~\eqref{eq:diff nabla G}} \\ & \leq \frac{\rho LC_{1}}{(1 + \rho)\sqrt{2}}\norm{K' - K} \\ & \leq \frac{LC_{1}}{2}\norm{K' - K}_{F} \tag{using $ 1/2 \leq \rho \leq 1 $}.
    \end{align*}

    Finally, we establish the bound on $ \norm{E_{K}} $. Recall that $ P_{K_{1}^{\rm lin}}  $ satisfies
    \begin{align*}
        (R + B^{\top}P_{K_{1}^{\rm lin}}B)K_{1}^{\rm lin} = B^{\top}P_{K_{1}^{\rm lin}}A, \quad \text{and} \quad
        (R + B^{\top}P_{K_{1}^{\rm lin}}B)K_{2}^{\rm lin}  = B^{\top}P_{K_{1}^{\rm lin}}C.
    \end{align*}
    From this, we observe that $ E_{K^{\rm lin}} = RK^{\rm lin} - B^{\top}P_{K_{1}^{\rm lin}}(H - BK^{\rm lin}) = 0 $. 
    It follows from the definition of $ E_{K} $ that
    \begin{align*}
    \norm{E_{K}} & = \norm{E_{K} - E_{K^{\rm lin}}} \\ & \leq \norm{R(K - K^{\rm lin})} + \norm{B^{\top}(P_{K_{1}} - P_{K_{1}^{\rm lin}})(H - BK)} + \norm{B^{\top}P_{K_{1}^{\rm lin}}B(K - K^{\rm lin})} 
    \end{align*}
Recall that $ \norm{P_{K_{1}^{\rm lin}}} \leq C_{P} $ by Lemma \ref{lemma: bound P_K}. Also, observe that $ \norm{H - BK} \leq \norm{A - BK_{1}} + \norm{C - BK_{2}} $. Consequently, we have
    \begin{align*}
        \norm{E_{K}}&  \leq 
      (1 + \Gamma^{2}C_{P})\norm{K - K^{\rm lin}} + \Gamma (c_{1} + c_{2})\norm{P_{K_{1}} - P_{K_{1}^{\rm lin}}} \tag{$ \norm{R} \leq 1 $ and $ \norm{B} \leq \Gamma $}\\ & \leq  (1 + \Gamma^{2}C_{P})\norm{K - K^{\rm lin}} + \Gamma (c_{1} + c_{2})\frac{2\Gamma^{3}c^{3}}{(1 - \rho)^{2}}\norm{K - K^{\rm lin}} \tag{using \cite[ Lemma 12]{Qu2020CombiningMA}}\\ & \leq 3(c_{1} + c_{2})\frac{\Gamma^{4}c^{3}}{(1 - \rho)^{2}}\norm{K - K^{\rm lin}}. \tag{substituting $ C_{P} $ from Lemma \ref{lemma: bound P_K}}
    \end{align*}
    
\end{proof}

The last result on $ \Sigma_{K}^{\psi\psi} $ is useful in proving the $ h $-smoothness of the cost function $ \mcal{C}(K) $. Recall $ \Sigma_{K}^{\psi\psi} = \E\bracket{\sum_{t = 0}^{\infty}\psi(x_{t})\psi(x_{t})^{\top}} $.
\begin{Lemma}\label{lemma: bound Sigma_psi}
    Under the same conditions as in Lemma \ref{lemma:trajectory_stability}, we have
    \begin{align*}
    \norm{\Sigma_{K}^{\psi\psi}} \leq \frac{2c^{2}D_{0}^{2}}{1 - \rho}.
    \end{align*}
\end{Lemma}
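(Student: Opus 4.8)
The plan is to bound the operator norm of $\Sigma_K^{\psi\psi}=\E\bracket{\sum_{t=0}^{\infty}\psi(x_t)\psi(x_t)^{\top}}$ termwise, using that each summand is a rank-one PSD matrix whose norm is $\norm{\psi(x_t)}^2$, and then invoke the exponential decay of the trajectory from Lemma \ref{lemma:trajectory_stability}(a). First I would record that since $\phi(0)=0$ by Assumption \ref{ass:feature}, we have $\psi(0)=0$, and since $\psi$ is $\ell_\psi$-Lipschitz with $\ell_\psi=\sqrt{1+\ell^2}\le\sqrt2$ (recall $\ell\le1$), it follows that $\norm{\psi(x_t)}\le \ell_\psi\norm{x_t}\le\sqrt2\,\norm{x_t}$ for every $t$. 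Hence $\norm{\psi(x_t)\psi(x_t)^{\top}}=\norm{\psi(x_t)}^2\le 2\norm{x_t}^2$.

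Next I would apply Lemma \ref{lemma:trajectory_stability}(a), which (under the current hypotheses, which include $K\in\Omega$ and the smallness of $\ell$) gives $\norm{x_t}\le c\rho^t\norm{x_0}$ with $c=2c_1$ and $\rho=(\rho_1+1)/2$. Combining the two displays yields $\norm{\psi(x_t)\psi(x_t)^{\top}}\le 2c^2\rho^{2t}\norm{x_0}^2$. By the triangle inequality for the operator norm, monotone convergence (all terms are PSD, hence the partial sums are monotone), and Assumption \ref{ass:initial_distr} (which gives $\norm{x_0}\le D_0$ almost surely, so $\E\bracket{\norm{x_0}^2}\le D_0^2$), I would then write
\begin{align*}
\norm{\Sigma_K^{\psi\psi}}\;\le\;\E\bracket{\sum_{t=0}^{\infty}\norm{\psi(x_t)\psi(x_t)^{\top}}}\;\le\;\sum_{t=0}^{\infty}2c^2\rho^{2t}\,\E\bracket{\norm{x_0}^2}\;\le\;\frac{2c^2 D_0^2}{1-\rho^2}\;\le\;\frac{2c^2 D_0^2}{1-\rho},
\end{align*}
where the last inequality uses $1-\rho^2=(1-\rho)(1+\rho)\ge 1-\rho$ since $\rho\in(0,1)$. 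This is exactly the claimed bound.

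There is essentially no serious obstacle here: the only points requiring a line of justification are the interchange of expectation and the infinite sum (legitimate because every summand is a positive semidefinite matrix, so one may apply monotone convergence to the scalar quantities $\norm{\psi(x_t)}^2$), and the fact that the decay estimate of Lemma \ref{lemma:trajectory_stability} applies, for which one checks the hypotheses of that lemma are implied by the standing assumptions of the present lemma ("under the same conditions as in Lemma \ref{lemma:trajectory_stability}"). Everything else is a geometric series.
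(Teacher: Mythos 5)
Your proposal is correct and follows essentially the same route as the paper's proof: bound $\norm{\psi(x_t)}\le\ell_\psi\norm{x_t}\le\sqrt{2}\,\norm{x_t}$, invoke the decay $\norm{x_t}\le c\rho^t\norm{x_0}$ from Lemma \ref{lemma:trajectory_stability}, sum the geometric series to get $\frac{2c^2D_0^2}{1-\rho^2}$, and relax to $\frac{2c^2D_0^2}{1-\rho}$. The extra remarks on PSD summands and monotone convergence are fine but not needed beyond what the paper states.
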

\begin{proof}
    Note $ \norm{\psi(x)} \leq \ell_{\psi}\norm{x} $ by Lipschitz continuity. Also, by Lemma \ref{lemma:trajectory_stability}, $ \norm{x_{t}} \leq c\rho^{t}\norm{x_{0}} $. Consequently, we have
    \begin{align*}
    \norm{\Sigma_{K}^{\psi\psi}} & \leq \E\bracket{\sum_{t = 0}^{\infty}\norm{\psi(x_{t})}^{2}} \leq \ell_{\psi}^{2}\E\bracket{\sum_{t=0}^{\infty}\norm{x_{t}}^{2}} \leq \frac{\ell_{\psi}^{2}c^{2}}{1 - \rho^{2}}\E\bracket{\norm{x_{0}}^{2}}.
    \end{align*}
    Since $ \ell_{\psi} \leq \sqrt{2} $ and $ \norm{x_{0}} \leq D_{0} $, we conclude that
       $ \norm{\Sigma_{K}^{\psi\psi}} \leq \frac{2c^{2}}{1 - \rho^{2}}D_{0}^{2} \leq \frac{2c^{2}}{1 - \rho}D_{0}^{2}.$

\end{proof}

\subsection{Proof of Theorem \ref{thm: conv of algo}} \label{sec: conv of algo}

In this section, we characterize the gradient estimation, a key step in establishing the convergence rate, in the following lemma.

\begin{Lemma}\label{lemma: estimation C(K)}
    Let $ e_{grad} > 0 $ and $ \nu \in (0, 1) $ be given. Suppose $K \in \Lambda(2\delta/3)$. Under the same conditions as in Theorem \ref{thm: landscape}, when $ r \leq \min\curly{\frac{\delta}{3}, \frac{1}{3h}e_{\text{grad}}} $, $T \geq \frac{1}{1 - \rho_{1}}\log\frac{6\widehat{D}C_{\text{max}}}{e_{\text{grad}}r}$, and
    \begin{align*}
    J \geq \frac{\widehat{D}^{2}}{e_{\text{grad}}^{2}r^{2}}\log \frac{4\widehat{D}}{\nu}\max\curly{36\parenthesis{\mcal{C}(K^{*}) + 2h\delta^{2}}^{2}, 144C_{\max}^{2}},
    \end{align*}
    where $ \widehat{D} = p(n + d) $ and $ C_{\max} = \frac{24(1 + \Gamma)^{2}c_1^{2}D_{0}^{2}}{1 - \rho_1}  $, the following holds with probability at least $ 1 - \nu $,
    \begin{align*}
        \norm{\widehat{\nabla \mcal{C}(K)} - \nabla \mcal{C}(K)}_{F} \leq e_{\text{grad}}.
    \end{align*}

\end{Lemma}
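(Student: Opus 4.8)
The plan is to split the error into three pieces — a \emph{smoothing bias}, a \emph{truncation bias}, and a \emph{statistical fluctuation} — and to bound each one separately:
$$\widehat{\nabla \mcal{C}(K)} - \nabla \mcal{C}(K) \;=\; \underbrace{\bigl(\widehat{\nabla\mcal{C}(K)} - \E[\widehat{\nabla\mcal{C}(K)}]\bigr)}_{\text{statistical}} \;+\; \underbrace{\bigl(\E[\widehat{\nabla\mcal{C}(K)}] - \nabla \mcal{C}_{r}(K)\bigr)}_{\text{truncation}} \;+\; \underbrace{\bigl(\nabla \mcal{C}_{r}(K) - \nabla \mcal{C}(K)\bigr)}_{\text{smoothing}},$$
where $\mcal{C}_{r}(K) = \E_{V}[\mcal{C}(K+V)]$ with $V$ uniform on the Frobenius ball of radius $r$. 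A preliminary observation used everywhere: since $K\in\Lambda(2\delta/3)$ and $\norm{U^{j}}_{F}=r\le\delta/3$, every perturbed policy $\widehat{K}^{j}=K+U^{j}$ lies in $\Lambda(\delta)\subset\Omega$ by Theorem~\ref{thm: landscape}(a), so Lemma~\ref{lemma:trajectory_stability} applies along each sampled trajectory, the infinite-horizon cost is finite, and $\mcal{C}$ is $h$-smooth on the convex set $\Lambda(\delta)$.

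\emph{Smoothing bias.} By the standard spherical-smoothing identity, $\nabla \mcal{C}_{r}(K) = \E_{U}[\tfrac{\widehat{D}}{r^{2}}\mcal{C}(K+U)U]$ for $U$ uniform on the Frobenius sphere of radius $r$ in $\R^{\widehat D}$, $\widehat{D}=p(n+d)$. Since $\mcal{C}$ is $h$-smooth on $\Lambda(\delta)\supseteq\curly{K+V:\norm{V}_{F}\le r}$, we have $\nabla\mcal{C}_{r}(K)=\E_{V}[\nabla\mcal{C}(K+V)]$, hence $\norm{\nabla\mcal{C}_{r}(K)-\nabla\mcal{C}(K)}_{F}\le \E_{V}\norm{\nabla\mcal{C}(K+V)-\nabla\mcal{C}(K)}_{F}\le hr\le e_{\mathrm{grad}}/3$, using $r\le e_{\mathrm{grad}}/(3h)$.

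\emph{Truncation bias.} Writing $c_{t}^{(j)}$ for the $t$-th running cost on the $j$-th trajectory, Lemma~\ref{lemma:trajectory_stability} gives $\norm{x_{t}^{(j)}}\le c\rho^{t}\norm{x_{0}^{(j)}}\le c\rho^{t}D_{0}$ a.s. with $c=2c_{1}$, $\rho=(\rho_{1}+1)/2$, and $c_{t}^{(j)}\le(1+\norm{\widehat{K}^{j}}^{2})\norm{x_{t}^{(j)}}^{2}\lesssim(1+\Gamma)^{2}c^{2}\rho^{2t}D_{0}^{2}$; summing the tail $t>T$ and absorbing constants shows $\lvert\E_{x_{0}}[\sum_{t>T}c_{t}]\rvert\le C_{\max}\rho^{2T}$ uniformly over policies in $\Lambda(\delta)$. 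Since $\norm{U^{j}}_{F}=r$ and $\E[\widehat{\nabla\mcal{C}(K)}]-\nabla\mcal{C}_{r}(K)=\tfrac{\widehat D}{r^{2}}\E_{U}[(f_{T}-\mcal{C})(K+U)\,U]$ where $f_{T}(K')=\E_{x_{0}}[\sum_{t\le T}c_{t}]$, we get $\norm{\E[\widehat{\nabla\mcal{C}(K)}]-\nabla\mcal{C}_{r}(K)}_{F}\le \tfrac{\widehat D}{r}C_{\max}\rho^{2T}$. Because $\log\tfrac{\rho_{1}+1}{2}=\log(1-\tfrac{1-\rho_{1}}{2})\le-\tfrac{1-\rho_{1}}{2}$, one has $\rho^{2T}\le e^{-T(1-\rho_{1})}$, so the choice $T\ge\tfrac{1}{1-\rho_{1}}\log\tfrac{6\widehat D C_{\max}}{e_{\mathrm{grad}}r}$ makes this term at most $e_{\mathrm{grad}}/6$.

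\emph{Statistical fluctuation and conclusion.} Set $W_{j}=\tfrac{\widehat D}{r^{2}}\widehat{\mcal{C}}_{j}U^{j}$ and decompose $\widehat{\mcal{C}}_{j}=a_{j}+\eta_{j}$ with $a_{j}=\E[\widehat{\mcal{C}}_{j}\mid U^{j}]$ and $\E[\eta_{j}\mid U^{j}]=0$; then $\widehat{\nabla\mcal{C}(K)}-\E[\widehat{\nabla\mcal{C}(K)}]=\tfrac1J\sum_{j}(W_{j}'-\E W_{j}')+\tfrac1J\sum_{j}W_{j}''$ with $W_{j}'=\tfrac{\widehat D}{r^{2}}a_{j}U^{j}$ and $W_{j}''=\tfrac{\widehat D}{r^{2}}\eta_{j}U^{j}$. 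Using $a_{j}\le\mcal{C}(\widehat{K}^{j})\le\mcal{C}(K^{*})+\tfrac h2\norm{\widehat{K}^{j}-K^{*}}_{F}^{2}\le\mcal{C}(K^{*})+2h\delta^{2}$ (valid since $K^{*}\in\Lambda(\delta/3)$ by Theorem~\ref{thm: landscape}(b), $\widehat{K}^{j}\in\Lambda(\delta)$, and $\nabla\mcal{C}(K^{*})=0$), each $W_{j}'$ satisfies $\norm{W_{j}'}_{F}\le\tfrac{\widehat D}{r}(\mcal{C}(K^{*})+2h\delta^{2})$; and from $0\le\widehat{\mcal{C}}_{j},a_{j}\le C_{\max}$ each $W_{j}''$ satisfies $\norm{W_{j}''}_{F}\le\tfrac{2\widehat D}{r}C_{\max}$. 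Applying a vector Bernstein-type bound to each i.i.d. zero-mean sum (coordinatewise Hoeffding plus a union bound over the $\widehat D$ coordinates, splitting the failure probability in half), the stated lower bound on $J$ with $\max\curly{36(\mcal{C}(K^{*})+2h\delta^{2})^{2},144C_{\max}^{2}}$ forces each sum below $e_{\mathrm{grad}}/6$ with probability $1-\nu$, so the statistical term is $\le e_{\mathrm{grad}}/3$. Combining the three bounds by the triangle inequality yields $\norm{\widehat{\nabla\mcal{C}(K)}-\nabla\mcal{C}(K)}_{F}\le e_{\mathrm{grad}}$ with probability at least $1-\nu$. The main obstacle is the statistical step: one must pick the right two variance proxies — the sharper in-expectation bound $\mcal{C}(K^{*})+2h\delta^{2}$ for the mean part and the crude almost-sure bound $C_{\max}$ for the fluctuation part — and track constants through a dimension-dependent concentration inequality so the sample complexity matches the stated $J$; the smoothing and truncation parts are deterministic and reduce to, respectively, the smoothing identity with $h$-smoothness and the exponential decay in Lemma~\ref{lemma:trajectory_stability}.
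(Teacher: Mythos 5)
Your proof is correct and follows essentially the same route as the paper: the spherical smoothing identity, a decomposition into a smoothing bias ($\le hr$), a deterministic truncation bias ($\le \widehat{D}C_{\max}\rho^{2T}/r$), and a two-part statistical fluctuation controlled by a Bernstein-type concentration bound with exactly the two variance proxies $\mcal{C}(K^{*})+2h\delta^{2}$ and $C_{\max}$. The only difference is organizational --- you group the four error sources by bias versus variance, whereas the paper compares successive idealizations of the estimator ($e_{1},e_{2},e_{4},e_{5}$) --- and the resulting bounds and parameter requirements coincide up to constant bookkeeping.
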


\begin{proof}
    Let $ \text{Ball}(r) $ be the uniform distribution over the ball with radius $ r $ (in Frobenius norm) centered at the origin and $ \text{Sphere}(r) $ be the uniform distribution over the sphere with radius $ r $. 
    Denote $ \mcal{C}_{r}(K) = \E_{U \sim \text{Ball}(r)}\bracket{\mcal{C}(K + U)} $. By \cite[Lemma 1]{10.5555/1070432.1070486}, we have
    \begin{align*}
    \nabla \mcal{C}_{r}(K) = \frac{\widehat{D}}{r^{2}}\E_{U \sim \text{Sphere}(r)}\bracket{\mcal{C}(K + U)U}.
    \end{align*} 
    Define $ \mcal{C}_{j} = \mcal{C}(K + U^{j}) $ with $ U^{j} \sim \text{Sphere}(r)$. Recall $ \widehat{\nabla \mcal{C}(K)} = \frac{1}{J}\sum_{j = 1}^{J}\frac{\widehat{D}}{r^{2}}\widehat{\mcal{C}}_{j}U^{j} $ defined in Algorithm \ref{alg:policy gradient estimation}. We can decompose the gradient estimation error into three terms,
    \begin{align*}
    & \norm{\widehat{\nabla \mcal{C}(K)} - \mcal{C}(K)}_{F} \\ & \leq \underbrace{\norm{\nabla \mcal{C}_{r}(K) - \nabla \mcal{C}(K)}_{F}}_{\coloneqq e_{1}} + \underbrace{\norm{\frac{1}{J}\sum_{j = 1}^{J}\frac{\widehat{D}}{r^{2}}\mcal{C}_{j}U^{j} - \nabla \mcal{C}_{r}(K)}_{F}}_{\coloneqq e_{2}} + \underbrace{\norm{\frac{1}{J}\sum_{j = 1}^{J}\frac{\widehat{D}}{r^{2}}\widehat{\mcal{C}}_{j}U^{j} - \frac{1}{J}\sum_{j = 1}^{J}\frac{\widehat{D}}{r^{2}}\mcal{C}_{j}U^{j}}_{F}}_{\coloneqq e_{3}}. 
    \end{align*}

    In the following, we will show that $ e_{1} \leq e_{\rm grad}/3 $ almost surely, $ e_{2} \leq e_{\rm grad}/3 $ with probability at least $ 1 -\nu/2 $, and $ e_{3} \leq e_{\rm grad}/3 $ with probability at least $ 1 -\nu/2 $.
    Firstly, since $ r \leq \frac{\delta}{3} $, we have $ K + U \in \Lambda(\delta) $, in which the cost function $ \mcal{C}(\cdot) $ is $ h $-smooth. By the definition of $ \nabla \mcal{C}_{r}(K) $, we can deduce with probability one,
    \begin{align}
    e_{1} \leq \E_{U \sim \text{Ball}(r)}\bracket{\norm{\nabla \mcal{C}(K + U) - \nabla \mcal{C}(K)}_{F}} \leq h\E_{U \sim \text{Ball}(r)}\bracket{U} \leq hr \leq \frac{e_{\text{grad}}}{3}, \label{eq:e_1}
    \end{align}
    where we have used that $ r \leq \frac{1}{3h}e_{\text{grad}} $ to reach the last inequality. 

    Next, notice that $ \curly{\frac{\widehat{D}}{r^{2}}\mcal{C}_{j}U^{j}}_{j = 1}^{J} $ are i.i.d. copies with expectation $ \nabla \mcal{C}_{r}(K) $. Since $\norm{U^{j}} \leq r $, the $ h $-smoothness of $ \mcal{C}(\cdot) $ implies with probability one,
    \begin{align*}
    \norm{\frac{\widehat{D}}{r^{2}}\mcal{C}_{j}U^{j}}_{F} & \leq \frac{\widehat{D}}{r}\mcal{C}_{j} \leq \frac{\widehat{D}}{r}\parenthesis{\mcal{C}(K^{*}) + \frac{h}{2}\norm{K + U^{j} - K^{*}}_{F}^{2}}.
    \end{align*} 
    Since $ \norm{K + U^{j}}, \norm{K^{*}} \leq\delta $, we conclude that $ \norm{\frac{\widehat{D}}{r^{2}}\mcal{C}_{j}U^{j}}_{F} \leq  \frac{\widehat{D}}{r}\parenthesis{\mcal{C}(K^{*}) + 2h\delta^{2}} $ almost surely. Furthermore, by the matrix Bernstein inequality \citep[Theorem 12]{gross2011recovering}, we have 
    \begin{align}\label{eq:e_2}
    \P\parenthesis{e_{2} \leq \frac{e_{\text{grad}}}{3} }\geq 1 - 2\widehat{D}\exp\parenthesis{-\frac{(e_{\text{grad}} J/ 3)^{2}}{4J\parenthesis{(\widehat{D}/r)(\mcal{C}(K^{*}) + 2h\delta^{2})}^{2}}} \geq 1 - \nu/2,
    \end{align}
    where we have used the fact that $ J \geq \frac{36 \widehat{D}^{2}}{e_{\text{grad}}^{2}r^{2}}\parenthesis{\mcal{C}(K^{*}) + 2h\delta^{2}}^{2}\log \frac{4\widehat{D}}{\nu} $ to derive the second inequality.

    Finally, to upper bound $e_3$, we further decompose it into two parts. Defining $ \tilde{\mcal{C}}_{j} = \E\bracket{\sum_{t = 0}^{T}\parenthesis{x_{t}^{\top}Qx_{t} + u_{t}^{\top}Ru_{t}}} $ with $ u_{t} = -(K + U^{j})\psi(x_{t}) $, we have the following inequality
    \begin{align*}
    e_{3} \leq \underbrace{\norm{\frac{1}{J}\sum_{j = 1}^{J}\frac{\widehat{D}}{r^{2}}\widehat{\mcal{C}}_{j}U^{j} - \frac{1}{J}\sum_{j = 1}^{J}\frac{\widehat{D}}{r^{2}}\tilde{\mcal{C}}_{j}U^{j}}_{F}}_{\coloneqq e_{4}} + \underbrace{\norm{\frac{1}{J}\sum_{j = 1}^{J}\frac{\widehat{D}}{r^{2}}\tilde{\mcal{C}}_{j}U^{j} - \frac{1}{J}\sum_{j = 1}^{J}\frac{\widehat{D}}{r^{2}}\mcal{C}_{j}U^{j}}_{F}}_{\coloneqq e_{5}}.
    \end{align*}
    To bound $ e_{4} $, note that with probability one,
    \begin{align}
    \abs{\widehat{\mcal{C}}_{j}} & = \abs{\sum_{t = 0}^{T}\bracket{x_{t}^{\top}Qx_{t} + u_{t}^{\top}Ru_{t}}} \nonumber\\ &  \leq \sum_{t = 0}^{\infty}\norm{x_{t}}^{2}\norm{Q} + \norm{\psi(x_{t})}^{2}\norm{(K + U^{j})^{\top}R(K + U^{j})} \nonumber\\ & \leq \parenthesis{\norm{Q} + \ell_{\psi}^{2}\norm{(K + U^{j})^{\top}R(K + U^{j})}}\sum_{t = 0}^{\infty}\norm{x_{t}}^{2} \label{eq:bound hat_C_j},
    \end{align}
    where we have used the $ \ell_{\psi} $-Lipschitz property of $ \psi $. 
    Since $ K + U^{j} \in \Lambda(\delta) $, we have $ \norm{K + U^{j}} \leq 1 + \Gamma $. Also, by Lemma \ref{lemma:trajectory_stability} and Assumption \ref{ass: initialization}, we have $ \norm{x_{t}} \leq c\rho^{t}\norm{x_{0}} \leq c\rho^{t}D_{0} $ almost surely. Consequently, by using the facts $ \norm{Q} \leq 1 $ and $ \ell_{\psi} \leq \sqrt{2} $, Eq.~\eqref{eq:bound hat_C_j} becomes
    \begin{align}
        \abs{\widehat{\mcal{C}}_{j}} \leq (1 + 2 (1 + \Gamma)^{2})\sum_{t = 0}^{\infty}c^{2}\rho^{2t}D_{0}^{2} \leq \frac{3(1 + \Gamma)^{2}c^{2}D_{0}^{2}}{1 - \rho}  \eqqcolon C_{\text{max}}, \label{eq:C_max}
    \end{align}
    As such, since $ \E\bracket{\left. \widehat{\mcal{C}}_{j}U^{j} - \tilde{\mcal{C}}_{j}U^{j} \right\rvert U^{j}} = 0 $,   by the matrix Bernstein inequality, it holds
    \begin{align}\label{eq:e_4}
    \P\parenthesis{e_{4} \leq \frac{e_{\text{grad}}}{6}} = \E\bracket{\P\parenthesis{\left. e_{4} \leq \frac{e_{\text{grad}}}{6}\right\vert \curly{U^{j}}_{j = 1}^{J}}} \geq 1 - 2\widehat{D}\exp\parenthesis{- \frac{(e_{\text{grad}}J/6)^{2}}{4J C_{\max}^{2}}} \geq 1 - \frac{\nu}{2},
    \end{align}
    where we have used the fact that $ J \geq  \frac{144\widehat{D}^{2}C_{\text{max}}^{2}}{e_{\text{grad}}^{2}r^{2}}\log\frac{4\widehat{D}}{\nu}$ in the ultimate inequality. Moreover, since $ \psi(\cdot) $ is $ \ell_{\psi} $-Lipschitz and $\norm{x_{t}} \leq c\rho^{t}\norm{x_{0}} $, we notice that
    \begin{align*}
    \abs{\tilde{\mcal{C}}_{j} - \mcal{C}_{j}} & = \abs{\sum_{t = T+1}^{\infty}\E\bracket{x_{t}^{\top}Qx_{t} + u_{t}^{\top}Ru_{t}}} \\ & \leq \parenthesis{\norm{Q} + \ell_{\psi}^{2}\norm{(K + U^{j})^{\top}R(K + U^{j})}}\sum_{t = T+1}^{\infty}\E\bracket{\norm{x_{t}}^{2}} \\ & \leq \parenthesis{\norm{Q} + \ell_{\psi}^{2}\norm{(K + U^{j})^{\top}R(K + U^{j})}}\rho^{2(T+1)}\sum_{t = 0}^{\infty}c^{2}\rho^{2t}D_{0}^{2} \\ & \leq (1 + 2 (1 + \Gamma)^{2})\rho^{2(T+1)}\sum_{t = 0}^{\infty}c^{2}\rho^{2t}D_{0}^{2}\\ & \leq C_{\text{max}}\rho^{2(T+1)}, 
    \end{align*}
    where the final inequality follows from Eq.~\eqref{eq:C_max}. As such, almost surely we have
    \begin{align}
    e_{5} \leq  \frac{1}{J}\sum_{j = 1}^{J}\frac{\widehat{D}}{r^{2}}\norm{U^{j}}\norm{\title{\mcal{C}}_{j} - \mcal{C}_{j}} \leq \frac{\widehat{D}}{r}C_{\text{max}}\rho^{2(T + 1)} \leq \frac{1}{6}e_{\text{grad}}, \label{eq:e_5}
    \end{align}
    where we have used the fact that $ T \geq \frac{1}{1 - \rho_{1}}\log\frac{6\widehat{D}C_{\text{max}}}{e_{\text{grad}}r} $ to obtain the final inequality. Hence, combining Eq.~\eqref{eq:e_4} and \eqref{eq:e_5}, we conclude $ e_{3} \leq \frac{1}{3}e_{\text{grad}} $ with probability at least $ 1 - \frac{\nu}{2} $, which completes the proof of Lemma \ref{lemma: estimation C(K)} together with Eq.\eqref{eq:e_1} and \eqref{eq:e_2}.
\end{proof}

\section{Conclusions}\label{sec:conclusion}
We consider a nonlinear optimal control problem, characterize the local strong convexity of the cost function, and prove that the globally optimal solution is close to a carefully chosen initialization. Additionally, we design a zeroth-order policy gradient algorithm and establish a convergence result under the proposed policy initialization scheme for the nonlinear control problem. 
We hope these results would shed light on the  efficiency of policy gradient methods for nonlinear optimal control problems when the underlying models are unknown to the decision maker. {Future work includes investigating learning problems for highly nonlinear systems and extending the analysis of quadratic cost functions to more general cost functions.}

\bibliographystyle{plain}
\bibliography{arxiv}

\end{document}